\documentclass[final,12pt]{msml2021} % Anonymized submission
%\documentclass[12pt]{msml2020} % Include author names

% The following packages will be automatically loaded:
% amsmath, amssymb, natbib, graphicx, url, algorithm2e

\usepackage{tikz}
\usetikzlibrary{cd}
\usepackage{booktabs}       % professional-quality tables
\usepackage{nicefrac}       % compact symbols for 1/2, etc.
\usepackage{microtype}      % microtypography
\usetikzlibrary{matrix}
%\usepackage[notcite,notref]{showkeys}

% \usepackage{caption}
%\usepackage{subcaption}
% \captionsetup{compatibility=false}

\makeatletter
 \let\Ginclude@graphics\@org@Ginclude@graphics
\makeatother

\newcommand{\obar }{\overline}

\usepackage[verbose]{wrapfig}

\def\FFw{\edit{\FF_{\mathcal W}}}

\usepackage{aageneral}
\usepackage{aamath}
\usepackage{mhequ}
\usepackage{enumitem}

\mathtoolsset{showonlyrefs=true}

\usepackage{todonotes}

\newcommand{\hatt}{{ }}

\newcommand{\sigpsi}{\sigma_{\text{\scriptsize min}}^{D \psi^{-1}}}
\newcommand{\sigg}{\sigma_{\text{\scriptsize min}}^{g}}
\def\TD{(\mathcal T \mathcal D)}

\newcommand{\wii}{w^{(i)}}

\newcommand{\nnu}{{\tilde \nu}}

\DeclareMathOperator*{\argmin}{arg\,min}

\makeatletter
\newtheorem*{rep@theorem}{\rep@title}
\newcommand{\newreptheorem}[2]{%
\newenvironment{rep#1}[1]{%
 \def\rep@title{#2 \ref{##1}}%
 \begin{rep@theorem}}%
 {\end{rep@theorem}}}
\makeatother

\newreptheorem{theorem}{Theorem}
\newreptheorem{lemma}{Lemma}
\newreptheorem{proposition}{Proposition}

\newcommand{\edit}[1]{#1}

\usepackage{xr}

\setlength{\parskip}{2pt}%

\renewcommand{\citet}{\cite}

\title[TD learning with nonlinear function approximation]{{Temporal-difference learning with nonlinear function approximation: lazy training and mean field regimes}}
\usepackage{times}

% Use \Name{Author Name} to specify the name.
% If the surname contains spaces, enclose the surname
% in braces, e.g. \Name{John {Smith Jones}} similarly
% if the name has a "von" part, e.g \Name{Jane {de Winter}}.
% If the first letter in the forenames is a diacritic
% enclose the diacritic in braces, e.g. \Name{{\'E}louise Smith}

% Two authors with the same address
% \msmlauthor{\Name{Author Name1} \Email{abc@sample.com}\and
%  \Name{Author Name2} \Email{xyz@sample.com}\\
%  \addr Address}

% Three or more authors with the same address:
% \msmlauthor{\Name{Author Name1} \Email{an1@sample.com}\\
%  \Name{Author Name2} \Email{an2@sample.com}\\
%  \Name{Author Name3} \Email{an3@sample.com}\\
%  \addr Address}

% Authors with different addresses:
\msmlauthor{%
 \Name{Andrea Agazzi} \Email{agazzi@math.duke.edu}\\
 \addr   {Department of Mathematics\\
   Duke University\\
   Durham, NC 27708 }
 \AND
 \Name{Jianfeng Lu} \Email{jianfeng@math.duke.edu}\\
 \addr {Department of Mathematics\\
 Department of Physics and Department of Chemistry \\
 Duke University\\
 Durham, NC 27708}
}

\begin{document}

%\maketitle

\maketitle

\begin{abstract}
  We discuss the approximation of the value function for
  infinite-horizon discounted Markov Reward Processes (\abbr{MRP}) with
  \aa{wide neural networks} trained with the Temporal-Difference (\abbr{TD}) learning
  algorithm. We first consider this problem under a certain scaling of the
  approximating function, leading to a regime called \emph{lazy} training. \aa{In
  this regime, which arises
  naturally, implicit in the initialization of the neural network, the parameters of the model vary only slightly during
  the learning process, resulting in approximately linear behavior of the model.}
  %, a feature that has recently been observed in the training of neural networks,
   Both
  in the under- and over-parametrized frameworks, we prove exponential
  convergence to local, respectively global minimizers of the \abbr{TD} learning
  algorithm in the lazy training regime. We then compare the above scaling  with the alternative \emph{mean-field} scaling, where the approximately
  linear behavior of the model is lost. In this nonlinear, mean-field regime we prove
 that all fixed points of the dynamics in parameter space are global minimizers.
  We finally give examples of our convergence results in the case of models that diverge if trained
  with non-lazy \abbr{TD} learning.
% lazy training
%prove convergence
% give examples

 % Under a given scaling of the initial condition which is similar to the widely used LeCun initialization, large enough single layer neural networks used for value function approximation and trained through temporal difference perform at least as well as a randomly initialized network. In particular, they converge exponentially fast to a local minimizer.
\end{abstract}

\begin{keywords}%
  Reinforcement learning, neural networks, temporal-difference learning, mean-field, lazy training
\end{keywords}

\section{Introduction}

In recent years, deep reinforcement learning has pushed the boundaries of
Artificial Intelligence to an unprecedented level, achieving what was
expected to be possible only in a decade and outperforming human
intelligence in a number of highly complex tasks.  Paramount examples
of this potential have appeared over the past few years, with such
algorithms mastering games and tasks of increasing complexity, from
playing Atari to learning to walk and beating world grandmasters
at the game of Go
\citep{mnih:13,mnih:15,Silver:16,Silver:17,Silver:18,Haarnoja:18}.
Such
impressive success would be impossible without using neural networks
to approximate value functions and / or policy functions in
reinforcement learning algorithms. While neural networks, in
particular deep neural networks, provide a powerful and versatile tool
to approximate high dimensional functions
\citep{Cybenko:89,Hornik:91,Barron:93}, their intrinsic nonlinearity
might also lead to trouble in training, in particular in the context
of reinforcement learning. For example, it is well known that
nonlinear approximation of the value function might cause divergence in
classical temporal-difference learning due to instability
\citep{tsitsiklis}. While several algorithms have been proposed in the
literature to address the issue of non-convergence  {of nonlinear approximators (\eg Gradient Temporal Difference \citep{sutton09c}, GTD2, TD with gradient correction \citep{sutton09b} and many others
\citep{Riedmiller:2005,sutton09a,Maei:10,Szepesvari:10})},  practical deep
reinforcement learning often employs and prefers basic algorithms such
as temporal-difference \citep{Sutton:88} and Q-learning
\citep{Watkins:89} due to their simplicity. It is thus crucial to
understand the convergence of such algorithms and to bridge the gap
between theory and practice.

The theoretical understanding of deep reinforcement learning is of
course rather challenging, as even for supervised learning, which can
be viewed as a special case of reinforcement learning, deep neural
networks are still far from being understood despite the huge amount
of research focus in recent years. On the other hand, recent progress
has led to an emerging theory for neural network learning
%at least in the regime of over-parametrization,
including recent works on the
mean-field point of view of training dynamics
\citep{MeiMonNgu18,RotVE18,RotJelBruVE19,WeiLeeLiuMa18,ChizatBach18, stephan20}
and also on the linearized training dynamics in the over-parametrized
regime
\citep{hongler18,AZLiSong18,DuZhaiPocSin18,DuLeeLiWangZhai:18,ZouCaoZhouGu, AZLiLiang18,cb182,OymSol19,Montanari:19b,Lee19}.

The main goal of this work is to analyze the dynamics of a
prototypical reinforcement learning algorithm --
Temporal-Difference (\abbr{TD}) learning -- based on the recent progress in
deep supervised learning.
In particular, we first focus on the lazy
training regime,  {(also known as the neural tangent kernel regime when training wide neural networks)}, and analyze
%the behavior of
\abbr{TD} learning in both over-parametrized and
under-parametrized regimes with scaled value function approximations.
 {We then compare such lazy models with their mean-field counterpart in
terms of accuracy and convergence for \abbr{TD} learning.}

\paragraph{Related Works.}

This work is closely related to the  recent paper \citet{cb182},
addressing the problem of lazy training in the supervised learning
framework when models are trained through (stochastic) gradient
descent. In particular, that paper  introduced the scaling that we
consider in this work as an explanation, \eg of the small relative
displacement of the weights of over- and under-parametrized neural
networks for supervised learning. That work, however, leverages the gradient
structure of the underlying vector field, which we lack in the present framework
when the underlying policy is not reversible \citep{ollivier18}.
{The linear stability analysis is also considered in
the recent work \citet{Achiam19} based on the neural tangent kernel
\citep{hongler18} for off-policy deep Q-learning.}

The groundbreaking paper \citet{tsitsiklis} proves convergence of \abbr{TD}
learning for linear value function approximation, unifying the
manifold interpretations of this convergence phenomenon that preceded
it by highlighting that convergence of the algorithm is to be
understood in the norm induced by the invariant measure of the
underlying Markov process. Furthermore, the paper gives an
illuminating counterexample for the extension of the linear result to
the general, nonlinear setting. Our result shows that divergence does not occur in the lazy training regime.

{
%The non-convergence of TD learning is caused by the irreversibility of
%the underlying policy together with the nonlinearity of the function
%approximation \citep{ollivier18, tsitsiklis}.
The recent work
\citet{BrandfonbrenerBruna} discusses the stabilizing effect homogeneity in the approximating function class on \abbr{td} training dynamics. This work further shows convergence and non-divergence of \abbr{td} learning in
the over-parameterized, respectively the under-parametrized regime,
provided that the environment is sufficiently
reversible.
%compared with the conditioning of the value function
%approximation
%We note that such reversibility assumption is not necessary in the lazy training regime.
%Furthermore,
We note that working in the lazy training regime allows to ensure convergence independently on the reversibility of the environment and to quantify
the error of the fitted model in the under-parametrized regime.
% Convergent algorithms for nonlinear value function approximation (and $Q$-learning) have been designed in \citet{sutton09a}. In this work, a version of Gradient Temporal Difference \citep{sutton09c}, GTD2, and TD with gradient Correction \citep{sutton09b} have been proven to converge, asymptotically, with probability $1$ to local minima of a certain objective function, related to the Bellman error. These algorithms, however, are more involved than their simpler and more widely applied predecessor TD($0$), and are of a different nature: they are gradient descent algorithms designed to minimize the projection of the TD error $\delta$ (defined below) onto the manifold of approximating functions.
% \jlnotes{we may consider remove this paragraph / merge into the intro if space is limited}
Another recent work \citet{CaiYangLeeWang} analyzes global convergence of a
modified \abbr{td} algorithm for two-layer neural networks with ReLu nonlinearity
when the width of the hidden layer diverges.
%(a regime that is referred to as over-parametrization, in slight contrast with the notation of the present paper).
In contrast, in the present paper we focus on the \emph{original} TD($\lambda$) learning algorithm, and consider various training regimes. Furthermore, our convergence results in the lazy training regime apply to more general approximators (including, but not limited to, multiple activation functions). More recently, \citet{wang20td} discuss the convergence of \abbr{td} learning with neural network representations in the mean-field regime. The optimality bounds established in that paper, however, critically depend on a scaling parameter $\alpha$ which is assumed to be large. This scaling parameter corresponds to the lazy training parameter discussed in this paper. Therefore, the analysis of \cite{wang20td} can be seen as a combination of the limits taken in this work (large width first and lazy training after). We also note that the results of \cite{wang20td} are restricted to neural networks with bounded weights in the last layer. Further recent works discussing convergence properties of deep reinforcement learning algorithms both in the lazy and the mean-field regime include \citep{agazzi20,wang19pg,wang19adv,wang20single,wang19lqr,spiliopoulos19}.

 {Finally, the mean-field analysis in our paper is tightly connected to the recent line of work \citep{MeiMonNgu18,RotVE18,RotJelBruVE19,WeiLeeLiuMa18,ChizatBach18}. In particular, our proof of global optimality of the fixed points extends
% note{(bears resemblance with?)}
the results from \cite{ChizatBach18} bypassing the lack of the gradient structure in the \abbr{TD} learning setting.}

% The analysis in \citet{CaiYangLeeWang} shares a similar spirit with ours,
% if restricted to the case of neural networks, while \citep{CaiYangLeeWang} has established
% non-asymptotic rates and also extended their analysis to Q-learning.}
}

\paragraph{Contributions.}
% note{Added a sentence about the proved material (over/underparam.), separated technical from content.}
%\jlnotes{I think we shall also summarize our main results here, before we talk about its novelty compared with the existing works}
This paper discusses the use of wide neural networks as nonlinear function approximators in value-based reinforcement learning. In particular, we consider on-policy \abbr{TD} learning for policy evaluation, a widely used algorithm for value function approximation tasks. We study convergence and optimality of wide neural networks trained with this algorithm under different scaling regimes of the parameters at their initialization, contrasting the results to highlight advantages and drawbacks of the various choices of initialization. More precisely:
\begin{enumerate}
  \item We prove convergence of \abbr{TD} learning (asymptotically with probability one) in the lazy training regime.
  %when the model is a \emph{nonlinear} function of its parameters.
  More specifically, we prove convergence of this algorithm in both the under- and over-parametrized regime to local and global minima, respectively, of a natural, weighted error function (the projected \abbr{td} error), and illustrate such convergence properties through numerical examples.

\item To obtain the result summarized above, we adapt the contraction conditions developed in the framework of linear function approximations to a nonlinear, differential geometric setting.
Furthermore, we extend existing results on the convergence in the lazy training regime of nonlinear models trained by gradient descent in the supervised learning framework to the context of reinforcement learning. This requires a generalization of the techniques developed in the gradient flow setting to non-gradient (\ie rotational) vector fields such as the ones encountered in the \abbr{TD} learning framework.

\item To put the result in 1. into perspective, we
%then discuss the limit where the number of parameters is large, and
compare the lazy training regime to the alternative mean-field regime. In particular we show that \aa{under} some technical assumptions, in the mean-field case all the stationary points of the \abbr{TD} dynamics are global minimizers, \ie the model perfectly approximates the desired value function. This result provides evidence that models in the mean-field regime benefit of a far stronger approximating power.
%We argue that such approximating power, however, comes at the expense of a highly nonlinear behavior, preventing at the present stage to obtain stability and convergence guarantees in this setting. \jl{I felt that the last sentence "we argue [...]" has been causing confusions among referees; perhaps we shall remove it or at least do not say it as kind of a ``criticism'' of the mean-field regime?}
\end{enumerate}
\paragraph{Organization of the paper} The paper is organized as follows: In \sref{s:notation} we introduce the framework of Markov reward processes and the \abbr{TD} learning algorithm, discussing the approximations made throughout the paper and introducing the lazy training regime. In \sref{s:over-parametrized} we discuss the convergence properties of the lazy training regime in the over-parametrized setting. In \sref{s:under-parametrized} extend the convergence results in the lazy setting to the under-parametrized case, and we compare this regime with the mean-field regime. We give some numerical examples in \sref{s:numerical} and discuss our results in \sref{s:conclusion}. Technical proof are deferred to the appendix.

\section{Markov Reward Processes, TD learning and scaling limits} \label{s:notation}
We denote a Markov Reward Process (\abbr{MRP}) by the $4$-tuple $(\SS,P,r,\gamma)$, where $\SS$ is the state space, $P = P(s,s')_{s,s' \in \SS}$ a transition kernel, $r(s,s')_{s,s' \in \SS}$ is the real-valued, bounded immediate reward function and $\gamma \in (0,1)$ is a discount factor. In this context, the \emph{value function} $V~:~\SS \to \Rr_+$ maps each state to the infinite-horizon, expected discounted reward obtained by following the Markov process defined by $P$. We assume
%throughout
that this Markov process satisfies the following assumption:
\begin{assumption}\label{a:P}
The Markov process with transition kernel $P$ is ergodic and its stationary measure $\mu$ has full support in $\SS$. {Furthermore we assume that $\SS$ is compact.}
\end{assumption}

We are interested in learning the value (or cost-to-go) function $V^*(s)$ of a given \abbr{mrp} $(\SS,P,r,\gamma)$, which is given by
\begin{equ}\label{e:extremum}
  V^*(s) := \Ex{s}{\sum_{k = 0}^\infty \gamma^k r(s_k,s_{k+1})},
  \end{equ}
  where $\Ex{s}{\,\cdot\,}$ denotes the expectation of the stochastic process $s_k$ starting at $s_0 = s$. More specifically we would like to estimate this function through a set of predictors $\hatt V_w(s)$ in a Hilbert space $\FF$  parametrized by a vector $w \in \WW := \Rr^p$. We make the following assumption on such predictors:
  \begin{assumption}\label{a:V}
    The parametric model $\hatt V~:~\Rr^p \to \FF$ mapping
    $w \mapsto \hatt V_w(\,\cdot\,)$ is differentiable with Lipschitz
    continuous derivative {$D\hatt V~:~w\mapsto DV_w$ (where $DV_w$ is a linear map from
    $\Rr^p \to \FF$)} with Lipschitz constant $L_{D\hatt V}$ defined
    \abbr{wrt} the operator norm.
  \end{assumption}
  A popular algorithm to solve this problem is given by value function approximation with
  \emph{TD($\lambda$)} updates \citep{SuttonBarto:18}. Starting from an initial condition
  $w(0) \in \WW$, for any $\lambda \in [0,1)$, this learning algorithm
  updates the parameters $w$ of the predictor \edit{along a path $\{s_k\}$ in $\mathcal S$} by the following rule:
\begin{equ}\label{e:tdu1}
w(t+1) := w(t) + \beta_t \,\delta(t,t) \,\edit{z_\lambda(t;\{s_k\}_{0}^t)}\,,
\end{equ}
for a \emph{fixed} sequence of time steps $\{\beta_t\}$ to be specified later, where the \emph{temporal-difference error} $\delta(\cdot,\cdot)$ and \emph{eligibility vector} $z_\lambda(\cdot;\cdot)$ are given by
\begin{equ}\label{e:tde1}
  \delta(t,k) := r(s_k,s_{k+1}) + \gamma \hatt V_{w(t)}(s_{k+1}) - \hatt V_{w(t)}(s_k)\, \qquad \edit{z_\lambda(t;\{s_k\}_{k_0}^{k_1}):= \sum_{\tau = k_0}^{k_1} (\gamma \lambda)^{k_1-\tau} \nabla_w \hatt V_{w(t)}(s_\tau)}\,.
\end{equ}

% We assume that $\{\beta_t\}$ satisfy the Robbins-Monro conditions
% \begin{assumption}\label{a:B}
%   The deterministic sequence $\{\beta_t\}$ satisfies $\beta_t \in (0,1)$ for all $t>0$,  $\sum_{t = 0}^\infty \beta_t = \infty$ and $\sum_{t = 0}^\infty \beta_t^2 < \infty$\,.
%     %\item  there exists $0 < \underline \beta < \bar \beta < 1$ such that $\underline \beta < \beta_t < \bar \beta$ for all $t>0$\,.
%   %\end{enumerate}
% \end{assumption}
This work focuses on the asymptotic regime of small constant
step-sizes $\beta_t \to 0$. In this adiabatic limit, the stochastic
component of the dynamics is averaged out before the parameters of the
model can undergo a significant change.  This allows to consider the
\abbr{TD} update as a deterministic dynamical system emerging from the
averaging of the underlying stochastic algorithm. We focus on analysis
of this deterministic system to highlight the aspect of nonlinear
function
approximation. % Possible extension to include stochastic approximation is commented in Remark~\ref{rem:stochapprox}.
The averaged, deterministic dynamics is given by the set of
\abbr{ode}s%, which for $\lambda = 0$ read
\begin{equ}\label{e:tduc}
  \frac{\d}{\d t} w(t) =  \Ex{\mu}{\bigl(r(s_0,s_1) + \gamma \hatt V_{w(t)}(s_1) - \hatt V_{w(t)}(s_0)\bigr) \edit{z_\lambda(t;\{s_k\}_{-\infty}^0)}}\,,%\nabla_w \hatt V_{w(t)}(s)}\,,
\end{equ}
where $\mathbb E_\mu$ denotes the \edit{expectation of the underlying dynamics at stationarity (started at $k_0 = - \infty)$ and $z_\lambda(t;\{s_k\}_{k_0}^{k_1})$ is defined in \eref{e:tde1}}. {In the case of finite state space ($|\SS| = d$) we can represent $\hatt V_w$ as a vector in $\Rr^d$, while in general it is a function $\SS \to \Rr$, which we will restrict to the \edit{Hilbert} space \edit{$\mathcal F := L^2(\SS, \mu)$} of square integrable functions \abbr{wrt} $\mu$.} % and $P$ as a transition matrix.

To streamline our analysis, we define the \abbr{TD} operator $T^\lambda~:~L^2(\SS,\mu)\to L^2(\SS,\mu)$:
\begin{equ}
  T^\lambda V(s) := (1-\lambda) \sum_{m = 0}^\infty \lambda^m \Ex{s}{\sum_{k = 0}^m \gamma^k r(s_k,s_{k+1}) + \gamma^{m+1} V(s_{m+1})}\,.
\end{equ}
Note that when $\lambda = 0$ the above operator acquires the simple form
$T^{0} V := \bar r + \gamma P V$ for $\bar r(s) := \Ex{s}{r(s,s')}$. {Then, denoting throughout by $ D V_{w}$ the
Fréchet derivative of $V$ at $w$\footnote{\edit{in the finite-dimensional case $|\mathcal S| =d<\infty$, $D V_{w}$ can be identified with the $d \times p$-dimensional Jacobian matrix of $V_{w}$}},} it can be shown
\cite[Lemma 8]{tsitsiklis} (and is immediately verified
when $\lambda = 0$) that the dynamics \eqref{e:tduc} for general
$\lambda < 1$ can be written as
% note{slightly changed wording since changed \eqref{e:tduc} to the general form, commented the old formulation below in the tex file}
% it can be shown
% \cite[Lemma 8]{tsitsiklis} that the continuous dynamics for general
% $\lambda < 1$ can be written as (it is easy to verify that
% \eqref{e:tduc} corresponds to the special case $\lambda = 0$)
\begin{equ}\label{e:tduav}
  \frac{\d}{\d t} w(t) =  \dtp{T^\lambda \hatt V_{w(t)} - \hatt V_{w(t)},  D \hatt V_{w(t)}}_\mu \,,
  %(T^\lambda \hatt V_{w(t)} - \alpha \hatt V_{w(t)}) \Gamma  D \hatt V_{w(t)}
\end{equ}
where we define throughout the inner product \edit{on $\mathcal F$} induced by the invariant measure $\mu$ {(acting componentwise in expressions such as the one above)} as
\begin{equ}\label{e:gamma}
  \dtp{a,b}_\mu := \int_\SS{a(s) b(s)} \mu(\d s)\,,
\end{equ}
and denote throughout by $\|\,\cdot \,\|_\mu$ and $\Gamma$ the corresponding norm and metric tensor respectively.
Note that in the case $|\SS|= d$, $\Gamma$ reduces to the $d$-dimensional diagonal matrix whose entries are the (positive) values of the invariant measure $\mu(s)$, and one has $\dtp{a,b}_\mu = a^\top \Gamma b$.

% The expression \eref{e:tduav} can be rewritten as
% \begin{equ}
%   \frac{\d}{\d t} w(t) =  (T^0 \hatt V_{w(t)} - \hatt V_{w(t)}) \Gamma  D \hatt V_{w(t)}
%   \end{equ} can be rewritten as
The extension of convergence results for the limiting, average dynamics we consider in this paper to convergence with probability one of the underlying, stochastic algorithm can be obtained through standard stochastic approximation arguments \citep{meyn00, Borkar:09}. More details on this extension are given in Remark~\ref{rem:stochapprox} in \sref{s:proofs} and in the appendix.

\subsection{Mean-field models and lazy training regime}

In this paper, we consider models of the form
\begin{equ}\label{e:meanfield}
 V_w(s) =  \frac1 N \sum_{i = 1}^N \psi(s; \wii) {\qquad \text{where } \wii \in \Omega \subseteq \Rr^m \,\forall \, i\in(1,\dots, N)\,, \psi:\SS\times \Omega \mapsto \FF}\,,
\end{equ}
where the prediction of the model is obtained by averaging the output of $N$ ``simple'', identical units $\psi$. In this case  the \abbr{td} update reads $\frac \d {\d t}\wii(t) = \edit{\dtp{T^\lambda V_{w(t)} - V_{w(t)}, N^{-1}\nabla_{\omega} \psi(\cdot;\wii)}_\mu}\,.$ This family of models includes single layer neural networks \citep{ChizatBach18}, radial basis functions and linear models \citep{RotVE18}.
%We are interested in the population dynamics (in parameter space), \ie in the limit $n \to \infty$.

We further introduce a certain scaling of the \abbr{TD} learning update, which will be the central object of study of this paper. More specifically, we define the rescaled update
\begin{equ}\label{e:tduavscaled}
  \frac{\d}{\d t} w(t) = \frac{1}{\alpha} \dtp{T^\lambda (\alpha \hatt V_{w(t)}) -\alpha\hatt V_{w(t)},  D \hatt V_{w(t)}}_\mu
  %(T^\lambda \alpha \hatt V_{w(t)} - \alpha \hatt V_{w(t)}) \Gamma  D \hatt V_{w(t)}\,,
\end{equ}
for a scaling parameter $\alpha \geq
1$\,. This update, \edit{resulting from a simultaneous rescaling of time and of $V_w$ in \eref{e:tduav},} was designed to capture the effective training dynamics of neural network models under various initializations of their parameters, as we detail below.% note{check that here we actually have $\beta_t$ in front}\
  %\jlnotes{the link to the supp seems broken; probably had to do with overleaf}
\begin{figure}[t]
 \centering
 \def\svgwidth{.8\textwidth}
 \input{./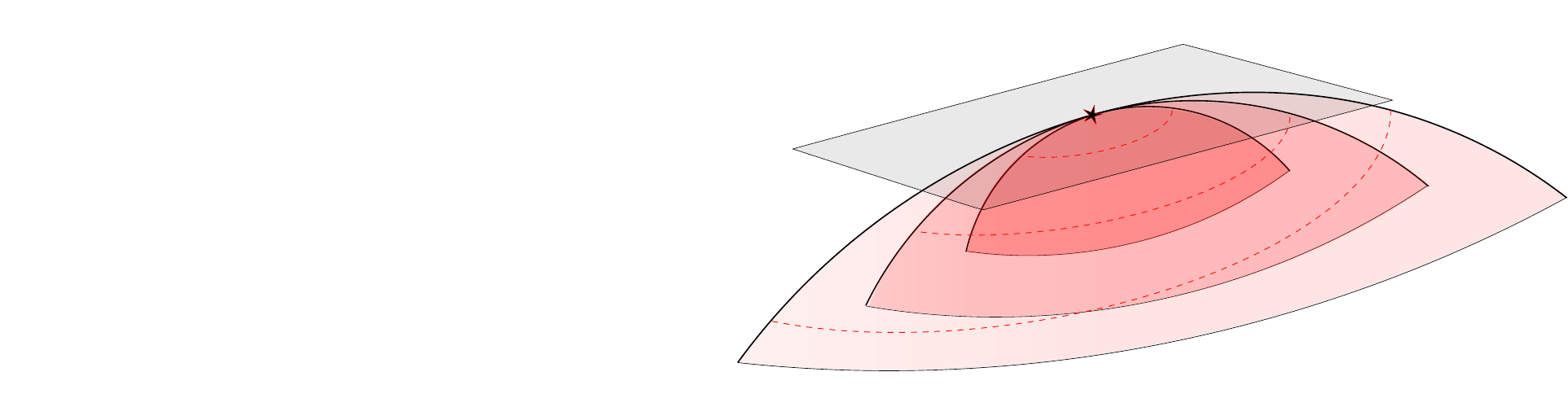_tex}
 \caption{Schematic representation of the effect of the linear scaling
   of the approximating function (\eg in \eref{e:vf}) for different values of $\alpha$ in the
   \emph{under-parametrized} setting. The space of parameters (left)
   is mapped to the space of predictors (right) by the parametric
   model $\hatt V$. The scaling $\hatt V \to \alpha \hatt V$ changes
   the manifold $\FFw$ that the parameter space is mapped to (different
   surfaces on the right). In particular, this scaling ``widens'' the
   reach in the space of functions of the predictors within a ball of
   small radius in $\WW$, but at the same time it ``flattens'' that
   space (locally in $\mathcal W$) bringing it closer to the
   tangential plane \edit{$\mathcal T_{\hatt V_{w(0)}} \FFw$ to $\FFw$ at} the initial model $\hatt V_{w(0)}$. Choosing
   $V_{w(0)} = 0$ as in the picture above leaves the initial point of
   the dynamics (in predictor space) invariant under such
   transformation.}
 \label{f:scaling}
\end{figure}

%Assuming $\SS$ to be infinite \aa{(in order to be in the under-parametrized regime of interest)} \aa{as anticipated}
The remainder of the paper discusses  the large-$N$ limit of \eref{e:meanfield}, and contrast the effect of two possible choices of $\alpha$ when scaling such model as $V_w \to \alpha V_w$:
\begin{enumerate}
  \item{\bf Lazy training regime ($\alpha \to \infty$
  %with $\alpha / N \to 0$
  as $N \to \infty$)} This regime is realized for \emph{large} values of the scaling parameter $\alpha$ in \eref{e:tduavscaled} . One of the reasons why this scaling of the model is of practical
  interest is because it arises naturally when training neural networks,
  implicit in some widely applied choices of initial conditions, as we
  explain in \sref{s:nn} and as discussed in \eg \cite{cb182, CaiYangLeeWang}.
  As we shall see below, under some
  mild assumptions for large values of $\alpha$ the parameters $w$ of
  the model vary only slightly during training, justifying the name \emph{lazy training} regime.  A visual representation of the geometric
  effect of this scaling in the case where {$p < d< \infty$} is given in
  \fref{f:scaling}. When the parameters $\wii$ are drawn \abbr{iid} from a common distribution, as $\alpha \to \infty$ together with  $N \to \infty$ the model converges to a \emph{deterministic} kernel, called \emph{neural tangent kernel} (\abbr{NTK}) \citep{hongler18}. We investigate the effect of this scaling transformation on the training dynamics in Sections~\ref{s:over-parametrized} and \ref{s:lt2}.
  %This regime arises naturally, implicit in some popular initialization of the parameters, as we exemplify in \sref{s:nn} and as discussed in \citet{CaiYangLeeWang,cb182}.

  \item{\bf Mean-field regime ($\alpha = \mathcal O_N (1)$ and $N \to \infty$)} This regime is realized, contrarily to the above, for \emph{bounded} values of $\alpha$, for example fixing $\alpha = 1$ while $N \to \infty$, and arises under a different scaling of the model parameters at initialization as discussed in \sref{s:nn}. This setting was investigated in the context of supervised learning \cite{MeiMonNgu18, RotVE18, ChizatBach18} under the name of \emph{mean-field} limit. A heuristic justification that the scaling in \eref{e:meanfield} is natural for the model $V_w$ and does not lead to the lazy regime can be found \eg in \citet{cb182}.
  In \sref{s:mf} we investigate the optimality properties of neural networks in this regime and compare it to the lazy training from point 1.
\end{enumerate}

\label{s:proofs}
\section{Over-parametrized regime}\label{s:over-parametrized}
In the over-parametrized setting we assume that {$D\hatt V_{w(0)}$ is surjective, \ie its singular values are uniformly bounded away from $0$. This is only possible in the finite state space setting and is automatically the case if} the number of parameters $p$ is larger than the size of the state space $\SS$.
Admittedly, in applications such as AlphaGo \citep{Silver:16, Silver:17}, it is unrealistic to
  over-parametrize, but we start with this regime,  which parallels the study
  of over-parametrized supervised learning,
   %for global convergence of the training loss,
   to develop the intuition of the reader and introduce some important lemmas. Analysis of the under-parametrized regime will be
  discussed in the next section.
%that this assumption is in contrast to the one made in \citet{tsitsiklis} for the linear value function approximation.
In
order to state our first result, we introduce the scalar product in
{$\FF$ defined by}
  $\dtp{a,b}_0 = \dtp{a, g_{w(0)} b} $ where $g_w := (D\hatt V_{w} \cdot D\hatt V_{w}^\top)^{-1}$,
and denote by $\|\,\cdot\,\|_0$ the norm it induces. Note that $g_w$ is the metric tensor of the pushforward metric induced by the parametric model $\hatt V~:~\Rr^p \to \FF$. Furthermore, we note that if the singular values of $D\hatt V_{w(0)}$ are uniformly bounded away from $0$, the norms $\|\,\cdot \,\|_\mu, \|\,\cdot \,\|_0$ are equivalent, \ie there exists $\kappa > 0$ such that $\kappa^{-1}\|f\|_0<\|f\|_\mu < \kappa \|f\|_0$ for all $f \in \FF$\,.

% note{changed the constant by a factor 3 to accomodate proof}
\begin{theorem}[Lazy training, over-parametrized case]\label{t:over-parametrized}
  Assume that $\sigma_{\text{min}}>0$, where $\sigma_{\text{min}}$ is
  the smallest singular value of $D\hatt V_{w(0)}$. Assume further that $w(0)$ is such that
  $\|\hatt V_{w(0)}\|_0 < M := ({1-\gamma})^2\sigma_{\text{min}}^2/(192
  \kappa^2 L_{D \hatt V}\|D\hatt V_{w(0)}\|)$, then for
  $\alpha > \alpha_0 := \|V^*\|_0/M$ we have for all $t \geq 0$ that
\begin{equ}\label{e:expcontraction}
  \|V^*- \alpha \hatt V_{w(t)}\|_0^2 \leq \|V^*- \alpha \hatt V_{w(0)}\|_0^2 e^{- \frac{1-\gamma}{2\kappa^2 } t }\,.
\end{equ}
Recall that $V^*$ is the exact value function given by \eref{e:extremum}. Moreover, if $\|\hatt V_{w(0)}\|_0 \leq C \alpha^{-1}$ for a constant $C > 0$, then $\sup_{t>0}\|w(t)- w(0)\| = \mathcal O(\alpha^{-1})$.
\end{theorem}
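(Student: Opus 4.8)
The plan is to use the squared error $\|V^*-\alpha\hatt V_{w(t)}\|_0^2$ as a Lyapunov function, exploiting that $T^\lambda$ is a contraction of modulus at most $\gamma$ in $\|\cdot\|_\mu$ with fixed point $V^*$ (as in \cite{tsitsiklis}, since $P$ is a non-expansion in the norm induced by its invariant measure), while controlling the drift of the metric $g_{w(t)}$ away from its frozen value $g_{w(0)}$. Write $A_w:=D\hatt V_w\cdot D\hatt V_w^\top$, so that $g_w=A_w^{-1}$ and $A_w$ is self-adjoint \abbr{wrt} $\dtp{\cdot,\cdot}_\mu$, and set $V_\alpha(t):=\alpha\hatt V_{w(t)}$ and $e(t):=V^*-V_\alpha(t)$. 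Multiplying the scaled dynamics \eqref{e:tduavscaled} by $\alpha D\hatt V_{w(t)}$ closes the evolution in $\FF$ as
\[
 \tfrac{\d}{\d t}V_\alpha(t) = A_{w(t)}\bigl(T^\lambda V_\alpha(t)-V_\alpha(t)\bigr),
\]
so that, since $g_{w(0)}$ is self-adjoint, $\tfrac{\d}{\d t}\|e\|_0^2=-2\dtp{e,\,g_{w(0)}A_{w(t)}(T^\lambda V_\alpha-V_\alpha)}_\mu$. I would then split $g_{w(0)}A_{w(t)}=I+g_{w(0)}(A_{w(t)}-A_{w(0)})$ using the identity $g_{w(0)}A_{w(0)}=I$.

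For the leading term, since $T^\lambda V^*=V^*$, I write $T^\lambda V_\alpha-V_\alpha=(T^\lambda V_\alpha-T^\lambda V^*)+e$; Cauchy--Schwarz and the $\gamma$-contraction give $\dtp{e,T^\lambda V_\alpha-V_\alpha}_\mu\ge(1-\gamma)\|e\|_\mu^2$, which together with the norm equivalence $\|e\|_\mu^2\ge\kappa^{-2}\|e\|_0^2$ yields the dissipative bound $-2(1-\gamma)\kappa^{-2}\|e\|_0^2$. The perturbative term is estimated in absolute value by $2\|g_{w(0)}\|\,\|A_{w(t)}-A_{w(0)}\|\,\|T^\lambda V_\alpha-V_\alpha\|_\mu\,\|e\|_\mu$; using $\|g_{w(0)}\|=\sigma_{\text{min}}^{-2}$, the bound $\|T^\lambda V_\alpha-V_\alpha\|_\mu\le(1+\gamma)\|e\|_\mu\le2\|e\|_\mu$, and the Lipschitz estimate $\|A_{w(t)}-A_{w(0)}\|\le 2\|D\hatt V_{w(0)}\|L_{D\hatt V}\|w(t)-w(0)\|+\mathcal O(\|w(t)-w(0)\|^2)$ from Assumption~\ref{a:V}, the perturbation is controlled entirely by $\|w(t)-w(0)\|$. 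Hence as long as the displacement is small enough that the perturbation absorbs at most half of the leading drift, one obtains $\tfrac{\d}{\d t}\|e\|_0^2\le-\tfrac{1-\gamma}{2\kappa^2}\|e\|_0^2$, which gives \eqref{e:expcontraction} by Grönwall.

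It remains to make the smallness of $\|w(t)-w(0)\|$ self-consistent, which is the crux. I would run a continuation (bootstrap) argument: let $\tau$ be the first time the displacement reaches the threshold $R\sim(1-\gamma)\sigma_{\text{min}}^2/\bigl(\|D\hatt V_{w(0)}\|L_{D\hatt V}\bigr)$ at which the perturbation would exceed half the drift. On $[0,\tau)$ the contraction above holds, so $\|e(t)\|_0\le\|e(0)\|_0 e^{-\frac{1-\gamma}{4\kappa^2}t}$. Integrating the velocity $\|\dot w(t)\|\le\alpha^{-1}(1+\gamma)\|D\hatt V_{w(t)}\|\,\|e(t)\|_\mu\le C\alpha^{-1}\|e(0)\|_0 e^{-\frac{1-\gamma}{4\kappa^2}t}$ over $[0,\tau)$ gives $\|w(t)-w(0)\|\le C'\alpha^{-1}\|e(0)\|_0$. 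Since $\|e(0)\|_0\le\|V^*\|_0+\alpha\|\hatt V_{w(0)}\|_0$ and $\alpha>\alpha_0=\|V^*\|_0/M$ forces $\|V^*\|_0/\alpha<M$, this reads $\lesssim M$; the precise constant $192$ in the definition of $M$ is exactly what is needed to guarantee $C'\alpha^{-1}\|e(0)\|_0<R$ strictly, contradicting the definition of $\tau$ unless $\tau=\infty$. Global existence of the trajectory then follows from this a priori displacement bound together with local Lipschitz existence.

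For the final weight estimate, the stronger hypothesis $\|\hatt V_{w(0)}\|_0\le C\alpha^{-1}$ makes $\|e(0)\|_0\le\|V^*\|_0+C$ uniformly bounded in $\alpha$, so the same integral bound yields $\sup_{t>0}\|w(t)-w(0)\|\le C'\alpha^{-1}(\|V^*\|_0+C)=\mathcal O(\alpha^{-1})$. The main obstacle I anticipate is precisely closing the bootstrap with explicit constants: the negative drift relies on $\|w(t)-w(0)\|$ staying small, while the bound on $\|w(t)-w(0)\|$ relies on the decay, so the two must be made simultaneously consistent. The delicate points are (i) keeping the metric perturbation $\|A_{w(t)}-A_{w(0)}\|$ subordinate to $\sigma_{\text{min}}^2$, so that $g_{w(0)}A_{w(t)}$ stays close to the identity and the drift remains dissipative, and (ii) ensuring the prefactor $\|e(0)\|_0$ does not secretly grow with $\alpha$, which is why the hypotheses $\|\hatt V_{w(0)}\|_0<M$ and $\alpha>\alpha_0$ are imposed.
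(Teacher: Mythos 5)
Your proposal is correct and follows essentially the same route as the paper's proof: the same Lyapunov function $\|V^*-\alpha \hat V_{w(t)}\|_0^2$, the same use of the Tsitsiklis--Van Roy contraction of $T^\lambda$ (via $T^\lambda V^*=V^*$ and Cauchy--Schwarz) to produce the dissipative term, the same Lipschitz control of the metric perturbation subordinate to $\sigma_{\text{min}}^2$, and the same exit-time bootstrap showing the displacement bound and the decay estimate are simultaneously self-consistent so that $\tau=\infty$. The only cosmetic difference is that you split $g_{w(0)}A_{w(t)}=I+g_{w(0)}(A_{w(t)}-A_{w(0)})$ directly, whereas the paper's perturbation lemma writes $g_{w(0)}=(I+\tilde g_w)g_w$ via a Neumann series; both reduce to the same Lipschitz estimate on $A_{w(t)}-A_{w(0)}$ and yield the claimed rate.
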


\begin{proof}\noindent\textbf{sketch of \tref{t:over-parametrized}} Similarly to
the proof in
\citet{cb182}, we first show that $D V_w$ and $V_w$ do not change much assuming that $w$ stays in a small ball of radius $\rho$.
%(more specifically, one shows that $\|\alpha \hatt V_{w(t)}-V^*\| \leq e^{-\kappa t} \|\alpha \hatt V_{w(0)} - V^*\|$ under the assumption on $w$).
Then, combining this result with the Lipschitz continuous character of $D V$ in $w$, we show that $w$ does indeed stay in the desired ball of radius $\rho$. %A similar computation can be done in our case.
To bypass the absence of a strongly convex cost functional %$R(\cdot)$
in our framework, which was crucial in the analysis of \citet{cb182}, we adopt a strategy based on the use of a local Lyapunov function
\begin{equ}\label{e:lf}
  U(f) = \|f - V^*\|_0^2\,,
\end{equ}
%where $V^* \in \FF$ is the unique fixed point of \eref{e:tduav}
% note{I think that this is the unique fp is not immediate, so displaced to lemma of tsitsiklis below}
where $V^*$ is the sought for value function \eref{e:extremum}. The theorem is based on two preparatory lemmas.
%that can be found, together with their proo in appendix.
The first one (\lref{l:perturbation}) states that for large
values of the scaling parameter $\alpha$ the pushforward metric $g_w$
varies in a negligible way during training.
% \begin{lemma}[Perturbation of the metric] \label{l:perturbation}
%   Let $r \leq ({1-\gamma})\sigma_{\text{min}}^2/(32 L_{D \hatt V})$
%   then for all $w(t) \in \BB_{r}(w(0))$, $g_0 =  ( 1 + \obar  g_t)g_t$
%   where $\|\obar g_t\|< \frac{1-\gamma}4$.
% \end{lemma}
To state the lemma, we denote throughout by
$\indicator$ the identity map in the corresponding space and by
$\BB_\rho^{\mu}(v)$, $\BB_\rho^{0}(v)$ and $\BB_\rho(v)$ the balls with
radius $\rho$ around $v$ in $\|\,\cdot\,\|_\mu$, $\|\,\cdot\,\|_0$
and $\|\,\cdot\,\|_2$ respectively.
\begin{lemma}[Perturbation of the metric] \label{l:perturbation}
    Let $\GG_0$ be a compact subset of a linear space $\GG$. For {$v(0) \in \GG_0$}, let $g_v$ be a continuous, {self-adjoint linear operator} that is positive definite in a neighborhood of $v(0)$ when restricted on $\GG$. Then for all $\epsilon > 0$ there exists $\delta > 0$ such that, for all $v \in \BB_\delta(v(0)) \subseteq \GG_0$
  \begin{equ}\label{e:perturbation}
    g_{v(0)} =  (\indicator + \tilde  g_v)g_v\,,
  \end{equ} for a linear operator $\tilde g_v$ with
  $  \|\tilde g_v\|< \epsilon\,.$
  More specifically, let $\sigma_{\text{min}}$ be the smallest singular value of $DV_{w(0)}$. Then if $\rho \leq ({1-\gamma})\sigma_{\text{min}}^2/(48 L_{D \hatt V})$,
   \eref{e:perturbation} holds
  with $\|\tilde g_{V(w)}\|< \frac{1-\gamma}4$ for all $w \in \BB_{\rho}(w(0))$.
\end{lemma}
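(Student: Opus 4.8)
The plan is to produce $\tilde g_v$ explicitly and then bound its operator norm, first qualitatively for the $\epsilon$--$\delta$ claim and then quantitatively on the ball of radius $\rho$. Since $g_{v(0)}$ is self-adjoint and positive definite it is invertible, and by continuity so is $g_v$ for $v$ near $v(0)$; I would therefore set $\tilde g_v := g_{v(0)} g_v^{-1} - \indicator$, for which $(\indicator + \tilde g_v)g_v = g_{v(0)} g_v^{-1} g_v = g_{v(0)}$, \ie the factorization \eref{e:perturbation} holds by construction. This reduces the entire lemma to estimating $\|\tilde g_v\| = \|g_{v(0)}(g_v^{-1} - g_{v(0)}^{-1})\|$.

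For the qualitative part, I would use that operator inversion is continuous on the open set of invertible operators (\eg through a Neumann series, which is where the bounded inverse $\|g_{v(0)}^{-1}\| < \infty$ coming from positive definiteness of $g_{v(0)}$ enters). Hence $v \mapsto g_{v(0)} g_v^{-1}$ is continuous near $v(0)$ and equals $\indicator$ at $v(0)$, so $\|\tilde g_v\| \to 0$ as $v \to v(0)$; this furnishes, for every $\epsilon > 0$, a $\delta > 0$ with $\|\tilde g_v\| < \epsilon$ on $\BB_\delta(v(0))$. Compactness of $\GG_0$ is used only to keep the inverse and the modulus of continuity uniformly controlled and is inessential to the local statement.

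For the quantitative estimate I specialize to $g_{V(w)} = A_w^{-1}$ with $A_w := D\hatt V_w\, D\hatt V_w^\top$, so that $\tilde g_{V(w)} = A_0^{-1}(A_w - A_0)$ and $\|\tilde g_{V(w)}\| \le \|A_0^{-1}\|\,\|A_w - A_0\| = \sigma_{\text{min}}^{-2}\|A_w - A_0\|$, where $\|A_0^{-1}\| = \sigma_{\text{min}}^{-2}$ because the eigenvalues of $A_0 = D\hatt V_{w(0)} D\hatt V_{w(0)}^\top$ are the squared singular values of $D\hatt V_{w(0)}$. To control the perturbation of the product I would add and subtract $D\hatt V_w D\hatt V_{w(0)}^\top$, writing
\begin{equ}
  A_w - A_0 = D\hatt V_w (D\hatt V_w - D\hatt V_{w(0)})^\top + (D\hatt V_w - D\hatt V_{w(0)}) D\hatt V_{w(0)}^\top\,,
\end{equ}
and then apply the Lipschitz bound $\|D\hatt V_w - D\hatt V_{w(0)}\| \le L_{D\hatt V}\|w - w(0)\| \le L_{D\hatt V}\rho$ from \aref{a:V} together with the uniform bound $\|D\hatt V_w\| \le \|D\hatt V_{w(0)}\| + L_{D\hatt V}\rho$ valid on the ball. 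This gives $\|A_w - A_0\| \le 2\|D\hatt V_{w(0)}\|L_{D\hatt V}\rho + (L_{D\hatt V}\rho)^2$, and inserting the hypothesized $\rho$ (so that $L_{D\hatt V}\rho$ is a small multiple of $\sigma_{\text{min}}^2$) yields $\|\tilde g_{V(w)}\| < \frac{1-\gamma}{4}$ uniformly over $w \in \BB_\rho(w(0))$.

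I expect the main obstacle to be this last quantitative step. Assumption \aref{a:V} controls the perturbation of $D\hatt V_w$ itself, but the metric tensor involves the \emph{product} $D\hatt V_w D\hatt V_w^\top$ and its inverse; the add-and-subtract decomposition above is precisely what converts Lipschitz control of $D\hatt V$ into control of $A_w - A_0$, and obtaining a bound uniform over the whole ball (not merely pointwise near $w(0)$) relies on the uniform operator-norm bound on $D\hatt V_w$. A secondary subtlety, relevant in the infinite-dimensional setting, is that positive definiteness of $g_{v(0)}$ must be strengthened to a bounded inverse, which is exactly what the hypothesis $\sigma_{\text{min}} > 0$ provides.
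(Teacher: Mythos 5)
Your proof is correct, and it differs from the paper's in the key mechanical step. The paper works with the same perturbation $B_w = D\hatt V_{w(0)}D\hatt V_{w(0)}^\top - D\hatt V_w D\hatt V_w^\top$ of the Gram operator and essentially the same Lipschitz bound on $\|B_w\|$, but it produces $\tilde g_w$ as a Neumann series: from $g_{w(0)} = (\indicator + g_w B_w)^{-1}g_w$ it sets $\tilde g_w = \sum_{n\geq 1}(-1)^n (g_w B_w)^n$ and sums the geometric series. That route needs two ingredients your argument avoids: a uniform bound on the inverse metric at the \emph{perturbed} points ($g_w \preceq 4\sigma_{\text{min}}^{-2}\indicator$ on $\BB_\rho(w(0))$, \ie a lower bound on the singular values of $D\hatt V_w$ throughout the ball), and smallness of $\|g_w B_w\|$ so that the series converges. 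Your closed-form solution $\tilde g_v = g_{v(0)}g_v^{-1}-\indicator = A_0^{-1}(A_w - A_0)$, which is the same operator since $(\indicator + g_w B_w)^{-1} = A_0^{-1}A_w$, requires only $\|A_0^{-1}\| = \sigma_{\text{min}}^{-2}$, which is given by hypothesis at $w(0)$: no series, and no perturbed-singular-value estimate. Your version is also slightly more careful on constants, since you retain the factor $\|D\hatt V_{w(0)}\|$ in $\|A_w - A_0\| \leq 2\|D\hatt V_{w(0)}\|L_{D \hatt V}\rho + (L_{D \hatt V}\rho)^2$, which the paper drops (it proves the lemma under an implicit normalization and notes that ``the same proof applies in the general case with different, implicit constants''). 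Consequently, in both arguments the advertised constant $\frac{1-\gamma}{4}$ at the stated radius $\rho$ only closes when $\|D\hatt V_{w(0)}\|$ is treated as $\mathcal O(1)$; your phrase ``a small multiple of $\sigma_{\text{min}}^2$'' plays the same role as the paper's caveat, so this is a shared and acknowledged looseness rather than a gap in your proposal.
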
% note{Check one last time before submitting}
%\jlnotes{note that I made some mathematical changes; please check}
We also recall from \cite{tsitsiklis} the following contraction property of
the \abbr{td} operator in the $\|\,\cdot\,\|_\mu$ norm.
%induced by the invariant measure $\mu$.
For the convenience of readers, we recall the proof in the
appendix.
\begin{lemma}\cite[Lemmas 1, 3, 7]{tsitsiklis}\label{l:tsitsiklis}
Under \aref{a:P}, for any $V, \tilde V \in \FF$ we have that
$%\label{e:TDcontraction}
  \|T^{\lambda} V- T^{\lambda} \tilde V\|_\mu \leq \gamma_\lambda \| V- \tilde V\|_\mu$\,\, for $ \gamma_\lambda := \gamma\frac{ 1-\lambda}{1-\gamma \lambda} \leq \gamma< 1\,.
  % \leq \gamma \| V- \tilde V\|_\mu
$
In particular there exists a unique fixed point of $T^\lambda$, $V^* \in \FF$ given by \eref{e:extremum}.
\end{lemma}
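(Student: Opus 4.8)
The plan is to exploit the affine structure of $T^\lambda$ together with the nonexpansiveness of the transition kernel $P$ in the $\mu$-weighted norm. First I would note that, in the decomposition $T^\lambda V = \bar r^\lambda + \gamma P^\lambda V$ recalled above, the reward term $\bar r^\lambda$ is independent of $V$, so for any $V, \tilde V \in \FF$ one has $T^\lambda V - T^\lambda \tilde V = \gamma P^\lambda (V - \tilde V)$. Writing $W := V - \tilde V$, the claim reduces to the operator bound $\gamma \|P^\lambda W\|_\mu \leq \gamma_\lambda \|W\|_\mu$, equivalently $\|P^\lambda W\|_\mu \leq \frac{1-\lambda}{1-\lambda\gamma}\|W\|_\mu$.

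The crux, and the one genuinely non-trivial ingredient, is the nonexpansiveness of $P$ in $\|\,\cdot\,\|_\mu$. I would establish $\|P W\|_\mu \leq \|W\|_\mu$ by applying Jensen's inequality to the convex map $x \mapsto x^2$ against the probability kernel $P(s,\,\cdot\,)$, then interchanging the order of integration and invoking the stationarity $\int_\SS \mu(\d s)\, P(s, \d s') = \mu(\d s')$ to collapse the bound back to $\|W\|_\mu^2$. This step is exactly where \aref{a:P} enters, and it is what forces the contraction to hold in the $\mu$-norm rather than, say, the sup-norm; it works verbatim in both the finite and the compact (continuous) state-space settings. Composition then yields that each iterate $P^{m+1}$ is likewise nonexpansive. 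This is the main obstacle, in the sense that the remaining steps are routine once this estimate is in place.

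With nonexpansiveness in hand, I would estimate $P^\lambda W = (1-\lambda)\sum_{m=0}^\infty (\lambda\gamma)^m P^{m+1} W$ by the triangle inequality and the nonexpansiveness of each $P^{m+1}$, obtaining $\|P^\lambda W\|_\mu \leq (1-\lambda)\sum_{m=0}^\infty (\lambda\gamma)^m \|W\|_\mu = \frac{1-\lambda}{1-\lambda\gamma}\|W\|_\mu$ via the geometric series, which converges since $\lambda\gamma < 1$. Multiplying through by $\gamma$ recovers precisely the modulus $\gamma_\lambda = \gamma\frac{1-\lambda}{1-\lambda\gamma}$, and the elementary bound $\gamma_\lambda \leq \gamma < 1$ follows from $1-\lambda \leq 1-\gamma\lambda$ (using $\gamma \leq 1$).

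For existence and uniqueness of the fixed point I would invoke the Banach fixed point theorem: $\FF = L^2(\SS,\mu)$ is complete and $T^\lambda$ is a $\gamma_\lambda$-contraction, so a unique fixed point exists. To identify it with $V^*$ from \eref{e:extremum}, I would verify $T^\lambda V^* = V^*$ directly: by the tower and Markov properties, $\Ex{s}{\gamma^{m+1} V^*(s_{m+1})} = \Ex{s}{\sum_{t=m+1}^\infty \gamma^t r(s_t,s_{t+1})}$, so that for \emph{every} $m$ the inner expectation in the definition of $T^\lambda V^*(s)$ telescopes to $V^*(s)$; summing against the weights $(1-\lambda)\sum_m \lambda^m = 1$ gives $T^\lambda V^* = V^*$. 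The only technical care needed throughout is the justification of interchanging the infinite sums and expectations, which is controlled by the boundedness of $r$ and the geometric decay in $\gamma$ and $\lambda\gamma$ (dominated convergence, respectively Fubini).
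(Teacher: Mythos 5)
Your proposal is correct and follows essentially the same route as the paper's own proof: the key step in both is the nonexpansiveness of $P$ in $\|\,\cdot\,\|_\mu$ via Jensen's inequality and the invariance of $\mu$, followed by the geometric-series estimate giving the modulus $\gamma_\lambda = \gamma\frac{1-\lambda}{1-\gamma\lambda}$, and then the contraction principle with direct verification that $V^*$ from \eref{e:extremum} is the fixed point. Your explicit use of the affine decomposition $T^\lambda V = \bar r^\lambda + \gamma P^\lambda V$ (which the paper itself sets up before the proof) and your detailed tower-property check of $T^\lambda V^* = V^*$ merely flesh out steps the paper performs inline or leaves as ``immediately checked by direct computation.''
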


To prove \tref{t:over-parametrized}, we then rely on the contraction $T^\lambda$ (\lref{l:tsitsiklis}, from \cite{tsitsiklis}) to
establish decay of the local Lyapunov function $U$ as long as $w$
stays within a ball. furthermore, by \lref{l:perturbation} the nonlinear effects become negligible when
$\alpha$ is sufficiently large. The control of $U$ in turn gives the
bound of the change of $w$, which closes the argument. The full proofs of the theorem and the lemmas are
given in the appendix.
\end{proof}
% As anticipated in the introduction, we prove the convergence result in a deterministic setting. \jlnotes{perhaps need to say a bit more depending whether we have space left}

\begin{remark}\label{rem:stochapprox}
  Our results can be extended to show stability and convergence in the
  stochastic approximation setting, similarly to \citet{tsitsiklis,
    sutton09a}, under the additional assumption that the step size
  $\{\beta_t\}$ satisfies the Robbins-Monro condition
  \citep{RobbinsMonro:51}. For example, one can apply \cite[Thms. 2.2,
  2.4]{meyn00} guaranteeing almost sure convergence and exponential
  contraction of the expected error with probability one over the
  initial condition provided that %\aref{a:B} holds and
  the limiting vector field (in our case \eref{e:tduavscaled}) has a
  unique fixed point and is Lipschitz continuous. Lipschitz continuity
  is an immediate consequence of the {linearity of $T^\lambda$ and the boundedness of} closed balls in
  $\FF$ together with the Lipschitz continuity of the models
  \aref{a:V}. % note{need another assumption?}
  The existence of a fixed point \eref{e:extremum} in $\FF$ of the
  limiting vector field is trivial while its uniqueness
  is shown in the proof of \tref{t:over-parametrized} in the
  appendix.
\end{remark}

\section{Under-parametrized regime}\label{s:under-parametrized}

%We now proceed to state and prove a convergence theorem in the under-parametrized case.
The underlying assumption in this section is
that the size of state space is larger than the number of parameters,
which in turn bounds the rank $r$ of $D \hatt V_{w(0)}$ from above:
$ r < p < d$. {In particular, in the case of wide neural networks of interest in this paper, this assumption is realized when the state space is not finite and, in particular, in the case of continuous state space $\mathcal S$.}
%\jl{I changed $d = \infty$ to ``not finite''; as one could also have a countable infinite number of states}

\subsection{Lazy training regime}\label{s:lt2}
In this regime, in general, there is no hope that
\abbr{TD} will converge to the true value function $V^{\ast}$. In
fact, the image of the operator $T^{\lambda}$ might not even lie in
the space $\FFw$ of approximating functions. However, the derivative
$D \hatt V_{w(t)}^\top$ in the \abbr{TD} update acts as a projection
(\abbr{wrt} the product $\langle\,\cdot\,,\,\cdot\,\rangle_\mu$) onto
the tangent space of $\FFw$ at $\hatt V_{w(t)}$ (more specifically,
$D \hatt V_{w(t)}^\top$ projects the image of $T^\lambda$ onto $\WW$,
which is then mapped back by
$D \hatt V_{w(t)}$ to \edit{the tangent space} $\edit{{\mathcal T}_{\hatt V(w(t))}}\FFw$ \edit{of $\FFw$ at $\hatt V(w(t))$}). We denote throughout by
{$\Pi$
%:=D\hatt V_{w(t)} D\hatt V_{w(t)}^{\top}$
and
  $\Pi_0$
  %:=D\hatt V_{w(0)} D\hatt V_{w(0)}^{\top}$
  } the projection
operator under \eref{e:gamma} onto $\edit{{\mathcal T}_{\hatt V(w(t))}}\FFw$ and
$\edit{{\mathcal T}_{\hatt V(w(0))}}\FFw$ respectively. What one can hope for is that
the \abbr{TD} algorithm converges to a locally ``optimal''
{approximation $\tilde V^{\ast}$ of $V^{\ast}$ on the manifold
  $\FFw$, which is close to the best approximator $\Pi_0 V^{\ast}$ of
  $V^{\ast}$ on the linear tangent space $\edit{{\mathcal T}_{\hatt V(w(0))}}\FFw$.}
\begin{theorem}[Lazy training, under-parametrized case]\label{t:under-parametrized}
  Assume that $r := \text{rank}(D \hatt V_w)$ is constant in a neighborhood of $w(0)$  and $V_{w(0)} = 0$.
Then there exists $\alpha_0 > 0$ such that for any $\alpha > \alpha_0$ the dynamics \eref{e:tduavscaled} (and the corresponding approximation $\hatt V_{w}$) converge exponentially fast to a locally (in $\WW$) attractive fixed point $ \tilde V^*$, for which $\|\Pi(T^\lambda \tilde V^* - \tilde V^\ast)\|_\mu = 0$ and $\|\tilde V^* - V^*\|_\mu < \frac{1-\lambda \gamma}{1 - \gamma} \|\Pi_0 V^* - V^*\|_\mu+\OO( \alpha^{-1})$.
% {Furthermore, there exists a constant $C > 0$ such that $\|V^* - W^*\|_\mu < \gamma_\lambda \|\Pi W^* - W^*\|_\mu(1+C \alpha^{-1})$, where $W^*$ is the fixed point \eref{e:extremum}}.
\end{theorem}
{Note that for random initialization the constant rank assumption is satisfied {almost surely}. Indeed, the maximal rank property holds generically in $\mathcal W$ and thus {with probability $1$} at $w(0)$ when the model parameters are initialized randomly. Furthermore, by lower semicontinuity of the rank function the Jacobian $DV$ will have maximal rank in an \emph{open} subset of $\mathcal W$, {ensuring the existence of the required neighborhood.}} The proof of \tref{t:under-parametrized} (sketched below) is given in the appendix.

The main difference in the proof of \tref{t:under-parametrized} \abbr{wrt} \tref{t:over-parametrized}
  %the one in the over-parametrized regime
  is that
$D \hatt V_{w} \cdot D \hatt V_{w}^\top$ does not have full rank
anymore. This implies on one hand that the norms $\|\,\cdot\,\|_\mu$
and $\|\,\cdot\,\|_0$ {are not equivalent in $\FF$}, even though we still have
$\|\,\cdot\,\|_0 \leq \kappa \|\,\cdot\,\|_\mu$ for a $\kappa > 0$,
{provided that \aref{a:P} holds.}  On the other hand, as
mentioned above, this implies that the model $\hatt V_w$ evolves on a
submanifold $\FFw$ of $\FF$, and that $T^\lambda$ does not, in general, map
onto the tangential plane $\edit{{\mathcal T}_{\hatt V(w)}}\FFw$ of $\FFw$ at $\hatt V_w$.
The action of $T^\lambda$ is then projected back onto $\edit{{\mathcal T}_{\hatt V(w)}}\FFw$ by the operator $D V_{w(t)}$. The nonlinear structure of the space $\FFw$ complicates the proof \abbr{wrt} the over-parametrized case, and we apply standard differential geometric tools to
map the problem back to a linear space.

\begin{proof} \textbf{of \tref{t:under-parametrized}}
  We apply the rank theorem \citep{boutaib15,lee03} (\citep{abraham12} for the $\infty$-dimensional setting) to show that there exist sets
  $\WW_0, \obar \WW_0 \subseteq \Rr^p$,
  $\FF_0, \obar \FF_0 \subseteq \FF$ and diffeomorphic maps
  $\phi~:~\WW_0 \to \obar \WW_0$, $\psi~:~\FF_0 \to \obar \FF_0$ where
  $\psi \circ \hatt V \circ \phi^{-1} = \pi_r$, $\phi(w(0)) = 0$,
  $\psi(\hatt V_{w(0)}) = 0$ and, for an appropriate choice of bases,
  $\pi_r$ maps the coordinates of $\obar \WW_0$ to the \emph{first}
  $r$ coordinates of $\obar \FF_0$, \ie
  $(x_1,\dots, x_p) \mapsto (x_1, \dots, x_r, 0,0, \dots)$, where $r$
  is the rank of the operator $D V_{w(0)}$. See Figure~\ref{f:scaling} for an illustration. We denote by $\Pi_r$ the
  hyperplane in $\FF$ spanned by the first $r$ vectors of the
  basis. We recall that by \cite{boutaib15, lee03,abraham12} the maps,
  $\psi, \phi, \pi_r$ are continuous with Lipschitz derivatives
  $D \psi, D\phi, D\pi_r$ respectively.
\begin{figure}[h!]
  \centering
  \begin{tikzcd}
    \vspace{10pt}
    \WW_0 \arrow[r, "V"] \arrow[d, "\phi"]
    & \FF_0 \arrow[d, "\psi"] \\
    \obar \WW_0 \arrow[r,"\pi_r"]
    & \obar \FF_0
  \end{tikzcd}
  % \vspace{-16pt}
  \caption{Illustration of the spaces and maps used in the proof of \tref{t:under-parametrized}}
\end{figure}

We consider the trajectory of $\obar V_{w(t)} := \pi_r\circ \phi(w(t)) = \psi(\hatt V_{w(t)})$. Denoting by $D \cdot$ the {Fr\'echet derivative} at the corresponding point of the dynamics and noting that $D \hatt V = D \psi^{-1} D\pi_r D \phi$ we have{
  \begin{equs}
    \frac{\d}{\d t} \obar V_{w(t)} & =  - \frac 1\alpha \dtp{D \psi D \hatt V D \hatt V^{\top},  T^\lambda \alpha \psi^{-1}(\obar V_{w(t)}) - \alpha \psi^{-1}(\obar V_{w(t)})}_\pi\\
    & = - \frac 1\alpha \dtp{D \pi_r D \phi D \phi^\top D \pi_r^\top (D
    \psi^{-1})^\top, T^\lambda\alpha \psi^{-1}(\obar
    V_{w(t)}) - \alpha \psi^{-1}(\obar V_{w(t)})}_\pi\,,\label{e:vf}
  \end{equs}}
  so $\obar V$ remains in $\Pi_r$.  As a consequence of the above we
  can naturally define a metric (the pushforward metric) on
  $\obar \FF_0$ by the tensor
  $\bar g_{\bar v} = (D \pi_r D \phi D \phi^\top D \pi_r^\top)^{-1}$. In
  fact, by choosing the metric tensor to be constant on $\obar \FF_0$, \ie
  equal to $\bar g_0$ for all $v \in \obar \FF_0$,  we equip the linear space
  $\obar \FF_0$ with a scalar product $\dtp{\,\cdot\,,\,\cdot\,}_0$.
  This, in turn, directly induces a norm $\|\,\cdot\,\|_0$ on the same
  space. We now proceed to use such simple metric structure to
  establish the existence and uniqueness of a fixed point of
  \eref{e:vf} in $\obar \FF_0$ for $\alpha$ large enough.

  The \edit{desired result} follows from %use the condition in
  \cite[Proposition 4.1]{bullo14}, which establishes uniqueness and
  exponential contraction at rate $\ell > 0$ of a dynamical system
  evolving under the flow of a vector field $X$ given by the
  \abbr{rhs} of \eref{e:vf} in a forward invariant set $\obar \FF_0$
  provided that for every geodesic $\gamma(s)$ in $\obar \FF_0$
  \eref{e:convexity} holds.  Therefore, the proof of convergence
  is concluded by applying \lref{l:forwardinvariance} and
  \lref{l:convexity}.
  %, whose proofs can be found in supplementary  materials.
  The proof of the optimality of the fixed point is
  postponed as \lref{l:fixedpoint}. \end{proof}

\begin{lemma}\label{l:forwardinvariance}There exists $\delta > 0$ and $\alpha_0 > 0$ such that the ball $\BB_\delta^0(0) \subseteq \obar \FF_0$
  %containing $\alpha \bar V_0$ and $0$ that
  is forward invariant and forward complete \abbr{wrt} the dynamics of \eref{e:tduavscaled} for all $\alpha > \alpha_0$.
  %assuming that $\|\alpha \bar V_0\|< C$ for all $\alpha > 0$.
\end{lemma}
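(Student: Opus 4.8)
The plan is to establish forward invariance through a Nagumo-type inward-pointing condition at the boundary $\partial\BB_\delta^0(0) = \{\,\|\obar V\|_0 = \delta\,\}$: if, along the flow of \eref{e:vf}, one has $\frac{\d}{\d t}\|\obar V_{w(t)}\|_0^2 < 0$ whenever $\|\obar V_{w(t)}\|_0 = \delta$, then the closed ball is forward invariant. Forward completeness then follows at once, since the vector field in \eref{e:vf} is Lipschitz on the bounded set $\BB_\delta^0(0)$ --- being affine in $V$ through $T^\lambda$ and composed with the chart maps $\psi,\phi,\pi_r$, whose derivatives are Lipschitz by the rank theorem and \aref{a:V} --- so a trajectory confined to $\BB_\delta^0(0)$ cannot escape in finite time and therefore exists for all $t \geq 0$.

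First I would fix $\delta$ small enough that $\BB_\delta^0(0)$ lies inside the chart domain on which $D\hatt V_w$ has constant rank and on which the pushforward metric $\bar g_{\bar v} = (D\pi_r D\phi D\phi^\top D\pi_r^\top)^{-1}$ stays within a factor $(1+\epsilon)$ of its value $\bar g_0$ at the origin. This is the exact analogue of \lref{l:perturbation} applied to $\bar g_{\bar v}$, using continuity of the chart derivatives. Recall that $\obar V_{w(0)} = \psi(\hatt V_{w(0)}) = 0$, since $V_{w(0)} = 0$ by hypothesis, so the ball is centred at the initial point of the dynamics.

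Next I would exploit the affine structure $T^\lambda V = \bar r^\lambda + \gamma P^\lambda V$ to split the residual in \eref{e:vf} as
\[
\frac1\alpha\bigl(T^\lambda \alpha\psi^{-1}(\obar V) - \alpha\psi^{-1}(\obar V)\bigr) = \frac1\alpha\,\bar r^\lambda + (\gamma P^\lambda - \indicator)\psi^{-1}(\obar V),
\]
isolating an $\OO(\alpha^{-1})$ inhomogeneous forcing (the reward) from an $\alpha$-independent homogeneous part that vanishes at $\obar V = 0$. Pairing $\frac{\d}{\d t}\obar V_{w(t)}$ with $\obar V_{w(t)}$ in $\dtp{\,\cdot\,,\,\cdot\,}_0$ and repeating the Lyapunov computation of \tref{t:over-parametrized} (Eqs.~\eref{e:firstpart}--\eref{e:final1}), now with the metric perturbation of the previous step in place of \lref{l:perturbation}, I expect to reach
\[
\frac{\d}{\d t}\|\obar V_{w(t)}\|_0^2 \leq -c\,\|\obar V_{w(t)}\|_0^2 + \frac{C}{\alpha}\,\|\obar V_{w(t)}\|_0,
\]
where $c > 0$ stems from the contraction of $T^\lambda$ (\lref{l:tsitsiklis}): on the tangent space the homogeneous operator obeys $\dtp{f,(\gamma P^\lambda - \indicator)f}_\mu \leq (\gamma_\lambda - 1)\|f\|_\mu^2 < 0$, and $\|\cdot\|_0$, $\|\cdot\|_\mu$ are comparable on $T_{\hatt V(w(0))}\FFw$. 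Evaluated at $\|\obar V_{w(t)}\|_0 = \delta$ the bound reads $\delta(-c\delta + C/\alpha)$, which is strictly negative as soon as $\alpha > \alpha_0 := C/(c\delta)$, yielding the inward-pointing condition.

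The step I expect to be the main obstacle is the dissipativity estimate in the rank-deficient setting. Unlike \tref{t:over-parametrized}, here $D\hatt V\,D\hatt V^\top$ is not invertible on all of $\FF$ and $T^\lambda$ need not map into the tangent space $T_{\hatt V(w)}\FFw$, so the contraction must be run through the $\mu$-orthogonal projection onto $T\FFw$ and one must control the non-tangential components of $\obar V$. Because $\FFw$ is tangent to $T_{\hatt V(w(0))}\FFw$ at $\hatt V_{w(0)} = 0$, with deviation $\OO(\|\obar V\|_0^2)$ controlled by the Lipschitz derivative of $\hatt V$ (\aref{a:V}), these corrections enter only at cubic order $\OO(\|\obar V\|_0^3)$ and are absorbed into the quadratic dissipative term once $\delta$ is small. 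Finally, the $\OO(\alpha^{-1})$ offset of the true fixed point from the origin (the reward forcing) is precisely why inward-pointing holds only for a fixed-radius ball at large $\alpha$, rather than giving decay to $0$ --- which is all that forward invariance of $\BB_\delta^0(0)$ demands.
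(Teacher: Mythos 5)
Your proposal is correct and follows essentially the same route as the paper's proof: the paper likewise uses the Lyapunov function $\bar U(f)=\frac12\|f\|_0^2$ and shows it decreases on the boundary sphere $S_\delta^{r-1}$, splitting the residual via $T^\lambda V = \bar r^\lambda + \gamma P^\lambda V$ into an $\OO(\alpha^{-1})$ reward forcing plus a homogeneous part whose dissipativity comes from the nonexpansiveness of $P^\lambda$ (\lref{l:tsitsiklis}), controlling the metric perturbation by the analogue of \lref{l:perturbation}, and absorbing the cubic-order nonlinear corrections (Taylor expansion of $\psi^{-1}$ around $0$, using $V_{w(0)}=0$) into the quadratic term for $\delta$ small. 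Your anticipated main obstacle and its resolution match the paper's treatment exactly, and the final choice ``$\delta$ small first, then $\alpha$ large'' is the same.
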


\begin{lemma}\label{l:convexity} There exists $\ell > 0$, $\delta > 0$ and $\alpha_0 > 0$ such that for all $\alpha > \alpha_0$ and all geodesics $\gamma(s)$ contained in the ball $\BB_\delta^0(0) \subseteq \obar \FF_0$, the function
  \begin{equ}\label{e:convexity}
    {\dtp{\gamma'(s),X(\gamma(s))}_{0} - \ell s \dtp{\gamma'(0),\gamma'(0)}_{0}}\,,
  \end{equ}
  is strictly decreasing in $s$.
\end{lemma}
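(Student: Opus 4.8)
The plan is to exploit that the proof of \tref{t:under-parametrized} equips $\obar\FF_0$ with a \emph{constant} scalar product $\dtp{\,\cdot\,,\,\cdot\,}_0$, so that its geodesics are straight segments $\gamma(s)=\gamma(0)+s\,\xi$ with velocity $\xi:=\gamma'(s)\equiv\gamma'(0)$ constant and $\gamma''\equiv 0$. Differentiating \eref{e:convexity} in $s$ then collapses the claim to a single infinitesimal estimate: it suffices to exhibit $c>0$ with $\dtp{\xi,DX(\obar v)\xi}_0\le -c\,\|\xi\|_0^2$ for every $\obar v$ in the ball and every direction $\xi\in\Pi_r$, since then the $s$-derivative of \eref{e:convexity} is $\dtp{\xi,DX(\gamma(s))\xi}_0-\ell\|\xi\|_0^2\le-(c+\ell)\|\xi\|_0^2<0$ for any $\ell>0$. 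Because the charts $\phi,\psi,\pi_r$ furnished by the rank theorem are only $C^{1,1}$, I would state this as a one-sided Lipschitz (monotonicity) bound on $X$ restricted to the segment: the quadratic-form estimate holds at a.e.\ $s$ by Rademacher's theorem, and integrating it along the line gives the required strict decrease without ever invoking a genuine second derivative.

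The core computation is the linearization of the vector field $X$ generating \eref{e:vf}. Writing $X(\obar v)=\tfrac1\alpha\dtp{A(\obar v),\,T^\lambda(\alpha\psi^{-1}(\obar v))-\alpha\psi^{-1}(\obar v)}_\pi$ with $A(\obar v):=D\pi_r D\phi D\phi^\top D\pi_r^\top(D\psi^{-1})^\top$, and using the affine structure $T^\lambda V=\bar r^\lambda+\gamma P^\lambda V$, the derivative $DX(\obar v)\xi$ splits into three pieces: the leading term $I_3=\dtp{A,(\gamma P^\lambda-\indicator)D\psi^{-1}\xi}_\pi$ in which the derivative hits the argument, and two error terms $I_1=\tfrac1\alpha\dtp{DA\,\xi,\bar r^\lambda}_\pi$ and $I_2=\dtp{DA\,\xi,(\gamma P^\lambda-\indicator)\psi^{-1}(\obar v)}_\pi$ in which it hits $A$. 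The main term is treated exactly as in the over-parametrized proof: by construction $\dtp{\,\cdot\,,\,\cdot\,}_0$ is the value at the base point of the pushforward metric, so $D\psi^{-1}(0)$ is a $\dtp{,}_0$-to-$\|\cdot\|_\mu$ isometry of $\Pi_r$ onto $T_{\hatt V(w(0))}\FFw$; setting $u:=D\psi^{-1}(0)\xi$ gives $\dtp{\xi,I_3}_0=\dtp{u,(\gamma P^\lambda-\indicator)u}_\mu$ up to a metric-drift correction, and the nonexpansiveness of $P^\lambda$ in $\|\cdot\|_\mu$ (\lref{l:tsitsiklis}, via the Jensen bound \eref{e:pcontraction}) together with Cauchy--Schwarz yields $\dtp{u,(\indicator-\gamma P^\lambda)u}_\mu\ge(1-\gamma)\|u\|_\mu^2=(1-\gamma)\|\xi\|_0^2$. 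Thus the stabilizing sign of the quadratic form is supplied entirely by the contraction of the \abbr{TD} operator, giving $\dtp{\xi,I_3}_0\le-(1-\gamma)\|\xi\|_0^2$ up to corrections.

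The remaining work is to show the three corrections are negligible once $\alpha$ is large and $\delta$ small. The metric drift in $I_3$ is governed by the perturbation-of-the-metric argument of \lref{l:perturbation}: as $\obar v\in\BB_\delta^0(0)$ corresponds to parameters $w$ within $\mathcal O(\delta)$ of $w(0)$ (the charts are bi-Lipschitz), $A(\obar v)$ differs from $A(0)$ by $\mathcal O(\delta)$ in operator norm, contributing $\mathcal O(\delta)\|\xi\|_0^2$. For $I_1$, boundedness of $\bar r^\lambda$ (finite reward, $P^\lambda$ nonexpansive) and of $DA$ over the compact ball give $|\dtp{\xi,I_1}_0|\le(C_1/\alpha)\|\xi\|_0^2$. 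For $I_2$ the decisive point is $\psi^{-1}(0)=\hatt V_{w(0)}=0$, whence $\|\psi^{-1}(\obar v)\|_\mu\le L\delta$ on the ball and $|\dtp{\xi,I_2}_0|\le C_2\delta\|\xi\|_0^2$. Collecting, $\dtp{\xi,DX(\obar v)\xi}_0\le\bigl(-(1-\gamma)+C_1/\alpha+C\delta\bigr)\|\xi\|_0^2$; choosing first $\delta$ and then $\alpha_0$ so both corrections fall below $(1-\gamma)/2$ gives $\dtp{\xi,DX\xi}_0\le-\tfrac{1-\gamma}{2}\|\xi\|_0^2$ for all $\alpha>\alpha_0$, and any $\ell\in(0,\tfrac{1-\gamma}{2}]$ makes \eref{e:convexity} strictly decreasing (the rate $1-\gamma$ can be sharpened to $1-\gamma_\lambda$ using the full \lref{l:tsitsiklis}).

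I expect the main obstacle to be the bookkeeping of the chart-dependent operator $A(\obar v)$ and, in particular, verifying that at the base point the chosen constant metric really renders $D\psi^{-1}(0)$ an isometry onto $T_{\hatt V(w(0))}\FFw$, so that the abstract contraction of $T^\lambda$ transfers verbatim to the quadratic form $\dtp{\xi,DX\xi}_0$ and the nonlinear (manifold) corrections enter only through $\alpha^{-1}$ and $\delta$. The low regularity of the charts is a secondary nuisance, circumvented by working with the integrated one-sided Lipschitz estimate in place of a pointwise Hessian.
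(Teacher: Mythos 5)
Your strategy is essentially the paper's: pass to the differential (one-sided Lipschitz) version of the claim, use that geodesics of the constant metric $\bar g_0$ are straight segments so $\gamma''\equiv 0$, split the derivative of $X$ into the linearized \abbr{TD} term plus terms where the derivative falls on the chart-dependent operators, extract the negative sign from the nonexpansiveness of $P^\lambda$ in $\|\cdot\|_\mu$ (\lref{l:tsitsiklis}), and absorb the remainders into $\OO(\alpha^{-1})$ (the $\bar r^\lambda$ term) and $\OO(\delta)$ contributions (everything proportional to $\psi^{-1}(\gamma(s))$, small because $\psi^{-1}(0)=\hatt V_{w(0)}=0$, plus the metric drift handled as in \lref{l:perturbation}); even the a.e./distributional reading of the derivative for $C^{1,1}$ charts matches the paper. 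The decompositions differ only cosmetically: you lump the metric factor and $(D\psi^{-1})^\top$ into one operator $A$ and differentiate it wholesale, while the paper first factors $\bar g_0\bar g_{\gamma(s)}^{-1}=\indicator+\tilde g_{\gamma(s)}$ (\eref{e:dgX}) and then treats the Hessian term (\eref{e:hessian}) and the drift terms separately; the content is the same.

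The one step that is wrong as written is the isometry claim in your main term, which you yourself flag as the main thing to verify. It is \emph{not} true that $D\psi_0^{-1}$ carries $(\Pi_r,\|\cdot\|_0)$ isometrically into $(\FF,\|\cdot\|_\mu)$: the constant tensor $\bar g_0=(D\pi_r D\phi D\phi^\top D\pi_r^\top)^{-1}$ is the pushforward of the \emph{Euclidean} metric of parameter space under $\pi_r\circ\phi$, and it has no relation to the pullback of $\langle\cdot,\cdot\rangle_\mu$ under $\psi^{-1}$ beyond equivalence of norms (this is precisely what the paper's footnote introducing the constant $\kappa$ records). Hence your equality $\|u\|_\mu^2=\|\xi\|_0^2$, and with it the clean constant $1-\gamma$, fails. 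The repair is exactly what the paper does: by the constant-rank assumption $D\psi^{-1}$ is nondegenerate on a small enough ball, so by \eref{e:partial} one has $\|D\psi^{-1}_{\gamma(s)}\xi\|_\mu^2\ge\kappa^{-2}(\sigpsi)^2\|\xi\|_0^2$, and together with \eref{e:partial2} the main term is bounded by $(\gamma-1)\kappa^{-2}(\sigpsi)^2\|\xi\|_0^2$. Since your argument only needs \emph{some} positive contraction constant to dominate the $C_1/\alpha+C\delta$ errors, substituting this bound for the isometry lets the rest of your proof go through verbatim, landing on the paper's final estimate with constant $\frac{\gamma-1}{2\kappa^2}(\sigpsi)^2$ in place of your $-(1-\gamma)/2$.
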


\begin{remark}
  The proof of \tref{t:under-parametrized} can be generalized to the case where
  %the initial condition
  $V_0$ is not identically $0$ but {within a $\|\,\cdot \,\|_\mu$-ball of radius $\rho(\alpha)$ around $0$} for $\rho(\alpha)$ going to $0$ with $\alpha \to \infty$. This generalization, however, requires the map $V$ to be \emph{uniformly} Lipschitz smooth for $w \in \WW_0$. %Among other things,
  This extension allows to explicitly cover the training of randomly initialized, single layer neural networks.
\end{remark}

{The \emph{local} optimality in \tref{t:under-parametrized} is a consequence of the approximately linear behavior of lazy learners that we leverage in our proofs. Indeed, it is known \citep{hongler18} that models in this regime behave asymptotically like kernel methods, and their expressive power is therefore limited to the corresponding Reproducing Kernel Hilbert Space (\abbr{RKHS}).

In this sense, in the case of randomly initialized, wide networks it is natural to compare these models to
random feature models with an appropriate set of feature maps \citep{YehudaiShamir:19}. For a detailed comparison of random feature models and neural networks in the lazy training regime we refer the reader to \cite{Montanari:19c}. Furthermore, we refer to \citet{bach17} for a general discussion of the approximating power of neural network models in this lazy regime, highlighting their limits \abbr{wrt} their non-lazy counterparts.}

\subsection{Mean-field regime} \label{s:mf}

To better understand the limitations of training wide neural networks in the lazy training regime, we contrast their behavior with the one of networks that do not display such approximately linear behavior through training.
%More specifically, we now discuss an alternative scaling of neural network parameters that does not display, during training, the linearized behavior described in previous sections. These models  are commonly referred to as mean-field models.
These models, introduced below, correspond to a different scaling of the model parameters at initialization and are commonly referred to as mean-field models.
While results on the convergence of such mean-field models are obtained in a more restricted setting detailed below, they provide an effective term of comparison to the linearized regime explored in previous sections.

{In the following we set, for clarity of exposition, $\lambda = 0$ and we assume that $|\mathcal S| = \infty$ in order to be in the under-parametrized regime of interest.
We further restrict to the setting where the function $\psi$ is $h$-homogeneous for $h\geq 1$ in at least one of its parameters. A function $f$ is called $h$-homogeneous if $f(\xi x) = \xi^h f(x)$ for all $\xi \in \Rr$. One simple case where this holds is when
  {$\omega = (\omega_0, \bar \omega) \in \Rr \times \Theta$ for $\Theta \subseteq \Rr^{m-1}$ and $\psi(s; \omega) = \omega_0 \phi(x;\bar \omega)$} for a certain, usually nonlinear function $\phi~:~\SS \times\Theta \to \FF$, so that $\psi$ is $1$-homogeneous in $\omega_0\in \Rr$. This is for instance the case in the setting of single (hidden) layer neural networks, radial functions, and linear models,
where $\phi$ represents, respectively, the nonlinear activation function, the radial basis function or the model features.}

\paragraph{The mean-field regime} By the assumed structure of the approximator, we represent $V_w$ in \eref{e:meanfield} as
$V_{w}(s) = V_{\nu^{(N)}}(s) := \int_{\Omega} \psi(s;\omega ) \nu^{(N)}(\d \omega)$ {where }$\nu^{(N)}(\d \omega ) = \frac 1 N \sum_{i=1}^N \delta_{\wii}(\d \omega) \in \mathcal M_+(\Omega)$.
This empirical measure representation removes the symmetry of the approximating functions under permutations of the weight indices, while behaving naturally in the limit $N \to \infty$, when $\nu^{(N)} \to \nu $ weakly, so that $V_{\nu^{(N)}} \to V_{\nu}$.
\edit{Upon rescaling time as $t \leftarrow N t$, } the evolution  {of the measure $\nu\in \mathcal M_+(\Omega)$ }
%of the weights $\{\wii\}$
\edit{under \eref{e:tduc}} is then governed by a \emph{mean-field} transport partial differential equation of the Vlasov type% derived in \cite{MeiMonNgu18, RotVE18, ChizatBach18} in the supervised learning case
, given by
\begin{equ}\label{e:mfpde}
  \frac {\d}{\d t} \nu_t(\omega) = \mathrm{div}\pc{\nu_t(\omega) \int_{\SS\times \SS} \nabla_\omega \psi(s;\omega) \delta(s,s',\nu_t) P(s,\d s')\mu(\d s)}
\end{equ}
for the \abbr{TD}-error $\delta$ from \eref{e:tde1}, \ie $
  \delta(s,s',\nu) := {r(s,s') + \gamma \int\psi(s';\omega')\nu(\d \omega') - \int\psi(s;\omega')\nu(\d \omega')}$. {Analogous dynamics equation in the supervised learning case has been derived
  in \cite{MeiMonNgu18, RotVE18, ChizatBach18}.}

%We establish in \tref{t:mf} below that, provided some asssumptions stated below on the approximating power of the nonlinearity $\phi$ and on the support of the fixed point $\nu^*$ of the dynamics \eref{e:mfpde}, such fixd point corresponds, when mapped in $\FF$, to the sought for value function $V^*$.
 {To state the main result of this section, the optimality of fixed points of \eref{e:mfpde} we need the following}
\begin{assumption} \label{a:mf} Assume that $\omega = (\omega_0, \bar \omega) \in \Rr \times \Theta = \Omega$ for $\Theta =  \Rr^{m-1}$ and $\psi(s; \omega) = \omega_0 \phi(s;\bar \omega)$ with
  \begin{enumerate}[label=\alph*)]
    \item {\rm Regularity of $\phi$:} %the nonlinearity $\phi$ is (Fr\'echet) differentiable. Furthermore t
    %there exists a family of nested nonempty closed convex sets $\{Q_r\}_{r>0}$ in $\Theta$ s.t.
    $\phi$ is bounded, differentiable and $D\phi$ is Lipschitz.  {Also, for all $f \in \FF$ the regular values of the map {$\bar \omega \mapsto g_f(\bar \omega):= \dtp{f,\phi(\,\cdot\,; \bar \omega)}$} are dense in its range, and % for all $f \in \FF$,
     $g_f(r\bar \omega) $ converges in $C^1(\{\bar \omega \in \Theta:\|\bar \omega\|_2=1\})$ as $r \to \infty$ to a map $\bar g_f(\bar \omega)$ whose regular values are dense in its range.}
    %on each $Q_r$ and that there exists $C_1, C_2 > 0$ such that $\sup_{u\in Q_r }
 %$\|d \psi\|_\infty \leq C_1 +C_2r$ for all $r > 0$
    \item {\rm Universal approximation}: the span of $\{\phi(\cdot, \bar \omega)\,:\,\bar \omega \in \Theta \}$ is dense in $L^2(\mathcal S,\mu)$
    \item {\rm Support of the measure}: {There exists $r_0 > 0$ s.t. {the %support of the fixed point $\nu^*$ of \eref{e:mfpde} with
        support of the initial condition $\nu_0$} is contained in} $\mathcal Q_{r_0} = [-r_0,r_0]\times \Theta$ and separates $\{-r_0\}\times \Theta$ from $\{r_0\}\times \Theta$, {\ie if any continuous path connecting $\{-r_0\}\times \Theta$ to $\{r_0\}\times \Theta$ intersects the support of $\nu_0$.}
  \end{enumerate}
\end{assumption}

\aref{a:mf} a) is a common,  {technical} regularity assumption  {(\eg \cite[Assumption 3.4]{ChizatBach18}),} ensuring that \eref{e:mfpde} is well behaved and controlling the growth, variation and regularity of $\phi$. Alternative assumptions on the case $\Theta \neq \Rr^{m-1}$ are given in the appendix.
% and that the nonlinearity is not growing too fast at infinity.
 \aref{a:mf} b) speaks to the approximating power of the nonlinearity, assumed to be powerful enough to approximate any function in $L^2(\mathcal S,\mu)$,  {while c) guarantees that the initial condition is such that the expressivity from b) can actually be exploited. \edit{Universal approximation assumptions similar to \aref{a:mf} b) were made in \cite{lu20,nguyen20}}.
 %With these assumptions, in the appendix we prove:
 }

Before discussing the optimality properties of the dynamics \eref{e:mfpde}, we show that this \abbr{pde} accurately describes the \abbr{td} dynamics of a sufficiently wide, single layer neural network. To this aim, we let $\mathcal P_2(\Omega)$ be the space of probability distributions on $\Omega$ with finite second moment.

 \begin{proposition}\label{t:particles}
   Let \aref{a:mf} hold and let $w_t^{(N)}$ be a solution of \eref{e:tduc} with initial condition $w_0^{(N)} \in \mathcal W = \Omega^N$. If $\nu_0^{(N)}$ converges to $\nu_0 \in \mathcal P_2(\Omega)$ in Wasserstein distance $W_2$ as $N \to \infty$ then $\nu_t^{(N)}$ converges, for any $t>0$, to the unique solution $\nu_t$ of \eref{e:mfpde}.
 \end{proposition}
 We note that by the law of large numbers for empirical distributions, the condition of convergence of $\nu_0^{(N)}$ to $\nu_0$ is \eg satisfied when $w_0^{(i)}$ are drawn independently at random from $\nu_0$.
The proof of the above result is largely standard under the given assumptions, and was given in a setting similar to the one at hand in \cite{wang20td}. For completeness, we provide a sketch of the proof in the appendix. The idea of the proof is a canonical propagation of chaos argument \citep{sznitman91}, which we adapt from \cite{ChizatBach18} to the present context.
In the absence of an energy functional in the \abbr{td} setting, in order to establish the necessary regularity of the vector field we prove
\begin{lemma}\label{l:bounded}
  For any $\nu_0$ with $\int \omega_0^2 \nu_0(\d \omega) \leq \infty$ there exists $C_V>0$ such that, for any $t>0$ we have $\|V_{\nu_t}\|_\mu < C_V$.
\end{lemma}
To prove the above result we leverage the homogeneity of the approximator, slightly adapting the proof of \cite[Theorem 1]{BrandfonbrenerBruna} to the present setting. We note that the above result is of independent interest in that it rules out divergence in predictor space of the mean-field dynamics. We are now ready to state the main optimality result of this section:
\edit{\begin{theorem}[Mean-field optimality]\label{t:mf}
  Let \aref{a:mf} hold and $\nu_t$ given by \eqref{e:mfpde} converge
  to $\nu^*$, then $V_{\nu^*} = V^*$ $\mu$-a.e.
\end{theorem}}
Thus if the \abbr{TD}-learning dynamics \eref{e:mfpde} converges to a fixed point,
{it is a global minimizer.} \edit{To prove this result, we first connect the optimality of a fixed point with the support of the underlying measure in parameter space.
More specifically, we show in \lref{p:mf} that by the expressivity of $\sigma$, the transport vector field of suboptimal fixed points of the dynamics \eref{e:mfpde} cannot vanish everywhere in parameter space, so that a measure with sufficient support cannot correspond to a suboptimal fixed point.
%\noindent This intermediate result is obtained in two steps: we first show that, by homogeneity of $\psi$ and \aref{a:mf} c), the velocity field of \eref{e:mfpde} must vanish a.e. in $\Omega$. Then, by \aref{a:mf} b) we deduce that a fixed point $\nu^*$ cannot result from the orthogonality between $DV$ and $\delta$, implying $\delta = 0$.
}

%In light of \lref{p:mf}, one important technical step in the proof of \tref{t:mf} is to}
\edit{We then show in \lref{l:support} that the \abbr{TD} dynamics in the mean-field regime preserves \edit{such sufficient notion of support of the measure (\aref{a:mf} c))} throughout training. This is true for any finite time by topological arguments: the separation property of the measure cannot be altered by the action of a continuous flow such as \eref{e:mfpde}. }
%We further show that this property is maintained in the limit $t \to \infty$, so that if the initial condition $\nu_0$ satisfies \aref{a:mf} c) {then so is the fixed point.} }%\aanote{for Jianfeng: new description of the proof}
% avoids local minima,
      {Leveraging the above partial results, we finally prove in \lref{l:spurious}, that spurious fixed points are avoided by the \abbr{td} dynamics \eref{e:mfpde} when initialized properly. To establish this we argue by contradiction: assuming that we are approaching such a spurious fixed point $\tilde \nu$ at time $t_0$, we show in \lref{l:boundong} that the velocity field will change little for any $t>t_0$. %In particular, it follows that in this regime the dynamics of \eref{e:mfpde} acquire an \emph{approximate} gradient structure.
      On the other hand, by the homogeneity of $\psi$ and by \aref{a:mf} c), we show that by \lref{l:support} a positve amount of measure $\tilde \nu$ will fall in a forward invariant region \edit{-- which exists by \lref{p:mf} --} where its $\omega_0$ component will grow linearly in $t$, thereby eventually contradicting the assumption that $\tilde \nu$ is a fixed point of \eref{e:mfpde}.

\section{Numerical examples}\label{s:numerical}

\subsection{A divergent nonlinear approximator}\label{s:tsitsiklis}

We illustrate the convergence properties of \abbr{TD} learning in the lazy training regime in the under-parametrized case by applying it to the classical framework of \citet[Section X]{tsitsiklis}. This reference gives an example of a family of nonlinear function approximators that diverge when trained with the \abbr{TD} method. The intuition behind this counterexample is that one can construct a manifold of approximating functions $\FFw$ in the form of a spiral, with the same orientation as the rotation of the vector field induced by the \abbr{TD} update in the space of functions. By choosing the windings of the spiral to be dense enough, the projection of the \abbr{TD} vector field follows the spiral in the outward direction, leading to a divergence of the algorithm, as displayed schematically in \fref{f:spiral1}. More specifically, consistently with \citet{tsitsiklis}, we parametrize the manifold $\FFw$ as $V_\theta := e^{\hat \epsilon \theta} (a \cos( \hat \lambda \theta ) - b \sin (\hat \lambda \theta)) - V^*$ for $a = (10,-7,-3)$, $b = (2.3094,-9.815,7.5056)$, $\hat \epsilon = 0.01$, $\hat \lambda = 0.866$. We set $\gamma = 0.9$ and step-size
$\beta_t \equiv 2\times 10^{-3}$, while the underlying Markov chain is defined by the transition matrix $P_{ij} = (\delta_{j,\mathrm{mod}(i,3)+1}+\delta_{i,j})/2$, where $\delta_{i,j}$ is the Kronecker delta function and equals $1$ if $i = j$ and $0$ else. We note that the step-size does not affect the convergence properties of the algorithm, as argued in \citet{tsitsiklis}, where the immediate reward was set to $\bar r = (0,0,0)$.
Note that, as realizing the conditions of \tref{t:under-parametrized} starts the simulation at the solution $V^* = (0,0,0)$, we shift both the solution and the manifold of approximating functions by the same vector in the embedding space, leaving the new solution $V^*= - V_0 = -a$ at the center of the spiral, \ie realized at $\theta = -\infty$. This corresponds to choosing an average reward $\bar r = (-6.85, 8.35, -1.5)$. We note that by the affine nature of the \abbr{TD} update, this change in $\bar r$ results in a global shift of the \abbr{TD} vector field in $\FF$ and does not affect the update of $\theta$. In particular, this means that the \abbr{TD} update remains \emph{divergent} for every initial condition different than the solution $V^*$.
 \begin{figure}
\centering
\subfigure[$\alpha = 1$]{
  \includegraphics[width=.30\linewidth]{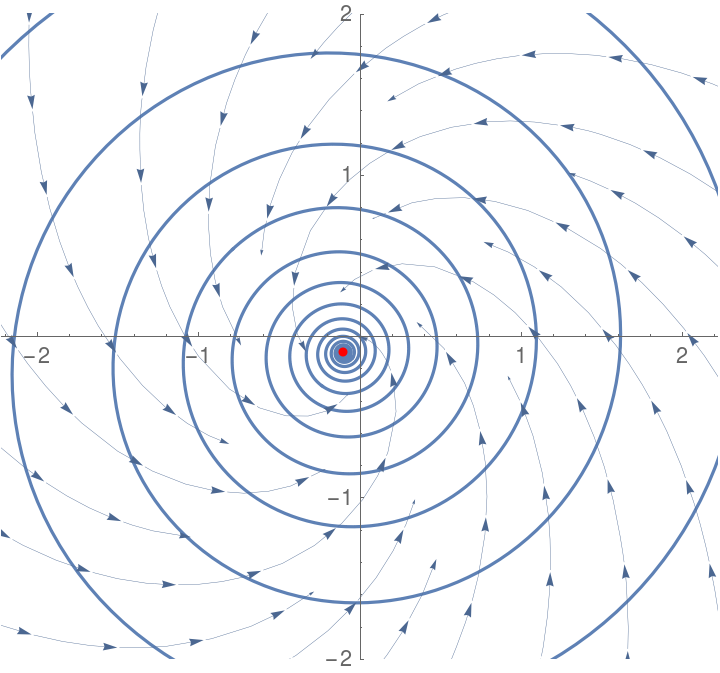}
  \label{f:spiral1}
}
\subfigure[$\alpha = 100$]{
  \includegraphics[width=.30\linewidth]{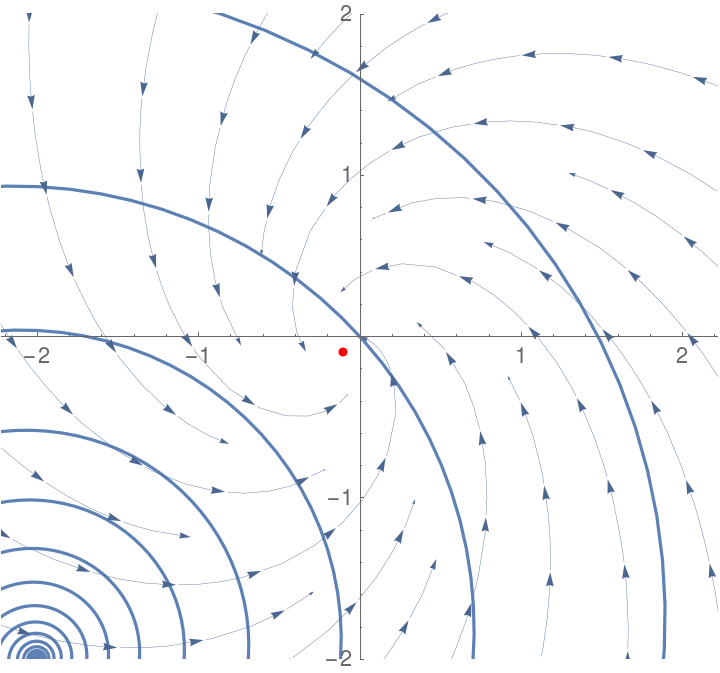}
  \label{f:spiral2}
}
\subfigure[Training dynamics]{
  \includegraphics[width=0.35\linewidth]{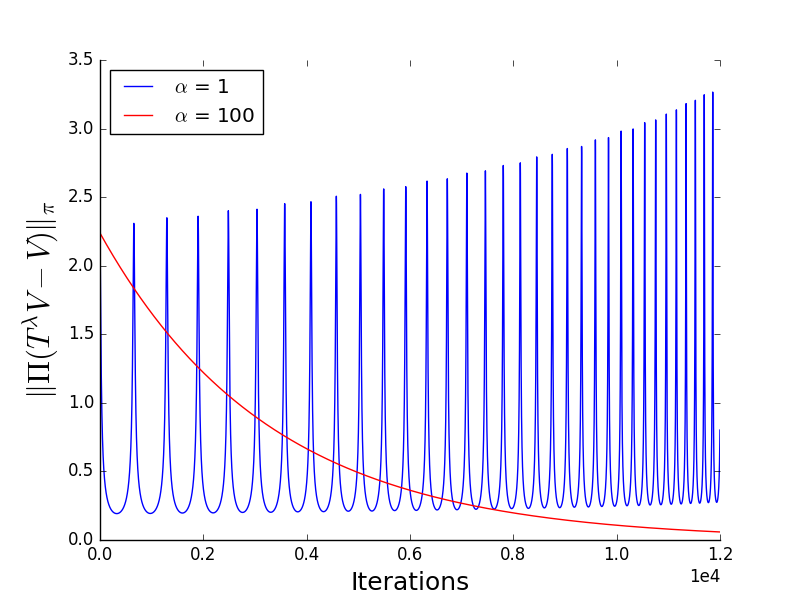}
  %\caption{Example from \citet{tsitsiklis}}
  \label{f:result1}
}
 \caption{Schematic representation of the manifold $\FFw$ for the
  example in \sref{s:tsitsiklis} before (a) and after (b) scaling of
  $\alpha$. The underlying vector field represents the \abbr{TD} error
  $\delta(V)$ from \eref{e:tde1}, whose projection on $\edit{{\mathcal T}_{V}} \FFw$
  gives the dynamics of the \abbr{TD} update in $\FFw$.  In (a) this
  projection points ``outwards'' along the spiral, while (b) it has a
  fixed point close to $0$. The scaling yields an effective
  ``linearization'' of the manifold around $0$. The red point
  marks the global fixed point of the vector field. In (c), we plot the $\mu$-norm of the projected \abbr{TD} error $\Pi(T^\lambda V - V)$. This quantity measures the increments of the model parameters during training and vanishes at a local minimum of the \abbr{TD} dynamics. We see that the algorithm diverges for $\alpha = 1$ (blue curve), but converges to a local minimum for $\alpha = 100$ (red curve).
  }
\label{f:spiral}
\end{figure}
We run the \abbr{TD} update in the off-centered situation both for values of $\alpha = 1$ (the classical, divergent regime) and $\alpha = 100$. As explained in the previous sections, this scaling of the approximating function makes the \abbr{TD} update \emph{convergent}, as displayed in \fref{f:result1}. The intuition behind the convergence of the algorithm is outlined in \fref{f:spiral}: when $\alpha$ is large we are in an almost linear regime where the \abbr{TD} update converges to a \emph{local} minimum of the dynamics.

\subsection{Single layer neural networks} \label{s:nn}

\begin{figure}[t]
\centering
% \subfigure[Example from \citep{tsitsiklis}]{
%   \includegraphics[width=0.43\linewidth]{figures/boh.png}
%   %\caption{Example from \citet{tsitsiklis}}
%   \label{f:result1}
% }
\subfigure[Lazy training]{
  \includegraphics[width=0.43\linewidth]{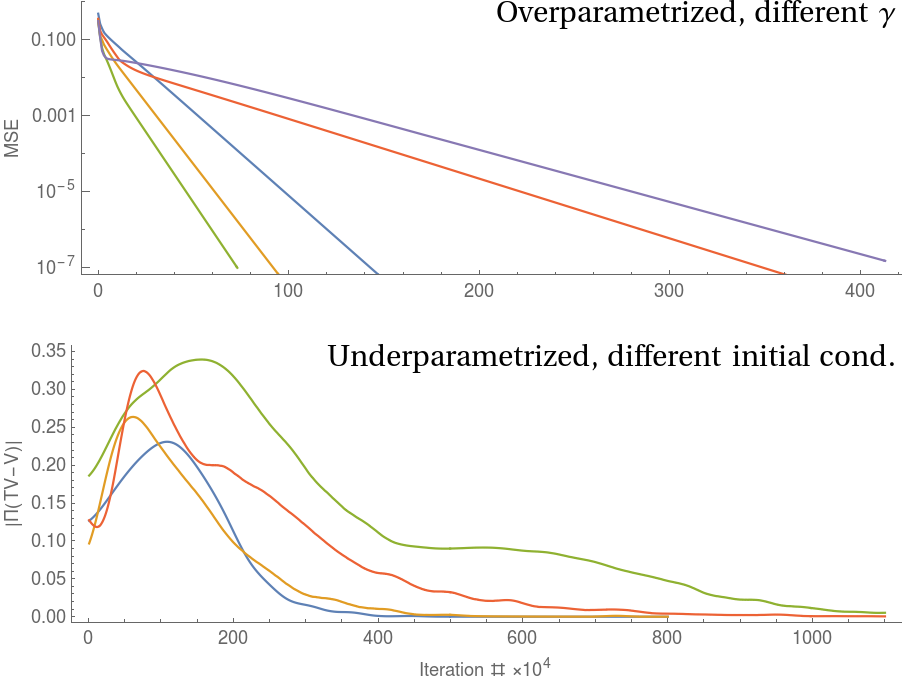}
  %\caption{Neural networks simulation}
  \label{f:result2}
}
\subfigure[Mean-field]{
  \includegraphics[width=0.43\linewidth]{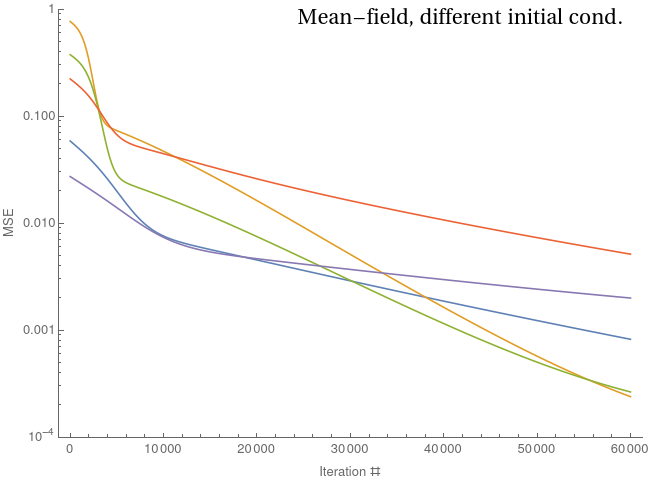}
  %\caption{Example from \citet{tsitsiklis}}
  \label{f:fig2}
}
\caption{Training results %of nonlinear value function approximation with \abbr{TD} learning f
or the examples described \sref{s:nn} in the lazy (a) and mean-field (b) regimes. In (a, top) we plot the \abbr{mse} of single layer neural network during training in the over-parametrized regime ($N = 100, d = 30, \alpha = 500$) for different choices of $\gamma$
($0.8,0.83,0.85,0.87,0.9$), showing exponential convergence (at different rates) to the global minimum claimed in \tref{t:over-parametrized}.
 In (a, bottom) we plot the norm of the projected \abbr{TD} error for a neural network in the under-parametrized regime ($N = 10, d = 50, \alpha = 100, \gamma = 0.9$) for different initial conditions, showing convergence to a local fixed point. \aa{In (b) we again plot the \abbr{MSE} in the mean-field regime ($N = 350, d = 700, \alpha = 1, \gamma = 0.9$) for different initial conditions. This plot displays qualitatively different convergence behaviors, \edit{with} no apparent upper bound on the convergence rate, indicative of the nonlinear character of the model. Code available here: \cite{code}. }
 }
\label{f:results}
\end{figure}

We show that the regime of study arises naturally in one hidden layer neural networks for a certain family of initialization.
We consider the example of ReLu activation, \ie when the model is given by
$V_w(s) =  \sum_{i = 1}^N a_i \max(0, b_i \cdot s - c_i)\,,$
for $s \in \Rr^m$ and $N$ distinct $(m+2)$-dimensional vectors $w_i = (a_i,(b_i)_1,\dots, (b_i)_m,c_i)_{i \in (1, \dots, N)}$. Typical initialization of the weights of the above model is of the form $ a_i \overset{iid}\sim \NN(0,1/\sqrt{N})$, $ (b_i)_j \overset{iid}\sim \NN(0,1/\sqrt{m})$ for all $j$ and $ c_i \overset{iid}\sim \NN(0,1)$. However, by the linearity of the activation function in $a_i$ and by the rescaling property of normal distribution this is equivalent to writing $\alpha V_w(s) = \alpha \frac{1}{N} \sum_{i = 1}^N a_i \max(0, b_i \cdot s - c_i)$
% \begin{equ} \label{e:relu1}
%   \alpha V_w(s) = \alpha \frac{1}{N} \sum_{i = 1}^N a_i \max(0, b_i \cdot s - c_i)\,,
% \end{equ}
for an $N$-dependent $\alpha(N) = \sqrt{N}$ (diverging in $N$), $a_i \overset{iid}\sim \NN(0,1)$, $ (b_i)_j \overset{iid}\sim \NN(0,1/\sqrt{d})$ and $ c_i \overset{iid} \sim \NN(0,1)$. %This natural scaling is studied in depth in \citet{ChizatBach18,MeiMonNgu18,RotVE18}.
 Therefore, this common choice of initial conditions  implicitly starts the training of the above model in the lazy regime \citep{Montanari:19c}. We train the network by \abbr{TD} learning \eref{e:tduavscaled} with fixed step-size $\beta_t \equiv 10^{-3}$ both in the over- and under-parametrized regime. To do so, we draw an objective function $V^*$ randomly with distribution $ V^*(s) \overset{iid}{\sim} \NN(0,1)$ for all $s \in \SS$ on a grid of $d$ equally spaced points on the interval
 $[-1,1]$.
We then compute the corresponding average reward by solving
%the \abbr{TD} equation:
$ \bar r =(\indicator -\gamma P )V^*$, and train the model \eref{e:tduavscaled} for $\lambda = 0$, $\gamma = 0.9$ (when not specified otherwise) with transition matrix $P_{ij} = (\delta_{j,\mathrm{mod}(i,d)+1}+\delta_{i,j})/2$.  To respect the conditions of \tref{t:under-parametrized}, we initialize half  {of the model parameters} as explained above, while the other half is obtained from the first by replicating the values of $b_i, c_i$ and inverting the one of $a_i \to - a_i$. This ``doubling trick'' introduced in \citet{cb182} produces a neural network with $V_{w(0)} \equiv 0$ and randomly initialized weights with the desired distribution. We consider situations where $N = 10$, $d = 50$ (under-parametrized, taking $\alpha = 100$) and $N = 100$, $d = 30$  (over-parametrized, with $\alpha = 500$), and plot the convergence to local, respectively global minima in \fref{f:result2}.

To illustrate the convergence properties of neural networks in the mean-field regime as presented in \tref{t:mf} we repeat the above experiment taking $\alpha = 1$. In this setting, we consider a model with parameters $N = 350$, $d = 700$, resulting in the under-parametrized regime. In this case, to ensure that the value function to be learned can be approximated by the finite-width network we consider a target value function $V^*(s)$ given by a neural network of the form \eref{e:meanfield} with $N^* = 4$, initialized as $a_i^* \overset{iid}\sim \NN(0,1)$, $ (b_i^*)_j \overset{iid}\sim \NN(0,1/\sqrt{d})$ and $ c_i^* \overset{iid} \sim \NN(0,1)$. The results of these simulations are displayed in \fref{f:fig2}. We notice that, while the MSE converges to zero, the rate of decay of this quantity does not display the regularity observed in the over-parametrized, lazy regime, numerically corroborating the hypothesis that the convergence of mean-field models is not \emph{uniformly} linear.

% \begin{figure}[t]
% \centering
%
% \caption{Training results %of nonlinear value function approximation with \abbr{TD} learning f
% or the examples described in \sref{s:nn} (b) for mean-field dynamics. \aa{add description and figure}
%  }
% \label{f:fig2}
% \end{figure}

\section{Conclusions and future work}\label{s:conclusion}

In this work we have discussed the convergence and optimality properties of the
\abbr{td} learning algorithm with wide neural networks as function approximators, comparing the effect of different initialization procedures on the dynamics of the algorithm and on the limiting approximator, both in the under- and over-parametrized setting.

In the lazy regime, the
algorithm behaves essentially like a linear approximator spanning the
tangential space of the approximating manifold {(in function space)} at initialization. As
such, the training converges exponentially fast with probability one
to the global minimum or a local fixed point depending on the
codimension of the approximating manifold in the search space.
 %This guarantees convergence with little parametric displacement.
% however,
% comes at a price:  {On one hand the approximating power of
%   under-parametrized cases does not seem to be the comparable to the
%   astonishing results of AlphaGo, even though we note that the number
%   of parameters in our training experiments were far from the ones
%   used in such monumental project. On the other hand, it appears
%   \citep{cb182} that even in the over-parametrized case, the
%   generalization properties of neural networks trained in the lazy
%   regime are not satisfactory.}  note{negative comments, maybe can
%   be rephrased?}\jlnotes{I am not exactly about the over-parametrized
%   regime in reinforcement learning; since no matter what, we have
%   obtained the true $V_{\ast}$; we don't really have an issue of
%   generalization here, I think.} This
This phenomenon can be understood as an effect of the
linearized regime in which the neural networks are trained which
reduces them, in the limit, to a kernel method \citep{hongler18}. This somewhat limited expressivity is reflected in the \emph{local} optimality of the fixed points of the \abbr{td} dynamics in the lazy regime.

We contrast this behavior by proving \emph{global} optimality guarantees for the fixed points of the same models when trained in the mean-field scaling limit, the highly nonlinear dynamics of which make convergence (and convergence rate bounds) hard to establish.
In this sense, we argue that convergence of lazy models comes at the expense of their expressivity.
  {Nonetheless, the results proven in this work emphasize the interest of the lazy regime in the framework of deep reinforcement learning, where models often suffer from divergent behavior especially during early stages of training. Optimal results are expected given prior knowledge that the value function belongs to a known \abbr{RKHS}: using a nonlinearity whose \abbr{ntk} includes such \abbr{rkhs} would result in both convergence and optimality guarantees of the learned value function.}

%However, an exhaustive study of the power and limitations of models trained in the lazy regime, as done in \citet{YehudaiShamir:19} for random feature models, is still lacking.}
%This phenomenon, however, still lacks exhaustive theoretical understanding\\

    % We contrast this limited approximating behavior with the one obtained by the same family of learners when traied under a different scaling: the mean-field scailng. In this regime, the model has a bigger approximating power, as discussed in \cite{bach}. We verify this intuition in the setting of reinforcement learning by proving that there are no spurious minima of the TD dynamics for such approximators, \ie that every fixed point of the mean-field trainign dynamics is global minimizer of the energy and thereby perfectly (a.e.) approximates $V^*$.
    %  {RL: In order for the lazy learner to learn the desired value function, one has to wnsure that such function is indeed in the corresponding rkhs, knowledge not given a priori to the user.}

\edit{Future directions of research include the development of more refined, nonasymptotic versions of the above theorems and the extension of these results to more complex, nonlinear reinforcement learning algorithms in the Markov Decision Processes setting.
%the extension of the proofs developed above to the framework of infinite state spaces,
  We note that such extension is immediate in settings such as SARSA, where the state-action space can be regarded as an extension of the state space in the \abbr{TD} case. On the contrary, the generalization of the results obtained above to off-policy algorithms such as Q-learning pose a more significant challenge. This is captured for instance in \cite{meyn08}, where convergence guarantees for linear approximators can only be obtained upon making relatively strong a-priori assumptions on the spectral properties of the feature matrix during training. Consequently this direction of research remains open.} {Furthermore, a more thorough exploration of the relationship between the limiting results in \citet{ChizatBach18} and the ones presented here and in \citet{cb182} would be important for the understanding of the limiting dynamics of neural networks in this domain, in particular in terms of convergence of the nonlinear, mean-field limit in the context of reinforcement learning.} % {should mention deep Q-learning?}
%
% \section*{Broader Impact}
%
% Given the very theoretical flavor of the results obtained in this paper, the societal impact of our work will largely depend on the purposes for which it will be applied. We anticipate that our results will contribute to the theoretical understanding of the training dynamics of (deep) reinforcement learning algorithms, the effect of their initialization and the extent to which practitioners can expect such algorithms to give optimal results. Therefore, the broader impact of this work is potentially very large (from diminishing power consumption to extending further the power of reinforcement learning algorithms), but the positivity of such impact will depend on the way these results will be applied. Hence, we hope that the tools we develop in this work will be used mindfully, exclusively in order to have a \emph{positive} impact on society.

% Acknowledgments---Will not appear in anonymized version
\acks{We thank a bunch of people.}

\bibliographystyle{JPE}
\bibliography{bib.bib}

\appendix

\renewcommand\thefigure{\thesection.\arabic{figure}}

\section{Supplementary proofs in the lazy training regime}

{To simplify the notation in the forthcoming analysis, we slightly abuse the notation used when the state space is finite-dimensional extending it, when necessary, to the infinite-dimensional setting. This naturally generalizes matrix multiplication to the action of linear operators. In particular the action of $\Gamma$, which we recall in the finite-dimensional setting is a diagonal matrix with entries $\mu(s)$, is to be intended as
\begin{equ}
  (a^\top \Gamma b)_{ij} = \int_\SS a_i(s) b_j(s) \,\mu(\d s)\,.
\end{equ}}
  Furthermore, we introduce the following decomposition of the \abbr{td} operator:
\begin{equ}
 T^\lambda V = \bar r^\lambda + \gamma P^\lambda V\,,
\end{equ}
where
\begin{equ}
 \bar r^\lambda(s) := (1-\lambda) \sum_{m = 0}^\infty \lambda^m \Ex{s}{\sum_{k = 0}^m \gamma^k r(s_k,s_{k+1})}\,,\quad P^\lambda V(s) := (1-\lambda) \sum_{m = 0}^\infty (\lambda\gamma)^{m} \Ex{s}{ V(s_{m+1})}\,,
\end{equ}
or, in vector notation
\begin{equ}
 \bar r^\lambda := (1-\lambda) \sum_{m = 0}^\infty \lambda^m {\sum_{k = 0}^m \gamma^k P^k r}\,,\quad P^\lambda V(s) := (1-\lambda) \sum_{m = 0}^\infty (\lambda\gamma)^{m}  P^{m+1}V\,.
\end{equ}
In the proofs below, we will use the above, simplified notation to obtain contraction estimates on the dynamical system \eref{e:tduc}. These estimates will leverage the fact that $P^\lambda$ is nonexpansive and $\gamma < 1$, and from this notation contraction rates in terms of $\gamma$ will arise naturally. However, by \lref{l:tsitsiklis}, we know that the contraction rate of $T^\lambda$ is $\gamma_\lambda$. Rewriting the proofs with $\gamma \to \gamma_\lambda$ will show the stronger contraction.

\subsection{Over-parametrized regime}

\begin{replemma}{l:perturbation}
    Let $\GG_0$ be a compact subset of a linear space $\GG$. For {$v(0) \in \GG_0$}, let $g_v$ be a continuous, {self-adjoint linear operator} that is positive definite in a neighborhood of $v(0)$ when restricted on $\GG$. Then for all $\epsilon > 0$ there exists $\delta > 0$ such that, for all $v \in \BB_\delta(v(0)) \subseteq \GG_0$
  \begin{equ}%\label{e:perturbationsupp}
    g_{v(0)} =  (\indicator + \tilde  g_v)g_v\,,
  \end{equ} for a linear operator $\tilde g_v~:~\FF\to\FF$ with
  $  \|\tilde g_v\|< \epsilon\,.$
  More specifically, let $\sigma_{\text{min}}$ be the smallest singular value of $DV_{w(0)}$. Then if $\rho \leq ({1-\gamma})\sigma_{\text{min}}^2/(48 L_{D \hatt V})$,
   \eref{e:perturbation} holds
  with $\|\tilde g_{V(w)}\|< \frac{1-\gamma}4$ for all $w \in \BB_{\rho}(w(0))$.
\end{replemma}

\begin{proof}\textbf{of \lref{l:perturbation}}
 Let $B_w =  D \hatt V_{w(0)} D\hatt V_{w(0)}^\top - D \hatt V_{w} D\hatt V_{w}^\top$. We carry out the proof for the case $\sigma_{\text{min}}< 1$ (else the result holds with $\sigma_{\text{min}} = 1$ in $\rho$), in which case we have for all $w \in \BB_\rho(w(0))$ that
 \begin{equ}
  \|B_w\| \leq 2 L_{D\hatt V} \|w(0)-w\| + (L_{D\hatt V} \|w(0)-w\|)^2 \leq 3 L_{D \hatt V}\|w(0)-w\|\,.
 \end{equ}
 Then we can write
 \begin{equs}
  g_{w(0)} & = (D \hatt V_{w(0)} D\hatt V_{w(0)}^\top)^{-1} = {(D \hatt V_{w} D\hatt V_{w}^\top + B_w)^{-1} }\\
  & = (g_w^{-1} (1+g_wB_w  ) )^{-1} = (1+ g_w B_w )^{-1} g_w \\
  & =  \sum_{n = 0}^\infty (-1)^n (g_w B_w )^n g_w = g_w + \sum_{n = 1}^\infty (-1)^n  (g_wB_w )^n g_w\,.
 \end{equs}
 Furthermore, by the assumptions on the regularity of $V$ and on the initial condition $w(0)$ we have that $g_w \preceq 4/\sigma_{\text{min}}^2 \indicator $, provided that $w \in \mathcal B_{\rho}(w(0))$ for $\rho$ as in \lref{l:perturbation}. Therefore, the perturbation $\tilde g_w := \sum_{n = 1}^\infty (-1)^n  (g_wB_w )^n$ satisfies% note{missing a 3?}
 \begin{equ}%\label{e:perturbation}
  \|\tilde g_w\| =  \|\sum_{n = 1}^\infty (-1)^n  (g_w B_w)^n\| \leq \sum_{n = 1}^\infty  \|g_w B_w \|^n  \leq \sum_{n = 1}^\infty \pc{  \frac{3 L_{D\hatt V}}{\sigma_{\text{\scriptsize min}}^2/4} \|w(0)-w\|}^n \leq \frac{1-\gamma}4\,.
 \end{equ}
 The same proof applies in the general case with different, implicit constants.
\end{proof}

\begin{replemma}{l:tsitsiklis}\cite[Lemmas 1, 3, 7]{tsitsiklis}
Under \aref{a:P}, for any $V, \tilde V \in \FF$ we have that
\begin{equ}\label{e:TDcontractionsupp}
  \|T^{\lambda} V- T^{\lambda} \tilde V\|_\mu \leq \gamma_\lambda \| V- \tilde V\|_\mu \qquad \text{for }\quad \gamma_\lambda := \gamma\frac{ 1-\lambda}{1-\gamma \lambda} \leq \gamma< 1\,.
  % \leq \gamma \| V- \tilde V\|_\mu
\end{equ}
In particular there exists a unique fixed point of $T^\lambda$, $V^* \in \FF$ given by \eref{e:extremum}.
\end{replemma}

\begin{proof}\textbf{of \lref{l:tsitsiklis}}
 We first prove that $\|P V\|_\mu \leq \|V\|_\mu$. This follows by Jensen inequality and by the invariance of $\mu$:
 \begin{equs}
  \|P V\|_\mu^2 &= V^\top P^\top \Gamma P V = \int_\SS \mu(\d s) (\int_\SS P(s,\d s') V(s'))^2 \\
  &\leq \int_{\SS^2} \mu(\d s)  P(s,\d s') V(s')^2 =\int_\SS \mu(\d s) V(s)^2 = \|V\|_\mu^2\,.\label{e:pcontraction}
 \end{equs}
 Then, writing
 \begin{equs}
  T^\lambda V(s) & = (1-\lambda) \sum_{m = 0}^\infty \lambda^m \Ex{s}{\sum_{k = 0}^m \gamma^k r(s_k,s_{k+1}) + \gamma^{m+1} V(s_{m+1}) } \\&= (1-\lambda) \sum_{m = 0}^\infty \lambda^m \pc{ \sum_{k = 0}^m \gamma^k \Ex{s}{ \bar r(s_k)} + \gamma^{m+1} \Ex{s}{V(s_{m+1}) }}\\
  &= (1-\lambda) \sum_{m = 0}^\infty \lambda^m \pc{ \sum_{k = 0}^m \gamma^k P^k \bar r(s) + (\gamma P)^{m+1} {V(s) }}\,,
 \end{equs}
 where $s_k$ is the process on $\SS$ induced by $P$ with initial condition $s_0$, we have contraction of the operator $T^\lambda$ in $L^2(\SS,\mu)$ by
 \begin{equs}
  \|T^\lambda (V-\tilde V)\|_\mu & = \left\|(1-\lambda) \sum_{m = 0}^\infty \lambda^m  (\gamma P)^{m+1} \pc{V(s) - \tilde V(s) } \right\|_\mu \\& \leq (1-\lambda) \sum_{m = 0}^\infty \lambda^m  \gamma^{m+1} \left\| {V(s) - \tilde V(s) } \right\|_\mu \\
  &= \frac{\gamma (1-\lambda)}{1-\gamma \lambda} \left\| {V(s) - \tilde V(s) } \right\|_\mu\,,
 \end{equs}
 where in the inequality above we have used \eref{e:pcontraction}. This proves that $T^\lambda$ is a contraction in $\FF$, and as such it must have a unique fixed point. That this fixed point corresponds to \eref{e:extremum} is immediately checked by direct computation.
\end{proof}

% \begin{proof}[proof of negligibility of $B_t$] By the fact that  {use spectral sign} $B_t = \indicator + DB_0(w(t)-w(0))$ where $DB_t$ is a Lipschitz continuous map \cite[Thm. 4.12]{lee03} since $g_0$ is locally invertible, we can write
%   \begin{equ}
%     \|B_t - \indicator\| < \|D B_0\| \epsilon  \quad \text{and} \quad \|\frac{\d}{\d t}B_0 \| = \|D^2 B_0\|\,.
%   \end{equ}
%   as long as $w(t) \in \BB_r(w(0))$ the first term can be made small at will, while the second term is multiplied by a factor $\alpha^{-1}$ in the derivative \eref{e:vf}, which scales its singular values to $0$.
% \end{proof}

\begin{reptheorem}{t:over-parametrized}
  Assume that $\sigma_{\text{min}}>0$, where $\sigma_{\text{min}}$ is
  the smallest singular value of $D\hatt V_{w(0)}$. Assume further that $w(0)$ is such that
  $\|\hatt V_{w(0)}\|_0 < M := ({1-\gamma})^2\sigma_{\text{min}}^2/(192
  \kappa^2 L_{D \hatt V}\|D\hatt V_{w(0)}\|)$, then for
  $\alpha > \alpha_0 := \|V^*\|_0/M$ we have for all $t \geq 0$ that
\begin{equ}\label{e:expcontractionsupp}
  \|V^*- \alpha \hatt V_{w(t)}\|_0^2 \leq \|V^*- \alpha \hatt V_{w(0)}\|_0^2 e^{- \frac{1-\gamma}{2\kappa^2 } t }\,.
\end{equ}
Recall that $V^*$ is the exact value function given by \eref{e:extremum}. Moreover, if $\|\hatt V_{w(0)}\|_0 \leq C \alpha^{-1}$ for a constant $C > 0$, then $\sup_{t>0}\|w(t)- w(0)\| = \mathcal O(\alpha^{-1})$.
\end{reptheorem}

\begin{proof}\textbf{of \tref{t:over-parametrized}}
 By setting
 $\rho := {({1-\gamma}) \sigma_{\text{min}}^2/(48 L_{D \hatt V})}$ and
 by the assumed Lipschitz smoothness of $\hatt V$,
 $D\hatt V_w \cdot D\hatt V_w^\top \succeq \sigma_{\text{min}}^2/4$ as long
 as $w \in \BB_{\rho}(w(0))$. We would like to check a local exponential
 contraction condition, \ie that for all $w(t) \in \BB_\rho(w(0))$ we have
 \begin{equ}\label{e:lyapunovcond}
  \frac{\d}{\d\,t} U(\alpha V_{w(t)}) \leq \frac{\gamma-1}{2\kappa^2 } U(\alpha V_{w(t)})\,, \qquad \text{for } t > 0\,.
 \end{equ}
 To obtain the above result we apply the chain rule:
 %%%%%%%%%%%%%%%%%%%%%%%%%%%%%%%%%%%%%%%%%%%%%%%%%%%%%%%%%%%%%%%%%%%%%%%%%%%%%%%%%%%%%%%%%%%%%
 \begin{equs}
  \frac{\d}{\d\,t} U(\alpha V_{w(t)}) &=  \dtp{\partial_f U(\alpha V_{w(t)})\,,\, \frac{\d}{\d\,t} \alpha V_{w(t)}}_0\\
  & = \alpha \dtp{\alpha V_{w(t)} - V^*\,,\, D V_{w(t)} \cdot \frac{\d}{\d\,t} w(t)}_0\\
  & = \dtp{\alpha V_{w(t)} - V^*\,,\, D V_{w(t)} \cdot D V_{w(t)}^\top \Gamma (T^\lambda \alpha V_{w(t)}- \alpha V_{w(t)})}_0\,.\label{e:firstpart}
 \end{equs}
 Throughout, we define $\tau_\rho := \inf\{t < 0 ~:~ w(t) \not \in \mathcal B_{\rho}(w(0))\}$, $g_w:= (D V_{w} \cdot D V_{w}^\top)^{-1}$ (recalling that the $D V_{w} \cdot D V_{w}^\top$ has full rank in $\BB_\rho(w(0))$) and write
 $g_0 = (\indicator + \tilde g_w) g_w$, where $\tilde g_w$ is defined in \lref{l:perturbation}.
 Then, as long as $t < \tau_\rho$ we have, for every $a,b \in \FF$
 \begin{equ}
  \dtp{a, D V_{w(t)}\cdot D V_{w(t)}^\top \Gamma b}_0 = \dtp{a, (\indicator + \tilde g_{w(t)}) \Gamma b} \leq \dtp{a, b}_\mu + \| \tilde g_{w(t)}\|\|a\|_\mu\|b\|_\mu\,.
 \end{equ}
 By the above result we can bound from above the \abbr{rhs} of \eref{e:firstpart} by
 \begin{equ}
  \frac{\d}{\d\,t} U(\alpha V_{w(t)}) \leq \dtp{\alpha V_{w(t)} - V^*, T^\lambda \alpha V_{w(t)}-\alpha V_{w(t)}}_\mu + \| \tilde g_{w(t)}\|\|\alpha V_{w(t)} - V^*\|_\mu\|T^\lambda \alpha V_{w(t)}-\alpha V_{w(t)}\|_\mu\,. \label{e:sum}
 \end{equ}
 Recalling that by \lref{l:tsitsiklis} we have
 \begin{equ}\label{e:TDpart}
  \|T^\lambda \alpha V_{w(t)}-\alpha V_{w(t)}\|_\mu = \|T^\lambda \alpha V_{w(t)}-V^*\|_\mu +  \|\alpha V_{w(t)} - V^*\|_\mu \leq 2 \|\alpha V_{w(t)} - V^*\|_\mu\,,
 \end{equ}
 and applying \lref{l:perturbation}, we can bound the second term of \eref{e:sum} from above as
 \begin{equ}\label{e:part1}
  \| \tilde g_{w(t)}\|\|\alpha V_{w(t)} - V^*\|_\mu\|T^\lambda \alpha V_{w(t)}-\alpha V_{w(t)}\|_\mu \leq  \frac{1- \gamma}2 \|\alpha V_{w(t)} - V^*\|_\mu^2\,.
 \end{equ}
 On the other hand, for the first term we have by Cauchy-Schwartz inequality and \eref{e:TDcontractionsupp} that
 \begin{equs}
  \dtp{\alpha V_{w(t)} - V^*, T^\lambda \alpha V_{w(t)}-\alpha V_{w(t)}}_\mu  & = \dtp{\alpha V_{w(t)} - V^*\,,\,  (T^\lambda \alpha V_{w(t)}- V^*)-(\alpha V_{w(t)}-V^*)}_\mu\,,\\
  & \leq    \|\alpha V_{w(t)}-V^*\|_\mu \| T^\lambda \alpha V_{w(t)}-V^*\|_\mu -  \|\alpha V_{w(t)}-V^*\|_\mu^2 \\
  & \leq({\gamma-1}) \|\alpha V_{w(t)}-V^*\|_\mu^2\,,\label{e:part2}
 \end{equs}
 where $\gamma$ is the contraction rate of the \abbr{TD} difference in
 $\FF$, see \eqref{e:TDcontractionsupp}.  Finally, combining
 \eref{e:part1} and \eref{e:part2} we obtain
 \begin{equ}
  \frac{\d}{\d\,t} U(\alpha V_{w(t)}) \leq  \frac{\gamma-1}{2}  \|\alpha V_{w(t)}-V^*\|_\mu^2 \leq  \frac{\gamma-1}{2\kappa^2 }  \|\alpha V_{w(t)}-V^*\|_0^2\label{e:final1}\,,
 \end{equ}
 and the last inequality results from the equivalence of norms $\|\,\cdot\,\|_0$ and $\|\,\cdot\,\|_\mu$ (both have full support on a finite set).  The desired result \eref{e:expcontractionsupp} follows directly from the above by Gr\"onwall's inequality for all $t < \tau_\rho$.

 It now only remains to show that under the given choice of $\alpha$, we have $\tau_\rho = \infty$. By the contraction of $T^\lambda$ \lref{l:tsitsiklis} and our choice of $\rho < \sigma_{\text{min}}/(2L_{DV})$ we write
 \begin{equ}
  \biggl\lVert\frac \d {\d\, t}w(t)\biggr\rVert_2 \leq \frac 1\alpha \|D\hatt V_{w(t)}\| \|T^\lambda \alpha V_{w(t)}-\alpha V_{w(t)}\|_\mu \leq \frac2\alpha \|D\hatt V_{w(0)}\|\|\alpha V_{w(t)} - V^*\|_\mu\,.
 \end{equ}
 Integrating the above and combining with the result from \eref{e:final1} in the previous paragraph we have
 \begin{equs}
  \|w(t)-w(0)\|_2 &\leq \frac2\alpha \|D\hatt V_{w(0)}\| \|\alpha V_{w(0)} - V^*\|_0 \int_0^t \exp\pq{ \frac{\gamma-1}{2\kappa^2}s} \, \d s\\& \leq \frac{4 \kappa^2 }{\alpha (1-\gamma)} \|D\hatt V_{w(0)}\| \|\alpha V_{w(0)} - V^*\|_0\,.\label{e:lasteq}
 \end{equs}
 Given that $\|\alpha V_{w(0)} - V^*\|_0\leq 2 \alpha M$, the above quantity is bounded by $\rho$ and therefore $\tau_\rho = \infty$, as desired.

 Finally, from \eref{e:lasteq} we see that if $\|\hatt V_{w(0)}\|_0 \leq C \alpha^{-1}$ then $\|w(t)-w(0)\|_2 \leq \frac{4 \kappa^2 }{\alpha (1-\gamma)} \|D\hatt V_{w(0)}\|(C + M \alpha_0) = \mathcal O(\alpha^{-1})$ for all $t > 0$.
\end{proof}

\subsection{Under-parametrized regime}

% \begin{reptheorem}{t:under-parametrized}%\label{t:under-parametrized}
%   Assume that $\sigma_{\text{min}}>0$, where $\sigma_{\text{min}}$ is
%   the smallest singular value of $D\hatt V_{w(0)}$. Assume further that $w(0)$ is such that
%   $\|\hatt V_{w(0)}\|_0 < M := ({1-\gamma})^2\sigma_{\text{min}}^2/(192
%   \kappa^2 L_{D \hatt V}\|D\hatt V_{w(0)}\|)$, then for
%   $\alpha > \alpha_0 := \|V^*\|_0/M$ we have for all $t \geq 0$ that
% \begin{equ}\label{e:expcontraction}
%   \|V^*- \alpha \hatt V_{w(t)}\|_0^2 \leq \|V^*- \alpha \hatt V_{w(0)}\|_0^2 e^{- \frac{1-\gamma}{2\kappa^2 } t }\,.
% \end{equ}
% Recall that $V^*$ is the exact value function given by \eref{e:extremum}. Moreover, if $\|\hatt V_{w(0)}\|_0 \leq C \alpha^{-1}$ for a constant $C > 0$, then $\sup_{t>0}\|w(t)- w(0)\| = \mathcal O(\alpha^{-1})$.
% \end{reptheorem}

% \begin{replemma}{l:forwardinvariance}There exists $\delta > 0$ and $\alpha_0 > 0$ such that the ball $\BB_\delta^0(0) \subseteq \obar \FF_0$
%   %containing $\alpha \bar V_0$ and $0$ that
%   is forward invariant and forward complete \abbr{wrt} the dynamics of \eref{e:tduavscaled} for all $\alpha > \alpha_0$.
%   %assuming that $\|\alpha \bar V_0\|< C$ for all $\alpha > 0$.
% \end{replemma}

\begin{replemma}{l:forwardinvariance}There exists $\delta > 0$ and $\alpha_0 > 0$ such that the ball $\BB_\delta^0(0) \subseteq \obar \FF_0$
  %containing $\alpha \bar V_0$ and $0$ that
  is forward invariant and forward complete \abbr{wrt} the dynamics of \eref{e:tduavscaled} for all $\alpha > \alpha_0$.
  %assuming that $\|\alpha \bar V_0\|< C$ for all $\alpha > 0$.
\end{replemma}

\begin{proof}\textbf{of \lref{l:forwardinvariance}}
 We define the Lyapunov function $\bar U(f) := \frac 12\|f\|_0^2$, whose sublevel sets are $\BB_\delta^0(0)$. We prove forward invariance of such sets by showing that, on their boundary (\ie on the sphere $S_\delta^{r-1} \subset \bar\FF_0$ of radius $\delta$), $\bar U(f)$ decreases along trajectories of \eref{e:tduavscaled}  for $\alpha$ large enough.
 %Indeed, this implies that the sublevel set $\BB_\epsilon(0)$ is an absorbing set of \eref{e:tduavscaled} in $\obar \FF_0$.

 Noting that $S_\delta^{r-1} \subset \obar \FF_0$ upon taking $\delta$ small enough, we differentiate $\bar U(\obar V_{w(t)})$ \abbr{wrt} time for $w(t)$ obeying \eref{e:tduavscaled} at points $\obar V := \obar V_{w(t)} \in S_\delta^{r-1}$:
 \begin{equs}
  \frac{\d}{\d t} \bar U(\obar V) &= \frac{1}\alpha \dtp{\obar V, \bar g_{w(t)}^{-1} D \psi_{\obar V}^{-1} \Gamma(T^\lambda \alpha \psi^{-1}(\obar V) - \alpha \psi^{-1}(\obar V)) }_0\\
  &= \frac{1}\alpha \dtp{ \obar V, (D \psi_{\obar V}^{-1})^{\top} \Gamma (T^\lambda \alpha \psi^{-1}(\obar V) - \alpha \psi^{-1}(\obar V)) } + R_g(\obar V) \\
  & = \frac{1}\alpha \dtp{D \psi_{\obar V}^{-1} \obar V,  \bar r^\lambda +  \alpha (\gamma P^\lambda - \indicator) \psi^{-1}(\obar V) }_{\mu} + R_g(\obar V)\\
  & \leq \dtp{D \psi_{\obar V}^{-1} \obar V, (\gamma P^\lambda - \indicator) \psi^{-1}(\obar V) }_{\mu} + \frac{1}\alpha \|D \psi_{\obar V}^{-1} \obar V\|_\mu \|\bar r^\lambda\|_\mu +  |R_g(\obar V)|\,. \label{e:dU}
\end{equs}% note{Changed $D \psi^{-1}$ for $D \psi_{\obar V}^{-1}$ to uniform notation with above lemmas, hope not to have forgotten something}
 where we have defined $R_g(\obar V) := \frac{1}\alpha \dtp{\obar V, \tilde g_{w(t)} (D \psi_{\obar V}^{-1})^\top \Gamma(T^\lambda \alpha \psi^{-1}(\obar V) - \alpha \psi^{-1}(\obar V)) }$ for $\tilde g_w$ from \lref{l:perturbation}. We now proceed to bound the last two terms on the \abbr{rhs} from above. The second term is of order $\alpha^{-1}$ and therefore goes to $0$ for $\alpha \to \infty$ while for the last one we have that, by the equivalence of the norms $\|\,\cdot\,\|_\mu$ and $\|\,\cdot\,\|_2$,
 \begin{equs}
  |R_g(\obar V)| &\leq \frac 1 \alpha \|\obar V\|_2 \|\tilde g_{w(t)}\| \|(D \psi_{\obar V}^{-1})^\top \Gamma\pq{\bar r^\lambda + (\gamma P^\lambda - \indicator) \alpha \psi^{-1}(\obar V) }\|_2\\
  & \leq \frac 1 \alpha \|\obar V\|_2 \|\tilde g_{w(t)}\| \|(D \psi_{\obar V}^{-1})^\top \Gamma \bar r^\lambda\| + \|\obar V\|_2 \|\tilde g_{w(t)}\|\|(D \psi_{\obar V}^{-1})^\top \Gamma (\gamma P^\lambda - \indicator)  \psi^{-1}(\obar V) \|_2 \\
  &\leq \alpha^{-1}C + \epsilon_R(\delta)\|\obar V\|_\mu^2\,.\label{e:pertmetric}
 \end{equs}
 for a constant $C$ bounded by the norm of all operators and, by \lref{l:perturbation} a positive function $\epsilon_R(\delta)$ with $\lim_{\delta \to 0} \epsilon_R(\delta) = 0$.
 By the bounds established above and the fact that $\|\obar V\|_\mu \geq \kappa^{-1} \delta$ for $\obar V \in S_\delta^{r-1} \subset \obar \FF_0$ it is sufficient to show that the first term in \eref{e:dU} satisfies
 \begin{equ}\label{e:partialcontraction}
  \dtp{D \psi_{\obar V}^{-1} \obar V, (\gamma P^\lambda - \indicator) \psi^{-1}(\obar V) }_{\mu}  \leq - \epsilon \|\obar V\|_\mu^2\,,
\end{equ} for $\delta$ small enough and a constant $\epsilon > 0$
independent of $\delta$. We Taylor-expand $\psi^{-1}$ around the
origin, denoting the second order remainder of that expansion by
$R_2(\,\cdot\,,\,\cdot\,)$, and since $\psi^{-1}(\obar V_0) = 0$ we
have,
 \begin{equs}
  \dtp{D \psi_{\obar V}^{-1} \obar V, (\gamma P^\lambda - \indicator) \psi^{-1}(\obar V) }_{\mu} = \dtp{D \psi_{\obar V}^{-1} & \obar V, (\gamma P^\lambda - \indicator) D \psi_{0}^{-1} \obar V}_{\mu} \\ & +  \dtp{D \psi_{\obar V}^{-1} \obar V, (\gamma P^\lambda - \indicator) R_2(\obar V, \obar V)}_\mu\,,\label{e:temp4}
\end{equs}
where we have introduced the short hand notation
$D \psi_{0}^{-1} = D \psi_{\obar V_0}^{-1}$.  By the Lipschitz
smoothness of $\psi^{-1}(\,\cdot\,)$ \citep{lee03} we can bound the
norm of the second term from above as
 \begin{equ}\label{e:lipschitz}
  \dtp{D \psi_{\obar V}^{-1} \obar V, (\gamma P^\lambda - \indicator) R_2(\obar V, \obar V)}_\mu \leq 2 \|D \psi_{\obar V}^{-1} \obar V\|_\mu \|R_2(\obar V, \obar V)\|_\mu \leq 2  L_{D \psi^{-1}}\|D \psi_{\obar V}^{-1}\| \|\obar V\|_\mu^3\,.
 \end{equ}
 For the first term in \eref{e:temp4} we can also expand
 $D \psi_{\obar V}^{-1} =  D \psi_{0}^{-1} + \tilde R_2(\obar V, \,\cdot\,)$ ,
 and by applying a similar bound as \eref{e:lipschitz} we obtain that
 \begin{equ}\label{e:1}
  \dtp{D \psi_{\obar V}^{-1} \obar V, (\gamma P^\lambda - \indicator) D \psi_{0}^{-1} \obar V}_{\mu} \leq \dtp{D \psi_{0}^{-1} \obar V, (\gamma P^\lambda - \indicator) D \psi_{0}^{-1} \obar V}_{\mu} + 2  L_{D \psi^{-1}}\|D \psi_{0}^{-1}\| \|\obar V\|_\mu^3\,.
 \end{equ}
 The second term of the above equation being $\mathcal O (\|\obar V\|^3)$, we now consider the first one. By the nonexpansion of $P$ in $\|\,\cdot\,\|_\mu$ proven in \lref{l:tsitsiklis} we have% note{uniform notation $D \psi$}
 \begin{equs}
  \dtp{D \psi_{0}^{-1} \obar V, (\gamma P^\lambda - \indicator) D \psi_{0}^{-1}\obar V }_{\mu} & \leq \gamma\|D \psi_{0}^{-1} \obar V\|_\mu \| P^\lambda D \psi_{0}^{-1} \obar V\|_\mu-\|D \psi_{0}^{-1} \obar V\|_\mu^2\\
  &\leq (\gamma - 1) \|D \psi_{0}^{-1} \obar V\|_\mu^2 \leq (\gamma - 1) %\|D \psi_{\obar V_{0}}^{-1} \|^2
  (\sigpsi)^2\|\obar V\|_\mu^2\,,\label{e:2}
 \end{equs}
 where $\sigpsi$ denotes the smallest singular value of $D \psi^{-1}$ in $\BB_\delta^0(0)$.
 Combining \eref{e:lipschitz}, \eref{e:1} and \eref{e:2} we finally obtain% note{add singular values of V}
 \begin{equ}
  \dtp{D \psi_{\obar V}^{-1} \obar V, (\gamma P^\lambda - \indicator) \psi^{-1}(\obar V) }_{\mu} \leq \|\obar V\|_\mu^2((\gamma - 1)(\sigpsi)^2 + C' \kappa^{-1} \|\obar V\|_0)\,,\label{e:finalcontraction}
 \end{equ}
 for $C' = 2  L_{D \psi^{-1}}(\|D \psi_{0}^{-1}\| + \|D \psi_{\obar V}^{-1}\|)$ and recalling that $\kappa$ is the equivalence constant between the norms $\|\,\cdot\,\|_\mu$ and  $\|\,\cdot\,\|_0$  in $\obar \FF_0$.
 \footnote{We recall that by the construction of the mappings $\psi, \phi, \pi_r$ and by our assumption in \tref{t:under-parametrized} the metric tensor $\bar g_t$ has full rank on $\FF_0$ and being the latter set compact its eigenvalues are uniformly bounded from below. At the same time, we can equip $\obar \FF_0$ with the metric induced by $\Gamma$ by restricting it to its first $r$ elements, which are uniformly bounded from below. Hence, the two metrics are equivalent on this space for some equivalence constant $\kappa$.} Therefore, choosing $\delta$ small enough we obtain \eref{e:partialcontraction} and conclude the proof of forward invariance.

 By boundedness of $\BB_\delta^0(0)$ in $\obar \FF_0$, forward completeness follows directly from forward invariance.
\end{proof}

\begin{replemma}{l:convexity} There exists $\ell > 0$, $\delta > 0$ and $\alpha_0 > 0$ such that for all $\alpha > \alpha_0$ and all geodesics $\gamma(s)$ contained in the ball $\BB_\delta^0(0) \subseteq \obar \FF_0$, the function
  \begin{equ}
    {\dtp{\gamma'(s),X(\gamma(s))}_{0} - \ell s \dtp{\gamma'(0),\gamma'(0)}_{0}}\,,
  \end{equ}
  is strictly decreasing in $s$.
\end{replemma}
%
% \begin{replemma}{l:convexity} There exists $\ell > 0$, $\delta > 0$ and $\alpha_0 > 0$ such that for all $\alpha > \alpha_0$ and all geodesics $\gamma(s)$ contained in the ball $\BB_\delta^0(0) \subseteq \obar \FF_0$, the function
%   \begin{equ}%\label{e:convexitysupp}
%     {\dtp{\gamma'(s),X(\gamma(s))}_{0} - \ell s \dtp{\gamma'(0),\gamma'(0)}_{0}}\,,
%   \end{equ}
%   is strictly decreasing in $s$.
% \end{replemma}

\begin{proof}\textbf{of \lref{l:convexity}}
 To simplify the notation and the forthcoming computation, we prove the differential version of the desired result, \ie we show that there exists $\ell > 0$ such that
 \begin{equ}\label{e:convexity2}
  \frac{\d}{\d\,s}\pq{\dtp{\gamma'(s),X(\gamma(s))}_{0} - \ell s \dtp{\gamma'(0),\gamma'(0)}_{0}} < 0\,.
 \end{equ}
 The above expression exists almost everywhere by Lipschitz continuity of the terms to be differentiated. When this is not the case, we must interpret this derivative in the sense of distributions. We will highlight the steps where this could be necessary as we go along the proof.

 In our case, $X$ is the \abbr{rhs} of \eref{e:vf} mapped through $\psi$ onto $\obar \FF_0$, \ie
 \begin{equ}
  X (\gamma(s)) = - \frac 1\alpha \bar g_{\gamma(s)}^{-1} (D \psi_{\gamma(s)}^{-1})^\top  \Gamma ( T^\lambda\alpha  \psi^{-1}(\gamma(s)) - \alpha \psi^{-1}(\gamma(s)))\,.
 \end{equ}
 We are going to consider the "flattened" manifold obtained by the maps $\phi$ and $\psi$ equipped with the metric $\bar g_0$.
 % where now $\bar g_s  := (D \pi_r D \phi D \phi^\top D \mu^\top)^{-1}_{\gamma(s)} $.
 In this space, geodesics have the form  $\gamma(s) = v_1 + s \Delta v $ where $\Delta v := v_2-v_1$ for $v_1,v_2 \in \obar \FF_0$ and their derivative is $\gamma'(s) = \Delta v$. Consequently \eref{e:convexity2} reads
 \begin{equs}\label{e:condition}
  {\dtp{\Delta v,\frac{\d }{\d s} X(\gamma(s))}_{0} < \ell \|{\Delta v}\|_{0}^2}\,,
 \end{equs}
 where defining $\tilde g_{\gamma(s)} := \bar g_0 \bar g_{\gamma(s)}^{-1} - \indicator  $ as in \lref{l:perturbation} we have
 \begin{equs}
  \frac{\d }{\d s} X(\gamma(s))  &= \frac{\d }{\d s} \bar g_0 \bar g_{\gamma(s)}^{-1} (D \psi_{\gamma(s)}^{-1})^\top \Gamma (T^\lambda (\alpha \psi^{-1}(\gamma(s)))- \alpha \psi^{-1}(\gamma(s)))\\
  & = \frac{\d }{\d s}\bar X(\gamma(s)) + \tilde g_{\gamma(s)} \frac{\d }{\d s}\bar X(\gamma(s)) + D \tilde g_{\gamma(s)}(\bar X(\gamma(s)), \gamma'(s))\,.\label{e:dgX}
 \end{equs}
 for
 \begin{equ}
  \bar X(\gamma(s)) := (D \psi_{\gamma(s)}^{-1})^\top \Gamma (T^\lambda (\alpha \psi^{-1}(\gamma(s)))- \alpha \psi^{-1}(\gamma(s)))\,.
 \end{equ}
 We proceed by analyzing the first term in the above equation and leave the task of bounding the last two for later. Using $\partial_s \alpha \psi^{-1}({\gamma(s)}) =  \alpha D \psi_{\gamma(s)}^{-1} \gamma'(s) =  \alpha D \psi_{\gamma(s)}^{-1} \Delta v$ we have that
 \begin{equs}\label{e:split1}
  \frac{\d }{\d s} \bar X(\gamma(s))  = \frac 1 \alpha (D^2 \psi_{\gamma(s)}^{-1})^\top & (\Gamma (T^\lambda \alpha \psi^{-1}({\gamma(s)})- \alpha \psi^{-1}({\gamma(s)})), \Delta v )  \\ & +  (D \psi_{\gamma(s)}^{-1})^\top \Gamma \big[ D T^\lambda  D \psi_{\gamma(s)}^{-1} \Delta v -  D \psi_{\gamma(s)}^{-1} \Delta v\big]\,,
 \end{equs}
 where $(D^2 \psi_{\gamma(s)}^{-1})^\top $ denotes the inversion of the last two indices of the Hessian.
 We now proceed to consider the two terms in the sum above separately (multiplied by the scalar product of \eref{e:condition}), defining throughout $(TD)_s := \Gamma (T^\lambda \alpha \psi^{-1}({\gamma(s)})- \alpha \psi^{-1}({\gamma(s)}))$.
 For the first term we have:
 \begin{equ}\label{e:hessian}
  \frac 1 \alpha \dtp{\Delta v, D^2 \psi_{\gamma(s)}^{-1}  (TD_s, \Delta v )}_0  \leq  \|\Delta v\|_0^2 \|D^2 \psi_{\gamma(s)}^{-1}\pc{\alpha^{-1} \bar r^\lambda +  (\gamma P^\lambda - \indicator)\psi^{-1} \gamma(s)}\| \leq \epsilon' \|\Delta v\|_0^2\,,
 \end{equ}
 for any $\epsilon' > 0$ by using the linearity of the Hessian and bounding its operator norm of $\psi^{-1}$ on a compact space in $\FF_0$ while choosing $\alpha$ large enough and $\delta$ small enough, since $\gamma(s) \in \BB_\delta^0(0)$. Note that if $D \psi^{-1}$ is not differentiable, the above computation is to be understood in the sense of distributions.  % note{double-check that no assumption has to be added, add note on compactness of $\bar f_s$}

 We now focus on the second term of \eref{e:split1}. In this case we incorporate the operator $\Gamma$ in the inner product and write this term as
 \begin{equ}
  \dtp{D \psi_{\gamma(s)}^{-1}\Delta v,  D T^\lambda  D \psi_{\gamma(s)}^{-1} \Delta v }_\mu - \|D \psi_{\gamma(s)}^{-1}\Delta v \|_\mu^2\,.
 \end{equ}
 Now, by the contraction property of $T^\lambda$ onto the tangential space $\edit{{\mathcal T}_{\psi_{\gamma(s)}^{-1}}} \FF$ in the norm $\|\,\cdot\,\|_\mu$ we can write
 \begin{equs}
  \dtp{D \psi_{\gamma(s)}^{-1}\Delta v,  D T^\lambda  D \psi_{\gamma(s)}^{-1} \Delta v }_\mu  \leq \|D \psi_{\gamma(s)}^{-1}\Delta v\|_\mu \| P^\lambda D \psi_{\gamma(s)}^{-1}\Delta v\|_\mu \leq \gamma \|D \psi_{\gamma(s)}^{-1}\Delta v\|_\mu^2\,,
 \end{equs}
 so that
 \begin{equ}
  \dtp{D \psi_{\gamma(s)}^{-1}\Delta v,  D T^\lambda  D \psi_{\gamma(s)}^{-1} \Delta v }_\mu - \|D \psi_{\gamma(s)}^{-1}\Delta v \|_\mu^2\leq (\gamma-1) \|D \psi_{\gamma(s)}^{-1}\Delta v\|_\mu^2\label{e:partial2}\,.
 \end{equ}
 Denoting by $\sigma_{\text{\scriptsize max}}^{D\psi^{-1}},\sigma_{\text{\scriptsize min}}^{D\psi^{-1}}$ the largest and smallest, respectively, singular values of the map $D \psi^{-1}$ in $\BB_{\delta}^0(0)$ (which are bounded away from $0$ upon possibly making this set smaller), by nondegeneracy of $D \psi^{-1}$ and by the equivalence of the $\|\,\cdot\,\|_\mu$ and $\|\,\cdot\,\|_0$ norms on $\obar \FF_0$ we have that
 \begin{equ}
  \kappa^{-1} \sigpsi\|\Delta v\|_0\leq \|\Delta  v\|_\mu \sigpsi \leq \|D \psi_{\gamma(s)}^{-1}\Delta v\|_\mu\leq \|\Delta v\|_\mu\sigma_{ \text{\scriptsize max}}^{D\psi^{-1}} \leq \kappa\|\Delta v\|_0\sigma_{\text{\scriptsize max}}^{D\psi^{-1}}\,.
 \end{equ}
 Thus we have
 \begin{equ}\label{e:partial}
  \|D \psi_{\gamma(s)}^{-1}\Delta v \|_\mu^2 \geq \kappa^{-2} \Bigl(\sigpsi\Bigr)^2 \|\Delta v\|_0^2\,.
 \end{equ}
% for $\delta$ small enough with $\epsilon''(\delta) \to 0$ as $\delta \to 0$.  %By the equivalence of the $\|\,\cdot\,\|_\mu$ and $\|\,\cdot\,\|_2$, for a possibly larger $\kappa'$ resulting in a different $\epsilon^{(3)}(\delta) \to 0$ as $\delta \to 0$.

 Getting back to the last two terms in \eref{e:dU}, we immediately see
 from \lref{l:perturbation} that $\tilde g_{\gamma(s)}$ is a small,
 Lipschitz continuous perturbation. Hence, the
 product $$\dtp{\gamma'(s),\tilde g_{\gamma(s)} \bar X'(\gamma(s))}$$
 can be bounded from above similarly to \eref{e:pertmetric}, while the
 second order derivative in the third term of \eref{e:dgX} can be
 dealt with analogously to what is done in \eref{e:hessian}, giving
 terms $\epsilon''\|\Delta v\|_0^2$ and
 $\epsilon^{(3)}\|\Delta v\|_0^2$ respectively, both going to $0$ as $\delta \to 0$.

 Therefore, combining the above with \eref{e:hessian}, \eref{e:partial2} and \eref{e:partial} we have
 \begin{multline*}
   \dtp{\Delta v,\frac{\d }{\d t} \bar X(\gamma(s))}_{0} \leq
   \frac{\gamma - 1}{ \kappa^2} \Bigl(\sigpsi\Bigr)^2 \|\Delta
   v\|_0^2 + \pc{\sum_i^{3}\epsilon^{(i)}(\delta)} \|\Delta v\|_0^2\\
   \leq \frac{\gamma - 1}{ 2 \kappa^2} \Bigl(\sigpsi\Bigr)^2\|\Delta
   v\|_0^2\,.
 \end{multline*}
 %where in the last inequality we have lower-bounded the $\|\cdot\|_0$-norm by the  $\|\cdot\|_\mu$-norm.
 This directly gives \eref{e:condition} by choosing $\ell$ large enough.
\end{proof}

The next lemma estimates the distance between the fixed point $\tilde V^*$ of the dynamics   \eref{e:tduavscaled} and $V^*$ given by \eref{e:extremum}, showing that it is close, for large values of $\alpha$ to the best linear model in the tangent space of $\FFw$ at $V_{w(0)}$, given by $\Pi_0 V^*$. We recall that the projection operator $\Pi_0$ onto the linear space spanned by the columns of $DV$ is given by \cite[Eq. (1)]{tsitsiklis}
\begin{equ}
 \Pi_0 W := \argmin_{\{ D V_{w(0)}\Delta w~:~\Delta w \in \Rr^p\}} \|D V_{w(0)}\Delta w - W\|_\mu = D V_{w(0)}(D V_{w(0)}^\top \Gamma D V_{w(0)})^{-1} D V_{w(0)}^\top \Gamma W \,,
\end{equ}
for all $W \in \FF$ where, if necessary, we interpret $(D V_{w(0)}^\top \Gamma D V_{w(0)})^{-1}$ as a pseudo-inverse.
% {this definition only works if $D V$ has full rank..}
%\jlnotes{$\bar g_t^{-1}$ we would only need a pseudoinverse for this issue?}

\begin{lemma}\label{l:fixedpoint}
 Let $\tilde V^*$ be the fixed point of \eref{e:tduavscaled} and $V^*$ be the global fixed point of the \abbr{td} operator, given by \eref{e:extremum}. Then under the assumptions of \tref{t:under-parametrized} there exists constants  $\alpha_0 > 0$ and $C^* > 0$ (independent of $\alpha_0$), such that
 \begin{equ}\label{e:fakefixed}
  \|\tilde V^* - V^*\|_\mu < \frac{1-\lambda \gamma}{1-\gamma} \|\Pi_0 V^* - V^*\|_\mu + C^* \alpha^{-1}\,,
 \end{equ}
 where $\Pi_0$ is the projection operator onto $\edit{{\mathcal T}_{V({w(0)})}}\FFw$.
\end{lemma}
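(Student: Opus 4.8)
The plan is to reduce the nonlinear projected fixed-point identity satisfied by $\tilde V^*$ to the classical linear error estimate of \citet{tsitsiklis}, paying only $\OO(\alpha^{-1})$ for the reduction. The first thing to record is the algebraic identity $\tfrac{1}{1-\gamma_\lambda} = \tfrac{1-\lambda\gamma}{1-\gamma}$, with $\gamma_\lambda$ from \lref{l:tsitsiklis}, so that the target coefficient is exactly the amplification factor of the linear theory. The whole argument rests on the lazy estimate $\|\tilde w - w(0)\| = \OO(\alpha^{-1})$ for the parameter $\tilde w$ with $\tilde V^* = \alpha \hatt V_{\tilde w}$; this follows as in \tref{t:over-parametrized} (where $\|\hatt V_{w(0)}\|_0 \le C\alpha^{-1}$ holds here because $\hatt V_{w(0)}=0$), since the velocity field of \eref{e:tduavscaled} has magnitude $\OO(\alpha^{-1})$ times the bounded quantity $\|T^\lambda\tilde V^* - \tilde V^*\|_\mu$, and the fixed point $\bar V^*$ of the flattened dynamics sits at distance $\OO(\alpha^{-1})$ from the origin of $\obar\FF_0$ because $\alpha\psi^{-1}(\bar V^*)=\tilde V^*$ must remain bounded near $V^*$.

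From this I would extract two perturbation facts, both consequences of \aref{a:V} and \lref{l:perturbation}. First, since $\hatt V_{w(0)}=0$ and $D\hatt V$ is Lipschitz, Taylor expansion gives $\tilde V^* = \alpha D\hatt V_{w(0)}(\tilde w - w(0)) + \alpha\,\OO(\|\tilde w - w(0)\|^2)$, so that $\tilde V^*$ lies within $\OO(\alpha^{-1})$ of the linear tangent space $T_{\hatt V(w(0))}\FFw = \mathrm{range}(\Pi_0)$; writing $\hat V := \Pi_0\tilde V^*$ this reads $\|\tilde V^* - \hat V\|_\mu = \OO(\alpha^{-1})$. Second, the $\mu$-orthogonal projection $\Pi$ onto $T_{\tilde V^*}\FFw = \mathrm{range}(D\hatt V_{\tilde w})$ satisfies $\|\Pi - \Pi_0\|_\mu = \OO(\alpha^{-1})$, again because $\|\tilde w - w(0)\| = \OO(\alpha^{-1})$ and $D\hatt V$ is Lipschitz. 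Starting from the fixed-point identity $\Pi(T^\lambda\tilde V^* - \tilde V^*)=0$ and inserting $\Pi_0$, I would write $\Pi_0(T^\lambda\tilde V^* - \tilde V^*) = (\Pi_0-\Pi)(T^\lambda\tilde V^* - \tilde V^*)$, whose $\mu$-norm is at most $\|\Pi_0-\Pi\|_\mu\,\|T^\lambda\tilde V^*-\tilde V^*\|_\mu = \OO(\alpha^{-1})(1+\gamma_\lambda)E$, where $E := \|\tilde V^* - V^*\|_\mu$ and $\|T^\lambda\tilde V^* - \tilde V^*\|_\mu \le (1+\gamma_\lambda)E$ by \lref{l:tsitsiklis}. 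Combining this with $\|\tilde V^* - \hat V\|_\mu = \OO(\alpha^{-1})$ and the fact that $T^\lambda$ is affine with $\gamma P^\lambda$ nonexpansive yields an approximate linear projected fixed-point equation $\Pi_0 T^\lambda \hat V = \hat V + e$, with $\hat V \in \mathrm{range}(\Pi_0)$ and $\|e\|_\mu = \OO(\alpha^{-1})(1+E)$.

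Now I run the estimate of \citet{tsitsiklis} on $\hat V$: using $\Pi_0 V^* = \Pi_0 T^\lambda V^*$ (since $T^\lambda V^* = V^*$), the nonexpansiveness of $\Pi_0$ in $\|\,\cdot\,\|_\mu$, and the contraction $\|T^\lambda\hat V - T^\lambda V^*\|_\mu \le \gamma_\lambda\|\hat V - V^*\|_\mu$ from \lref{l:tsitsiklis}, I get $\|\hat V - \Pi_0 V^*\|_\mu \le \gamma_\lambda\|\hat V - V^*\|_\mu + \|e\|_\mu$, whence by the triangle inequality $(1-\gamma_\lambda)\|\hat V - V^*\|_\mu \le \|\Pi_0 V^* - V^*\|_\mu + \OO(\alpha^{-1})(1+E)$. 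Since $E \le \|\hat V - V^*\|_\mu + \OO(\alpha^{-1})$, for $\alpha$ large the self-referential $\OO(\alpha^{-1})E$ term is absorbed on the left, giving $(1-\gamma_\lambda - \OO(\alpha^{-1}))E \le \|\Pi_0 V^* - V^*\|_\mu + \OO(\alpha^{-1})$; dividing and expanding $\tfrac{1}{1-\gamma_\lambda - \OO(\alpha^{-1})} = \tfrac{1}{1-\gamma_\lambda}(1+\OO(\alpha^{-1}))$ produces exactly $E \le \tfrac{1-\lambda\gamma}{1-\gamma}\|\Pi_0 V^* - V^*\|_\mu + C^*\alpha^{-1}$, as claimed. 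I expect the main obstacle to be the bootstrap bookkeeping that keeps the leading coefficient sharp: each discrepancy between the curved manifold and its tangent at $w(0)$ — the gap $\|\tilde V^* - \hat V\|_\mu$, the projection mismatch $\|\Pi - \Pi_0\|_\mu$, and the self-referential $\OO(\alpha^{-1})E$ term — must be shown to be genuinely $\OO(\alpha^{-1})$ and not to contaminate the factor $1/(1-\gamma_\lambda)$, which is precisely where the lazy estimate $\|\tilde w - w(0)\| = \OO(\alpha^{-1})$ and the smoothness of \aref{a:V} are essential.
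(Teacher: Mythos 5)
Your proposal is correct, but it takes a genuinely different route from the paper. You run a \emph{static} perturbation analysis of the fixed-point equation: starting from $\Pi(T^\lambda\tilde V^* - \tilde V^*)=0$, you trade $\Pi$ for $\Pi_0$ and $\tilde V^*$ for $\hat V = \Pi_0\tilde V^*$ at a total cost $\OO(\alpha^{-1})(1+E)$ (with $E:=\|\tilde V^*-V^*\|_\mu$), then rerun the Tsitsiklis--Van Roy contraction argument on the resulting approximate projected Bellman equation and absorb the self-referential error for large $\alpha$. The paper instead compares the \emph{trajectories} of the nonlinear dynamics \eref{e:tduavscaled} and of the linearized tangent dynamics \eref{e:tdbarw} uniformly in time, via a Gr\"onwall argument in the flattened coordinates supplied by the rank theorem (\lref{l:cb}, modelled on \cite[Lemma B2]{cb182}), and concludes by the triangle inequality through the linearized fixed point $\VV^*$, to which \cite[Lemma 6]{tsitsiklis} applies verbatim. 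Your route is more elementary: it needs no trajectory comparison and touches the curved geometry only through two perturbation facts, and it cleanly isolates why the leading coefficient $\tfrac{1-\lambda\gamma}{1-\gamma}$ is not contaminated. The paper's route costs more bookkeeping but yields the stronger uniform-in-time bound $\sup_{t>0}\|V_t-\VV_t\|_\mu = \OO(\alpha^{-1})$ and recycles the machinery already built for \lref{l:forwardinvariance} and \lref{l:convexity}. Two points in your write-up need care. First, the lazy estimate $\|\tilde w - w(0)\| = \OO(\alpha^{-1})$ does not follow ``as in \tref{t:over-parametrized}'' (that proof requires $\sigma_{\text{min}}>0$, which fails here); it requires the exponential contraction at an $\alpha$-independent rate provided by \lref{l:convexity}, which is what makes the time integral of the $\OO(\alpha^{-1})$ velocity finite --- this is legitimate, since \lref{l:fixedpoint} is invoked after those lemmas, but you should cite them rather than the over-parametrized theorem. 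Second, $\|\Pi - \Pi_0\|_\mu = \OO(\alpha^{-1})$ does not follow from Lipschitz continuity of $D\hatt V$ alone: projections onto ranges are discontinuous where the rank drops, so you must invoke the constant-rank hypothesis of \tref{t:under-parametrized} together with the positivity of the smallest nonzero singular value of $D\hatt V_{w(0)}$ to get Lipschitz dependence of the range projection on $w$ near $w(0)$. Neither point is a gap in substance, since both ingredients are in force where the lemma is used.
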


To prove the above result we compare the dynamics \eref{e:tduavscaled} to the dynamics of the model $V$ when \emph{linearized} at $w(0)$. In this case, the dynamics of the parameters is given by
\begin{equ}\label{e:barw}
 \frac{\d}{\d\,t} \bar w(t) = D V_{w(0)}^\top \Gamma (T^\lambda \VV_{\bar w(t)} - \VV_{\bar w(t)})\,,
\end{equ}
where $\VV \in \FF$ is the linear, tangent model of $V$ at $w(0)$ defined as
\begin{equ}\label{e:VV}
 \VV_{w} := V_{w(0)} + D V_{w(0)}(w-w(0))\,.
\end{equ}
We can also write the dynamics of the linear model as%\jlnotes{the first $\VV_{w(t)}$ should have subscript $\bar w(t)$?}
\begin{equ}\label{e:tdVV}
 \frac{\d}{\d\,t} \VV_{\bar w(t)} := D V_{w(0)} \cdot D V_{w(0)}^\top \Gamma (T^\lambda \VV_{\bar w(t)} - \VV_{\bar w(t)})\,.
\end{equ}
Scaling the model as $\VV \to \alpha \VV$ and $t \to \alpha^{-1}t$ we obtain the analogue of \eref{e:tduavscaled}:
\begin{equ}\label{e:tdbarw}
 \frac{\d}{\d\,t}  \bar w (t) := \frac 1 \alpha  D V_{w(0)}^\top \Gamma (T^\lambda \alpha \VV_{\bar w(t)} - \alpha \VV_{\bar w(t)})\,.
\end{equ}
which in $\FF$ reads%\jlnotes{similar issue as above}
\begin{equ}%\label{e:tdVV}
 \frac{\d}{\d\,t}  \alpha \VV_{\bar w(t)} := D V_{w(0)} \cdot D V_{w(0)}^\top \Gamma (T^\lambda \alpha \VV_{\bar w(t)} - \alpha \VV_{\bar w(t)})\,.
\end{equ}

\begin{proof}\textbf{of \lref{l:fixedpoint}}
 {Recall from \cite[Lemma 6]{tsitsiklis} that for the linear
 value function approximation one has}
 \begin{equ}\label{e:tsitsiklis}
  \|\VV^* - V^*\|_\mu < \frac{1-\lambda \gamma}{1-\gamma} \|\Pi_0 V^* - V^*\|_\mu\,,
 \end{equ}
 where $\Pi_0$ is the projection on $\edit{{\mathcal T}_{V({w(0)})}}\FFw$ and $\VV^*$ is the unique fixed point of the dynamics \eref{e:tdVV} on that space.
 In light of this result, our task reduces to bounding the distance between the trajectories of the original {(\ie dynamics \eref{e:tduavscaled})} and the linearized  model {(\ie dynamics \eref{e:tdbarw})} by $C \alpha^{-1}$ for $C$ large enough. We do so in 3 main steps. First of all, we bound the maximal excursion of the models $\VV$ and $V$. Mapping both dynamics onto a common coordinate space, we then bound from above the distance between the two trajectories in this space by $\OO(\alpha^{-1})$. Finally, we map the dynamics back to the embedding space and show that the correction is again of the same order $\OO(\alpha^{-1})$.

 \paragraph{Bounding the maximal excursion.} To compare the dynamics of $\alpha V_{w(t)}$ and $\alpha \VV_{\bar w(t)}$ we map them to a common space. Recalling the definition of the maps $\phi, \pi_r, \psi$ from the proof of \tref{t:under-parametrized} we note that the first order expansion of $\psi$, maps $\edit{{\mathcal T}_{V({w(0)})}}\FFw$ to $\obar \FF_0$. Explicitly, for $\obar V \in \obar \FF_0$ and for $\Delta \VV \in \edit{{\mathcal T}_{V({w(0)})}}\FFw$ with $\|\Delta \VV\|_0$ small enough we have
 \begin{equ}\label{e:maps}
  \bar \psi (V_{w(0)} + \Delta \VV) := D \psi_0 \Delta \VV \qquad \text{and}\qquad  \bar \psi^{-1} (\obar V) = V_{w(0)} + D \psi_{0}^{-1} \obar V \in \edit{{\mathcal T}_{V({w(0)})}}\FFw\,.
 \end{equ}
 Now, we proceed to show that the dynamics of \eref{e:tduavscaled} and \eref{e:tdbarw}, mapped to $\FF_0$, do not exit a ball $\BB_{\delta}^0(0)$, when choosing $\delta = C/\alpha$ for $C$ large enough. We show this with the same strategy used for the proof of \lref{l:forwardinvariance}, \ie we show that $\bar U(f) := \frac 12 \|f\|_0^2$ decreases on $S_{\delta}^{r-1}(0)$ along the trajectories of interest ({note that $\delta$ is now much smaller than that used in \lref{l:forwardinvariance}}). We will start with the curved dynamics \eref{e:tduavscaled} and will then show that the same result follows, in a simpler setting, for \eref{e:tdbarw}.
 For $\obar V := \obar V_{w(t)} \in S_{\delta}^{r-1}(0)$ we start by bounding, as in \eref{e:dU},  the derivative
 \begin{equ}
  \frac{  \d\,}{\d\,t}\bar U(\obar V) \leq \dtp{D \psi_{\obar V}^{-1} \obar V, (\gamma P^\lambda - \indicator) \psi^{-1}(\obar V) }_{\mu} + \frac{1}\alpha \|D \psi_{\obar V}^{-1} \obar V\|_\mu \|\bar r^\lambda\|_\mu +  |R_g(\obar V)|\,. \label{e:dU2}
 \end{equ}
 Before bounding the above terms we recall that by Lipschitz smoothness of $\psi$ we have that
 \begin{equ}
  \|\psi^{-1}(\obar V)\| < \|V_{w(0)}\| + \|D\psi_0^{-1} \obar V\| + L_{D \psi^{-1}} \|\obar V \|^2\,.\label{e:lsmooth}
 \end{equ}
 Then, since $V_{w(0)} = 0$, similarly to \eref{e:dU} we have for the last term in \eref{e:dU2} that, for $\alpha$ large enough,
 \begin{multline*}
  |R_g(\obar V)| \leq  \|\tilde g_w\| \|\obar V\|_2\Bigl(\|\obar V\|_2\|(D \psi_{\obar V}^{-1})^\top \Gamma (\gamma P^\lambda - \indicator) \|(\|D\psi_0^{-1}\| + L_{DV}\|\obar V \|_2) \\ +\frac 1 \alpha \|(D \psi_{\obar V}^{-1})^\top \Gamma \bar r^\lambda\|_2 \Bigr)\,.
 \end{multline*}
 By the equivalence of the norms $\|\,\cdot\,\|_\mu$, $\|\,\cdot\,\|_2$ and $\|\,\cdot\,\|_0$ on $\Pi_r$  and since $\delta = C/\alpha$ we have that
 \begin{equ}\label{e:11}
  |R_g(\obar V)|  \leq  \|\tilde g_w\| \|\obar V\|_0^2 (K + 1) + \OO(\alpha^{-3})\,,
 \end{equ}
 upon increasing $C$ if necessary and defining $ K = \kappa_2^2\|(D \psi_{\obar V}^{-1})^\top \Gamma (\gamma P^\lambda - \indicator) \|\|D\psi_0^{-1}\|$ for $\kappa_2$ the equivalence constant between $\|\,\cdot\,\|_2$ and $\|\,\cdot\,\|_0$ {on $\Pi_r$}.
 % note{make sure this is true}
 The second term in \eref{e:dU2} can be bounded similarly to the above by the equivalence of norms:
 \begin{equ}\label{e:21}
  \frac{1}\alpha \|D \psi_{\obar V}^{-1} \obar V\|_\mu \|\bar r^\lambda\|_\mu \leq \|\obar V\|_0^2 \frac{\kappa^2\|D \psi_{\obar V}^{-1} \|\|\bar r^\lambda\|_\mu}{C}\,.
 \end{equ}
 The first term in \eref{e:dU2} can be treated identically to the proof of \lref{l:forwardinvariance} to obtain \eref{e:finalcontraction}. Changing the norm in \eref{e:finalcontraction} and combining it with \eref{e:11} and \eref{e:21} gives
 \begin{equ}
  \frac{  \d\,}{\d\,t}\bar U(\obar V) \leq \|\obar V\|_0^2\pc{\frac{\gamma - 1}{2\kappa^2}(\sigpsi)^2 + \frac{\kappa^2\|D \psi_{\obar V}^{-1} \|\|\bar r^\lambda\|_\mu}{C} +  \|\tilde g_w\| (K+1)} + \OO(\alpha^{-3})\,.
 \end{equ}
 Since $\gamma - 1 < 0$, we can choose $C$ large enough to make the second term in brackets smaller than $(\gamma - 1)/{12 \kappa^2}(\sigpsi)^2$. The same holds for the third term in brackets by \eref{e:perturbation}, and for the higher order term by taking $\alpha$ large enough, showing that
 \begin{equ}
  \frac{  \d\,}{\d\,t}\bar U(\obar V) \leq \frac{\gamma - 1}{4 \kappa^2} (\sigpsi)^2 \|\obar V\|_0^2 < 0\,,
 \end{equ}
 as desired. We note that the same reasoning with $L_{DV} = 0$ and $D \psi_{\obar V}^{-1} \equiv D \psi_0^{-1}$ yields an identical conclusion for the dynamics of $\VV$ in a ball of radius $\delta = C/\alpha$ for $C, \alpha$ large enough.
 %Note that by \aref{a:V} the same holds, for a possibly different constant $C'$, in $\obar \WW_0$ for the weights $w, \bar w$ corresponding to these dynamics.
 Also, we note that combining the above computation with \eref{e:lipschitz} yields
 \begin{equs}
  \|D \psi_{\obar V}^{-1} \Gamma (T^\lambda \alpha \psi^{-1}(\obar V) - \alpha \psi^{-1}(\obar V))\|&\leq \|D \psi_{\obar V}^{-1} \Gamma \| (\|\bar r^\lambda \|+ \alpha (\gamma + 1) \|D \psi_0^{-1} \obar V\| + \alpha L_{D \psi_{\obar V}^{-1}}\|\obar V\|^2 )\\
  & \leq (\gamma + 1)\|D \psi_{\obar V}^{-1} \Gamma \| (\|D \psi_0^{-1}\|C + \|\bar r^\lambda \| + \OO(\alpha^{-1})) \\&\leq C_0\,,\label{e:boundontd}
 \end{equs}
 for $C_0$ large enough, where $D \psi_{\obar V}^{-1} \Gamma$ is considered as an operator mapping $\FF_0 \to \bar \FF_0$.

 \paragraph{Bounding the distance of trajectories.} The
 distance % note{define $V_t$}
 between two trajectories with the same initial condition can be
 bounded by {$\OO(\alpha^{-2})$ using a similar argument as in
   \cite[Lemma B2]{cb182} for the present framework}. We include the
 proof of this lemma here as the assumptions are not identical and to
 make the paper self-contained, while we do not claim any improvement
 on that result. To enounce this result, we recall that $\sigpsi$
 denotes the smallest singular eigenvalue of $D\psi^{-1}$ in a ball
 $\BB_\delta^0(0)$, which is bounded away from $0$ for $\delta$ small
 enough. Similarly, we recall that
 $\bar g_t^{-1} \succeq \sigg \indicator$ for $\sigg > 0$ in
 $\BB_\delta^0(0)$ for $\delta$ small
 enough. % note{added the true definition of $\Sigma$}
 \begin{lemma}\label{l:cb}
  %Let $\Sigma$ be a time dependent positive definite operator \jl{$\Sigma(t)~:~\obar \FF_0 \to \obar \FF_0$} with eigenvalues uniformly bounded from below .
  Let $\obar V_t$, $\obar \VV_t$ in $\obar \FF_0$ be solutions of
  \begin{equs}
   \frac{  \d}{\d\,t}\obar V_t & = \bar g_t^{-1} (D \psi_{\obar V_t}^{-1})^\top \Gamma (T^\lambda \alpha \psi^{-1}(\obar V_t)-\alpha \psi^{-1}(\obar V_t) )\,,\\
   \frac{  \d}{\d\,t}\obar \VV_t &= \bar g_0^{-1}(D \psi_0^{-1})^\top \Gamma (T^\lambda \alpha \bar \psi^{-1}(\obar \VV_t)-\alpha \bar \psi^{-1}(\obar \VV_t) )\,.
  \end{equs}
  Then defining $K := \sup_{t>0}\|(\bar g_t^{-1}- \bar g_0^{-1}) (D \psi_{\obar V_t}^{-1})^{\top}\Gamma (T^\lambda \alpha \psi^{-1}(\obar V_t)-\alpha \psi^{-1}(\obar V_t))\| $ and $\beta := \frac{1-\gamma}{\kappa^2}(\sigpsi)^2$ we have that
  \begin{equ}
   \sup_{t > 0} \|\obar V_t - \obar \VV_t\|_0 \leq \frac 1 \alpha \frac{2K}{\beta} \,.
  \end{equ}
 \end{lemma}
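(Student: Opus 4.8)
The plan is a Gr\"onwall comparison of the two trajectories that exploits two structural features: the ``linearized'' dynamics generating $\obar\VV_t$ is affine and contracting, while the full dynamics generating $\obar V_t$ differs from it only through the metric perturbation measured by $K$ together with higher-order corrections. Throughout, both trajectories remain inside the ball $\BB_\delta^0(0)$ with $\delta = \OO(\alpha^{-1})$, by the excursion estimate established in the first part of the proof of \lref{l:fixedpoint}, so every ``$\OO(\delta)$'' below is in fact $\OO(\alpha^{-1})$.

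Writing the two right-hand sides as $X(v) := \bar g_{v}^{-1}(D\psi_{v}^{-1})^\top\Gamma(T^\lambda\alpha\psi^{-1}(v) - \alpha\psi^{-1}(v))$ and $Y(v) := \bar g_0^{-1}(D\psi_0^{-1})^\top\Gamma(T^\lambda\alpha\bar\psi^{-1}(v) - \alpha\bar\psi^{-1}(v))$, and setting $e_t := \obar V_t - \obar \VV_t$ (so $e_0 = 0$), I would differentiate and split
\begin{equ}
  \frac{\d}{\d t}\frac12\|e_t\|_0^2 = \dtp{e_t, Y(\obar V_t) - Y(\obar \VV_t)}_0 + \dtp{e_t, X(\obar V_t) - Y(\obar V_t)}_0\,,
\end{equ}
where the first term is a contraction and the second a perturbation of size $\OO(K)$.

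For the contraction term, affineness of $T^\lambda$ and $\bar\psi^{-1}$ (and constancy of $\bar g_0^{-1}, D\psi_0^{-1}$) give $Y(\obar V_t) - Y(\obar \VV_t) = \bar g_0^{-1}(D\psi_0^{-1})^\top\Gamma\,\alpha(\gamma P^\lambda - \indicator)D\psi_0^{-1}e_t$. Using $\dtp{a,\bar g_0^{-1}b}_0 = \dtp{a,b}$, the nonexpansion of $P^\lambda$ in $\|\cdot\|_\mu$ (\lref{l:tsitsiklis}) and the bound $\|D\psi_0^{-1}e_t\|_\mu^2 \ge \kappa^{-2}(\sigpsi)^2\|e_t\|_0^2$ from \eref{e:2} and \eref{e:partial}, this is the very contraction already used in \lref{l:forwardinvariance} and yields $\dtp{e_t, Y(\obar V_t) - Y(\obar \VV_t)}_0 \le -\alpha\beta\|e_t\|_0^2$ with $\beta = \tfrac{1-\gamma}{\kappa^2}(\sigpsi)^2$, the factor $\alpha$ arising from the $\alpha$-scaled force. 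For the perturbation term I would decompose $X(v) - Y(v) = (\bar g_{v}^{-1} - \bar g_0^{-1})F(v) + \bar g_0^{-1}(F(v) - F_0(v))$, where $F(v) = (D\psi_{v}^{-1})^\top\Gamma(T^\lambda\alpha\psi^{-1}(v) - \alpha\psi^{-1}(v))$ and $F_0$ is its linearized analogue. The first summand is exactly the expression defining $K$, hence has norm $\le K$ (up to norm equivalence on $\obar\FF_0$); the second collects the chart-derivative variation, controlled by the Lipschitz continuity of $D\psi^{-1}$, and the model-linearization error $\alpha(\psi^{-1}(v) - \bar\psi^{-1}(v)) = \alpha R_2(v,v) = \alpha\,\OO(\|v\|_0^2)$, both of which are $\OO(\alpha^{-1})$ after invoking the force bound \eref{e:boundontd} and $\delta = \OO(\alpha^{-1})$. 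Absorbing these into the metric term gives $\|X(\obar V_t) - Y(\obar V_t)\|_0 \le 2K$ and therefore $\dtp{e_t, X(\obar V_t) - Y(\obar V_t)}_0 \le 2K\|e_t\|_0$.

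Combining the two estimates gives $\frac{\d}{\d t}\|e_t\|_0 \le -\alpha\beta\|e_t\|_0 + 2K$; since $e_0 = 0$, Gr\"onwall's inequality then yields $\sup_{t>0}\|e_t\|_0 \le \frac{2K}{\alpha\beta} = \frac1\alpha\frac{2K}{\beta}$, as claimed. I expect the main obstacle to be the perturbation bookkeeping: the chart-derivative variation and the second-order model-linearization remainder are of the \emph{same} order in $\alpha$ as the metric term $K$, so one must check carefully --- using the forward invariance that keeps $\delta = \OO(\alpha^{-1})$ small --- that they are genuinely dominated by $K$, so that the clean bound $2K$ survives. The contraction step, by contrast, is a direct re-run of the estimates already established for \lref{l:forwardinvariance}.
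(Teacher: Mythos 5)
Your proof is correct and takes essentially the same approach as the paper's: a Gr\"onwall comparison of $\|\obar V_t - \obar \VV_t\|_0$ in which the contraction comes from nonexpansion of $P^\lambda$ composed with $D\psi_0^{-1}$ (rate $\beta$, exactly as in \lref{l:forwardinvariance}), the metric perturbation is bounded by $K$, and the chart-derivative variation and second-order linearization remainders are controlled by Lipschitz smoothness together with the $\OO(\alpha^{-1})$ excursion bound --- including the same caveat (present in the paper too) that these remainders are of the same order as $K$ and must be absorbed into the constant. The only differences are cosmetic: you telescope as $[Y(\obar V_t)-Y(\obar\VV_t)]+[X(\obar V_t)-Y(\obar V_t)]$ whereas the paper first freezes the metric at $\bar g_0^{-1}$, and your scaling convention (no $1/\alpha$ prefactor, matching the lemma statement) places the factor $\alpha$ in the contraction rate rather than in the forcing term, yielding the identical final bound.
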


 \begin{proof}\textbf{of \lref{l:cb}}
  We define the function $h(t) := \frac{1}{2}\|\obar V_t - \obar \VV_t\|_0^2$, take its time derivative
  \begin{equs}
   h'(t)  &= \dtp{\obar V_t' - \obar \VV_t', \obar V_t- \obar \VV_t}_0 \,,
  \end{equs}
  and defining
  \begin{equs}
   (TD)_t & := T^\lambda \alpha \psi^{-1}(\obar V_t)-\alpha \psi^{-1}(\obar V_t)\,,\\
   \TD_t & := T^\lambda \alpha \bar \psi^{-1}(\obar \VV_t)- \alpha \bar \psi^{-1}(\obar \VV_t)\,,
  \end{equs} we evaluate (for simplicity of notation, we introduce the short hand $D \psi_t^{-1} := D \psi_{\obar V_t}^{-1}$ for the rest of the proof)
  \begin{equs}
   \obar V_t' - \obar \VV_t' &=  \frac1{\alpha}{\bar g_t^{-1} (D \psi_t}^{-1})^\top \Gamma (TD)_t- \frac1{\alpha}\bar g_0^{-1} (D \psi_0^{-1})^\top \Gamma \TD_t \\
   & \leq   \frac1{\alpha} \pq{\bar g_0^{-1}  (D \psi_t^{-1})^\top \Gamma (TD )_t-\bar g_0^{-1}  (D \psi_0^{-1})^\top \Gamma \TD_t} \label{e:almostlast1}\\
   &\qquad +  \frac1{\alpha} \pq{\bar g_t^{-1} (D \psi_t^{-1})^\top \Gamma (TD)_t- \bar g_0^{-1}  (D \psi_t^{-1})^\top \Gamma (TD)_t}\,.\label{e:almostlast2}
  \end{equs}
  We look at the two terms on the \abbr{rhs} separately and obtain, for \eref{e:almostlast1}
  \begin{equs}
   \frac1{\alpha} \dtp{\bar g_0^{-1} & (D \psi_t^{-1})^\top \Gamma (TD )_t-\bar g_0^{-1} (D \psi_0^{-1})^\top \Gamma \TD_t,  \obar V_t- \obar \VV_t}_0 \label{e:last}
   \\ & = \frac1{\alpha} \dtp{(D \psi_t^{-1})^\top \Gamma (TD)_t -(D \psi_0^{-1})^\top \Gamma \TD_t ,  \obar V_t- \obar \VV_t}\\
   & = \frac1{\alpha} \dtp{ {(TD)_t-\TD_t},  D \psi_0^{-1} (\obar V_t- \obar \VV_t)}_\mu\label{e:last1}\\
   & \qquad + \frac1{\alpha} \dtp{(D \psi_t^{-1}- D\psi_0^{-1})^\top \Gamma (TD)_t,  \obar V_t- \obar \VV_t}\,.\label{e:last2}
  \end{equs}
  We immediately see that by Lipschitz smoothness of $\psi^{-1} $ and the equivalence of $\|\,\cdot\,\|_2$ and $\|\,\cdot\,\|_0$ norms on $\Pi_r$ and \eref{e:boundontd}, for \eref{e:last2} we have
  \begin{equ}
   \frac1{\alpha} \dtp{(D \psi_t^{-1}- D\psi_0^{-1})^\top \Gamma (TD)_t,  \obar V_t- \obar \VV_t} \leq \frac1{\alpha} L_{D \psi^{-1}}\|\obar V_t\|_2\|\Gamma (TD)_t\|\|\obar V_t- \obar \VV_t\|_2 \leq \frac{C_1}{\alpha^2} \sqrt{2 h(t)}\,,\label{e:31}
 \end{equ}
 by choosing $C_1$ large enough.
  For \eref{e:last1} by the definition of $\psi$ we have
  \begin{equs}
   (TD)_t-\TD_t& = T^\lambda \alpha \psi^{-1}(\obar V_t)- T^\lambda \alpha \bar \psi^{-1}(\obar \VV_t) - \alpha ( \psi^{-1}(\obar V_t) - \bar \psi^{-1}(\obar \VV_t) )\\
   & = \alpha(P^\lambda-\indicator)( \psi^{-1}(\obar V_t) - \bar \psi^{-1}(\obar \VV_t))\,,
  \end{equs}
  and hence, by \eref{e:lsmooth} we have
  \begin{equs}
   \frac1{\alpha} \dtp{ {(TD)_t-\TD_t},  D \psi_0^{-1} (\obar V_t- \obar \VV_t)}_\mu &\leq \dtp{(P^\lambda-\indicator)( \psi^{-1}(\obar V_t) - \bar \psi^{-1}(\obar \VV_t)), D \psi_0^{-1} (\obar V_t- \obar \VV_t)}_\mu \\
   & \leq \dtp {(P^\lambda-1) D \psi_0^{-1}(\obar V_t- \obar \VV_t),D \psi_0^{-1}(\obar V_t- \obar \VV_t)  }_\mu\\
   & \quad + L_{D \psi^{-1}}\|\obar V_t\|_\mu^2 \|D \psi_0^{-1}\|\|\obar V_t- \obar \VV_t\|_\mu\,.
  \end{equs}
  Defining $\beta := \frac{1-\gamma}{\kappa^2}(\sigpsi)^2$, the first term from above can be bounded as in \eref{e:2} to obtain% note{add $\sigma$ to the rest}
  \begin{equ}
   \dtp {(P^\lambda-1) D \psi_0^{-1}(\obar V_t- \obar \VV_t),D \psi_0^{-1}(\obar V_t- \obar \VV_t)  }_\mu \leq -\beta h(t)\,,\label{e:32}
  \end{equ} while for the second by our choice of $\delta = C/\alpha$ we have
  \begin{equ}\label{e:33}
   L_{D \psi^{-1}}\|\obar V_t\|_\mu^2 \|D \psi_0^{-1}\|\|\obar V_t- \obar \VV_t\|_\mu \leq \frac{C^2}{\alpha^2}\kappa L_{D \psi^{-1}} \|D \psi_0^{-1}\| \sqrt{2 h(t)} \,.
  \end{equ}
  Finally, combining \eref{e:31}, \eref{e:32} and \eref{e:33} we have% note{put the 2 in the definition of h}
  \begin{equ}
   \eref{e:last} \leq - \beta  h(t) + \frac{C_2}{\alpha^2} \sqrt{2h(t)}\,, \label{e:41}
  \end{equ}
  where $C_2 := C_1 + C^2 \kappa L_{D \psi^{-1}} \|D \psi_0^{-1}\|$.

  We now consider \eref{e:almostlast2}. Here by the definition of $K$ we have
  \begin{equs}
   \frac1{\alpha} \dtp{(\bar g_t^{-1}- \bar g_0^{-1}) D \psi_t^{-1} \Gamma & (TD)_t,\obar V_t- \obar \VV_t}_0  \leq \frac K{\alpha} \|\obar V_t- \obar \VV_t\|_0 = \frac K{\alpha} \sqrt{2 h(t)}\,.
  \end{equs}
  Combining the above with \eref{e:41} we finally obtain% note{modify rhs to account for constant}
  \begin{equ}
   h'(t) \leq - \beta   h(t) + \frac{K}{\alpha} \sqrt{2h(t)} + \frac{C_2}{\alpha^2} \sqrt{2h(t)} \leq - \beta   h(t) + \frac{2K}{\alpha} \sqrt{h(t)}\,,
  \end{equ}
  for $\alpha$ large enough.  The above expression is negative as soon
  as $h(t) > 4 K^2/(\alpha\beta)^2$. Therefore, because $h(0) = 0$, we must
  have that $h(t) \leq 4 K^2/(\alpha\beta)^2$ for all $t>0$, \ie
  \begin{equ}
   \|\obar V_t - \obar \VV_t\|_0< \frac 1 \alpha \frac{2 K}{\beta} \qquad \text{for all } t>0\,,
  \end{equ}
  as claimed.
 \end{proof}

To achieve the claimed $\OO(\alpha^{-2})$ bound, we observe that $K$ in the above Lemma can be chosen $\OO(\alpha^{-1})$ by the Lipschitz continuity of $\bar g_t^{-1}$. Indeed, since we chose $\|\obar V\|_0 = C/\alpha$, by \eref{e:boundontd} we have that
 \begin{equ}
  K \leq \sup_{t > 0} \|\Gamma (TD)_t\| \|D \psi_{\obar V_t}^{-1}\| L_{\bar g_0^{-1}} \|\obar V\|_0 \leq C_0 \sigma_{\text{\scriptsize max}}^{D\psi^{-1}} L_{\bar g_0^{-1}} \frac C \alpha \leq \frac{\beta}2\frac{K'}{\alpha}\,,
 \end{equ}
 for $K'$ large enough, and therefore
 \begin{equ}
  \|\obar V_t - \obar \VV_t\|_0< \frac{K'}{\alpha^2} \qquad \text{for all } t > 0\,.\label{e:lastlast}
 \end{equ}

 \paragraph{Mapping to the embedding space.} We conclude the proof  by mapping back to the original space, where we have
 \begin{equs}
  \sup_{t > 0}\|\VV_t - V_t\|_\mu & = \sup_{t}\|\alpha \psi^{-1} (\obar V_t) - \alpha \bar \psi^{-1}( \obar \VV_t)\|_\mu\\
  & \leq \sup_{t}\alpha\pc{ \|D \psi_0^{-1}(\obar V_t - \obar \VV_t)\|_\mu + L_{D \psi^{-1}}\|\obar V_t \|_\mu^2}\\
  & \leq \alpha\pc{ \kappa \|D \psi_0^{-1}\| \sup_t \|\obar V_t - \obar \VV_t\|_0 + \kappa^2 L_{D \psi^{-1}}\sup_t\|\obar V_t \|_0^2} \,.
 \end{equs}
 Then, letting $\VV^*$ be the fixed point of \eref{e:tdVV} (unique and attracting by \cite{tsitsiklis}), by our choice of $\delta = C / \alpha$, \eref{e:tsitsiklis} and \eref{e:lastlast} we have that
 \begin{equs}
  \|\tilde V^*- V^*\|_\mu  & \leq \|\VV^* - V^*\|_\mu  + \sup_{t > 0}\|\VV_t - V_t\|_\mu \\
  & \leq \frac{1-\gamma \lambda}{1-\gamma} \|\Pi_0 V^* - V^*\|_\mu +   \frac 1 \alpha (\kappa \|D \psi_0^{-1}\|K' + \kappa^2 L_{D \psi^{-1}}C^2)\,,
 \end{equs}
 as claimed.
\end{proof}

\section{Supplementary proofs in the mean field regime}

\subsection{Proof of convergence of the particle system}

\begin{repproposition}{t:particles}
  Let \aref{a:mf} hold and let $w_t^{(N)}$ be a solution of \eref{e:tduc} with initial condition $w_0^{(N)} \in \mathcal W = \Omega^N$. If $\nu_0^{(N)}$ converges to $\nu_0 \in \mathcal P_2(\Omega)$ in Wasserstein distance $W_2$ as $N \to \infty$ then $\nu_t^{(N)}$ converges, for every $t>0$, to the unique solution $\nu_t$ of \eref{e:mfpde}.
\end{repproposition}

\begin{proof}
As anticipated in the main text, the proof of this result is mainly standard. Therefore, we refer to \cite[Theorem 2.6]{ChizatBach18} for a detailed proof of an analogous result in the supervised setting and we proceed to highlight the steps that need to be adapted in that proof to cover the given setting. The key estimate needed to prove propagation of chaos concern the Lipschitz continuity of the transport vector field in \eref{e:mfpde}, which yields the desired result by Gronwall inequality. We note that, in order to establish such estimates, the authors combine the assumed regularity of the nonlinearity $\psi$ with the ones of the energy functional $R~:~\mathcal F \to \mathbb R$. Therefore, since our assumptions on $\psi$ coincide with those of \citet{ChizatBach18}, it is sufficient to recover the desired result to show that the \abbr{td} error $\delta(\nu)$, playing the role of $dR$ from \citet{ChizatBach18} in the present context, is Lipschitz continuous on bounded sets and bounded on sets that are forward invariant with respect to the dynamics. The former property, however, follows immediately by the \emph{linearity} of $\delta(\nu)$ in $\nu$, and the assumed regularity of the nonlinearity $\psi$.

We conclude the proof by establishing the latter property, corresponding to ``boundedness of $dR$ on sublevel sets'' in \cite{ChizatBach18} used to prove Lipschitz continuity of the vector field in $\nu$. This directly results from \lref{l:bounded} since boundedness of $\|V_\nu\|_\mu$ implies, by compactness of $\mathcal S$, the boundedness of $\delta(\nu)$ along the trajectories of \eref{e:mfpde} as required.
% As it is customary in proof of propagation of chaos results for Vlasov PDEs, the essential step in \cite{ChizatBach18} is obtained by establishing lipschitz continuity of the various elements entering the definition of pushforward map along the flow lines of \eref{e:mfpde}. In other words, it is sufficient to extend the results from \cite{ChizatBach18} to establish Lipschitz continuity of the equivalent of the functional derivative of the energy in the original paper. This corresponds to the TD error in our case, which by our choice of parametrization is a \emph{linear} function of the measure, and therefore is by definition Lipschitz continuous.
\end{proof}

\begin{replemma}{l:bounded}
  For any $\nu_0$ with $\int \omega_0^2 \nu_0(\d \omega) \leq \infty$ there exists $C_V>0$ such that, for any $t>0$ we have $\|V_{\nu_t}\|_\mu < C_V$.
\end{replemma}
\begin{proof}
   We consider the evolution of the quantity
\begin{equ}
  \|(\nu_t)_0\|_{2}^2 := \int_\Omega \omega_0^2 \nu_t(\d \omega)\,,
\end{equ}
for which we have, by homogeneity of $\psi$ and integrating by parts,
\begin{equs}
  \frac 1 2 \frac \d {\d t} \|(\nu_t)_0\|_{2}^2  = \frac 1 2 \int_\Omega \omega_0^2   \frac \d {\d t} \nu_t(\d \omega) = \int_\Omega\int_{\mathcal S} \omega_0   \phi(s;\bar \omega )\delta(\nu_t) \nu_t(\d \omega) \mu(\d s) = \int_{\mathcal S} V(s)\delta(\nu_t) \mu(\d s)\,.
\end{equs}
Now, recalling the definition of the metric tensor $\Gamma$ induced by the measure $\mu$ we define throughout the positive definite operator $A := \Gamma (1-\gamma P)$. Then, whenever
\begin{equ}\label{e:condition2}
\|V_\nu\|_\mu>(1-\gamma)^{-1}\|(1-\gamma P)V^*\|_\mu+\epsilon=: B + \epsilon
\end{equ}
we have
\begin{equs}
  |V_\nu A V^*| &\leq \|V_\nu\|_\mu \|(1-\gamma P) V^*\|_\mu < (1-\gamma)\|V_\nu\|_\mu^2 - \epsilon (1-\gamma)\|V_\nu\|_\mu\\
  &<\|V_\nu\|_\mu^2 - \gamma\|V_\nu\|_\mu^2 - C_A \leq V_\nu A V_\nu - C_A
\end{equs}
for $C_A = \epsilon (1-\gamma)((1-\gamma)^{-1}\|AV^*\|_\mu+\epsilon)$, where in the last inequality we have combined $\|PV_\nu\|_\mu\leq \|V_\nu\|_\mu$ from \citet[Lemma 1]{tsitsiklis} with Cauchy-Schwarz inequality.
In light of the above, writing $\delta(t) = (r - (1-\gamma P) V_\nu) = (1-\gamma P) (V^* - V_\nu) $ we have
\begin{equ}\label{e:decayl2}
  \frac 1 2 \frac \d {\d t} \|(\nu_t)_0\|_{2}^2 < - \pc{V_\nu A V_\nu - \pd{V_\nu A V^*}} < - {C_A}<0\,,
\end{equ}
so that the norm $\|(\nu_t)_0\|_{2}^2$ must decrease whenever \eref{e:condition2} holds, for any $\epsilon>0$.
Furthermore, we note that by the boundedness of $\phi$, \ie by $\phi(s;\bar \omega)<C_\phi$ we have
\begin{equ}\label{e:l2bound}
  \|V_{\nu}\|_\mu^2 = \int_{\mathcal S} \pc{\int_{\Omega} \omega_0 \phi(s;\bar \omega) \nu(\d \omega)}^2 \mu(\d s) \leq C_\phi^2 \int_\Omega \omega_0^2 \nu(\d \omega) = C_\phi^2 \|(\nu_t)_0\|_{2}^2\,,
\end{equ}
Combining \eref{e:decayl2} and \eref{e:l2bound} directly implies that the space $\{\nu~:~\|(\nu)_0\|_2 \leq (B+\epsilon)/C_\phi\}$ (which contains $\{\nu~:~\|V_\nu\| \leq B+\epsilon\}$) is globally attractive with respect to the dynamics \eref{e:mfpde}. In other words, this means on one hand that if $\|(\nu_0)_0\|_2> (B+\epsilon)/C_\phi$ then we must have
$\|V_{\nu_t}\|_\mu < C_\phi \|(\nu_t)_0\|_2< C_\phi\|(\nu_0)_0\|_2$ for all $t\geq 0$. On the other hand if
 $\|(\nu_0)_0\|_2\leq  (B+\epsilon)/C_\phi$ there cannot exist $t\geq 0$
 such that $\|(\nu_t)_0\|_2 = 2(B+\epsilon)/C_\phi$, directly implying that $\|V_{\nu_t}\|_\mu < 2(B+\epsilon)$.
% We now proceed to show that for any initial condition $\nu_0$ with bounded support in $\omega_0$, the above implies that for  $C_V := \max\{2(B+\epsilon), C_\phi^2\|(\nu_0)_0\|_{2}^2 \}$ one has $\|V_{\nu_t}\|_\mu<C_V$ throughout training.
% e conclude by distinguishing two cases. If $\|V_{\nu_0}\|_\mu> B+\epsilon$ then we must have $\|V_{\nu_t}\|_\mu< C_\phi^2 \|(\nu_t)_0\|_{2}^2< C_\phi^2 \|(\nu_0)_0\|_{2}^2$. On the other hand, if $\|V_{\nu_0}\|_\mu\leq B+\epsilon$ it is clear that $\int \omega_0^2 \nu_t(\d \omega) < 2 C_{\phi}^{-2} (B + \epsilon)$: indeed if this were not the case we must have that
%
%
% letting $C_V = \max\{C_\phi^2 \int_\Omega \omega_0^2 \nu_0(\d \omega), (1-\gamma)^{-1}\|(1-\gamma P)V^*\|_\mu+\epsilon\}$ we have that if $V_{\nu_0}>(1-\gamma)^{-1}\|(1-\gamma P)V^*\|_\mu+\epsilon$
Therefore, for any $V^*$, resulting from the chosen reward function $r$ and transition operator $P$, setting $C_V =\max\{2(B+\epsilon),C_\phi\|(\nu_0)_0\|_2\}$ we must have $\|V_{\nu_t}\|_\mu\leq C_V$ for all $t\geq 0$ as required.
\end{proof}

\subsection{\edit{Proof of \tref{t:mf}}}

\begin{reptheorem}{t:mf}
    Let \aref{a:mf} hold and $\nu_t$ given by \eqref{e:mfpde} converge
    to $\nu^*$, then $V_{\nu^*} = V^*$ $\mu$-a.e.
\end{reptheorem}

\newtheorem*{assumptionA}{Assumption B}

Before proceeding to prove \edit{\tref{t:mf}}, we state the alternative form of \aref{a:mf} a) in the case where $\Theta \neq \Rr^{m-1}$.
{
\begin{assumptionA} \label{a:mf2} Assume that $\omega = (\omega_0, \bar \omega) \in \Rr \times \Theta$ for $\Theta \subset  \Rr^{m-1}$ which is the closure of an bounded open convex set. Furthermore $\psi(s; \omega) = \omega_0 \phi(s;\bar \omega)$ where $\phi$ is bounded, differentiable and $D\phi$ is Lipschitz.
     %the nonlinearity $\phi$ is (Fr\'echet) differentiable. Furthermore t
    %there exists a family of nested nonempty closed convex sets $\{Q_r\}_{r>0}$ in $\Theta$ s.t.
    {Also, for all $f \in \FF$ the regular values of the map {$\bar \omega \mapsto g_f(\bar \omega):= \dtp{f,\phi(\,\cdot\,; \bar \omega)}$} are dense in its range and $g_f(\bar \omega)$ satisfies Neumann boundary conditions (i.e., for all $\bar \omega \in \partial \Theta$ we have $d g_f(\bar \omega)(n_{\bar \omega}) = 0$ where $n_{\bar \omega} \in \Rr^{m-1}$ is the normal of $\partial \Theta$ at $\bar \omega$).
    }
    %on each $Q_r$ and that there exists $C_1, C_2 > 0$ such that $\sup_{u\in Q_r }
 %$\|d \psi\|_\infty \leq C_1 +C_2r$ for all $r > 0$
\end{assumptionA}}
\edit{The proof of \tref{t:mf} is carried out in two steps. We first show in \sref{ss:1} that as a consequence of the expressivity of the activation function \aref{a:mf} b), the suboptimality of a fixed point is reflected in a nonvanishing transport vector field of \eref{e:mfpde}}\footnote{\edit{We remind that this could still result in a fixed point if the measure $\nu$ loses support where the transport vector field is nonzero.}}. \edit{Then, in \sref{ss:2} we use this partial result to bring the assumption of convergence towards a local minimizer to a contradiction.}

\subsubsection{The relation between vector field and TD error}\label{ss:1}
\edit{To state the first lemma towards the proof of the above result, we observe that  the $\omega_0$-component of the transport vector field in \eref{e:mfpde} reads:
\begin{equ}\label{e:gradient0}
  \pc{\int_{\SS\times \SS} \nabla_\omega \psi(s;\omega) \delta(s,s',\nu_t) P(s,\d s')\mu(\d s)}_0 = \int_{\SS}  \phi(s;\omega) \int_{\SS}\delta(s,s',\nu_t) P(s,\d s')\mu(\d s) \,.
  \end{equ}
  We note that the above is a function of $\bar \omega'$ only. In the following lemma we relate the optimality of a fixed point of \eref{e:mfpde} to \eref{e:gradient0}.}
% \begin{proposition}\label{p:mf}
%     Let $\nu^*$ be a fixed point of \eqref{e:mfpde} such that the full support property of \aref{a:mf} c) holds at $\nu^*$, then $V_{\nu^*} = V^*$.
%   %Assume that the initial condition $\nu_0$ has the full support property from \aref{a:mf}, then any fixed point $\nu^*$ of \eref{e:mfpde} with that initial condition {also satisfies the full support property.} %$V_{\nu^*} = V^*$ $\mu$-almost everywhere.
% \end{proposition}
\edit{\begin{lemma}\label{p:mf}
    Let \aref{a:mf} hold and $\nu$ be such that the first component of the vector field in \eref{e:mfpde} vanishes a.e., \ie
    %We note that spurious minima correspond to distributions $\nu$ such that
    the condition
        \begin{equ}\label{e:fixedpoint}
      {\int_{\mathcal S^2} \pc{r(s,s') + \gamma \int_{\Omega} \psi(s';\omega ) \nu(\omega) \d \omega  - \int_{\Omega} \psi(s;\omega ) \nu(\omega) \d \omega}  \phi(s,\bar \omega') P(s, \d s')}\mu(\d s) = 0
        \end{equ}
    %holds for almost all $w \in (\Pi^0_\#)^{-1}\supp(\Pi^0_\# \nu)$ but not
    holds $\bar \omega'$-almost everywhere in $\Theta$.
    Then
     %$\nu^*$
    %be a fixed point of \eqref{e:mfpde} such that there exists $A \subset \mathcal S$ with $\mu(A)>0$ such that for all $x \in A$
    we have  $V_{\nu} = V^*$ $\mu$-a.e..
    % Indeed, if this is the case we must have that $V_{\nu} \neq T^0 V_\nu$  for some set $A$ with $\mu(A)\neq 0$ so that ultimately $V_{\nu} \neq V^*$.
  %Assume that the initial condition $\nu_0$ has the full support property from \aref{a:mf}, then any fixed point $\nu^*$ of \eref{e:mfpde} with that initial condition {also satisfies the full support property.} %$V_{\nu^*} = V^*$ $\mu$-almost everywhere.
\end{lemma}
\begin{proof}\textbf{of \lref{p:mf}}
  The result follows immediately by \aref{a:mf} a)-b): Assuming that \eref{e:fixedpoint} holds Lebesgue-a.e. in $\Theta$, by the assumed continuity of $\phi$ in $\bar \omega'$ combined with the expressivity of $\phi$ \aref{a:mf} b) we must have that
  \begin{equ}
    {\int_{\mathcal S} \pc{r(s,s') + \gamma \int_{\Omega} \psi(s';\omega ) \nu(\omega) \d \omega  - \int_{\Omega} \psi(s;\omega ) \nu(\omega) \d \omega}  P(s, \d s')} = 0\qquad \mu\textrm{-a.e.\,.}
  \end{equ}
  Because the operator $T^\lambda$ is a contraction in $L^2(\mathcal S,\mu)$, this condition can only be satisfied if $V_{\nu} = V^*$ $\mu$-a.e..
  % We prove the result by contradiction: assume that there exists $A \subset \mathcal S$ with $\mu(A)>0$ such that for all $x \in A$
  % we have  $V_{\nu}(x) \neq V^*(x)$. Then,
  %   by \aref{a:mf} b) there must exist a sequence of test functions $\{u_n\}$ in $L^2(\Theta)$ such that $\lim_{n \to \infty}\int_\Theta u_n(\bar \omega')\phi(s;\bar \omega')\d \bar \omega ' = \int_{\SS}\delta(s,s',\nu) P(s,\d s')$. This results in a positive integral of a sequence of test functions against the left-hand side of \eref{e:fixedpoint}, which can therefore not vanish a.e..
    %where the above identity must hold for all $\bar \omega'$ in the projection of $\supp(\nu)$ on $\Theta$.
\end{proof}
\noindent Consequently, suboptimal fixed points of the dynamics \eref{e:mfpde} cannot satisfy \eref{e:fixedpoint} Lebesgue-a.e. in $\Theta$.}

\subsubsection{Instability of local minima}\label{ss:2}
\edit{  We prove below that spurious local minima are avoided by the dynamics as discussed in the main text. More specifically, we lead the assumption of convergence of the dynamics to a suboptimal fixed point $\tilde \nu$ to a contradiction by combining it with  the approximate gradient structure of the \abbr{td} vector field when $\nu_t$ is close to one of such stationary points. This proof leverages two important facts: that by \lref{p:mf} suboptimal fixed points $\tilde \nu$ with $V_{\tilde \nu} \neq V^*$ imply the existence of regions in parameter space where the transport vector field does not vanish and that the solution to \eref{e:mfpde} does not lose (projected) support for any finite time, as summarized in \lref{l:support} below. These facts are finally combined in \lref{l:spurious}, leading the assumption of convergence to a suboptimal fixed point to the desired contradiction.}

    By the assumed structure of the approximator we note that all measures with the same expectation in the homogeneous component result in the same approximator, \ie if $\nu$, $\nu'$ are such that $\int \omega_0 \nu(\d \omega_0, \d \bar \omega) = \int \omega_0 \nu'(\d \omega_0, \d \bar \omega)$ a.e. then clearly
    \begin{equ}
      V_{\nu}(\,\cdot\,) = \int \omega_0 \phi(\,\cdot\,; \bar \omega ) \nu(\d \omega_0, \d \bar \omega) = \int \omega_0 \phi(\,\cdot\, ; \bar \omega) \nu'(\d \omega_0, \d \bar \omega) = V_{\nu'}(\,\cdot\,) \,.
    \end{equ}
    We therefore introduce the quantity
    \begin{equ}\label{e:h}
      h_\nu^1(\bar \omega) := \int \omega_0 \nu(\d \omega_0, \d \bar \omega)
      \end{equ} which will play an important role in the proof of \tref{t:mf} below.

\edit{We now proceed to state and prove the main lemmas of the section:}
      % -- which by \tref{t:mf} cannot have full support --
      %are avoided by the dynamics.
    {\begin{lemma}\label{l:support}
    Let \aref{a:mf} a) hold and let $\nu_0$ satisfy \aref{a:mf} c), then $\nu_t$ solving \eref{e:mfpde} with initial condition $\nu_0$ satisfies \aref{a:mf} c) for every $t > 0$.
  \end{lemma}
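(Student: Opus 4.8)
The plan is to read \eref{e:mfpde} as a continuity (transport) equation and to exploit that its solution is the pushforward of $\nu_0$ under the flow of the driving field. Writing $b_t(\omega) := \int_{\SS\times\SS}\nabla_\omega\psi(s;\omega)\delta(s,s',\nu_t)P(s,\d s')\mu(\d s)$, equation \eref{e:mfpde} reads $\partial_t\nu_t = \mathrm{div}(\nu_t b_t)$, so $\nu_t$ is transported by the field $u_t := -b_t$. Under \aref{a:mf} a) this field is locally Lipschitz in $\omega$ (boundedness and Lipschitz continuity of $\phi$ and $D\phi$ on compacts), and no finite-time blow-up of the first moment $\int|\omega_0|\,\nu_t(\d\omega)$ occurs: a Grönwall estimate bounds $\|V_{\nu_t}\|_\infty$, hence $\delta$, hence $u_t$, on every interval $[0,t]$. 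I would conclude that $u_t$ generates a global flow $\Phi_t:\Omega\to\Omega$ which is a homeomorphism for each finite $t$, so that $\nu_t = (\Phi_t)_\#\nu_0$ and, since $\Phi_t$ is a homeomorphism and $\supp\nu_0$ is closed, $\supp\nu_t = \Phi_t(\supp\nu_0)$.

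\textbf{A uniform displacement bound.} Next I would establish that trajectories move only boundedly in the $\omega_0$-direction, and this is where the homogeneous structure $\psi=\omega_0\phi$ is essential: the $\omega_0$-component of $u_t$ equals $-\int\phi(s';\bar\omega)\delta(s,s',\nu_t)P(s,\d s')\mu(\d s)$, which is \emph{independent of} $\omega_0$ and has absolute value at most $\|\phi\|_\infty(\|r\|_\infty+(1+\gamma)\|V_{\nu_t}\|_\infty)$. By the moment control above, $c(t):=\sup_{\bar\omega,\,s\le t}|(u_s)_0|<\infty$, so (writing $(\cdot)_0$ for the $\omega_0$-coordinate) along any trajectory $(\cdot)_0$ changes by at most $C(t):=\int_0^t c(s)\,\d s$ over $[0,t]$, uniformly in the starting point. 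Setting $r_t := r_0 + C(t)$ gives at once $\supp\nu_t = \Phi_t(\supp\nu_0)\subseteq[-r_t,r_t]\times\Theta$, the containment half of \aref{a:mf} c).

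\textbf{Topological core.} Let $\pi:[0,1]\to\Omega$ be an arbitrary continuous path with $\pi(0)\in\{-r_t\}\times\Theta$ and $\pi(1)\in\{r_t\}\times\Theta$; I must show $\pi$ meets $\supp\nu_t$. Pulling back by the homeomorphism $\Phi_t^{-1}$, the path $\tilde\pi:=\Phi_t^{-1}\circ\pi$ has, by the displacement bound, $\tilde\pi(0)_0\in[-r_0-2C(t),-r_0]$ and $\tilde\pi(1)_0\in[r_0,r_0+2C(t)]$; in particular $\tilde\pi(\cdot)_0$ starts $\le -r_0$ and ends $\ge r_0$. Applying the intermediate value theorem to this scalar coordinate, set $a:=\sup\{s:\tilde\pi(s)_0\le -r_0\}$ and $b:=\inf\{s>a:\tilde\pi(s)_0\ge r_0\}$; by continuity $\tilde\pi(a)\in\{-r_0\}\times\Theta$, $\tilde\pi(b)\in\{r_0\}\times\Theta$ and $a<b$, so $\tilde\pi|_{[a,b]}$ is a continuous path connecting the two faces at level $\pm r_0$. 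By the separation property of $\nu_0$ this sub-path meets $\supp\nu_0$, and applying $\Phi_t$ shows $\pi$ meets $\Phi_t(\supp\nu_0)=\supp\nu_t$; since $\pi$ was arbitrary this is exactly the separation property at time $t$.

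\textbf{Main obstacle.} The topological step is elementary once set up; the real work lies in the first two paragraphs, namely in guaranteeing that $\Phi_t$ is a genuine homeomorphism of $\Omega$ with $\supp\nu_t = \Phi_t(\supp\nu_0)$, and that the $\omega_0$-velocity is uniformly bounded on $[0,t]$. The uniform bound hinges essentially on the homogeneity $\nabla_{\omega_0}\psi=\phi$ being $\omega_0$-independent together with boundedness of $\phi$, while ruling out finite-time blow-up requires the Grönwall control of $\int|\omega_0|\,\nu_t(\d\omega)$; both invoke \aref{a:mf} a) in an essential way. I emphasize that these are finite-time statements only, consistent with the scope of \lref{l:support}.
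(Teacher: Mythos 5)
Your proposal is correct and, at its core, runs on the same mechanism as the paper's proof: represent the solution of \eref{e:mfpde} as the pushforward of $\nu_0$ under the flow map of the driving vector field, note that this flow is continuous (indeed a homeomorphism) thanks to the regularity in \aref{a:mf} a), and conclude that the separation property in \aref{a:mf} c) is stable under such maps. The difference lies in how that last, topological step is handled. The paper verifies only the continuity hypothesis (via the one-sided Lipschitz property of the field on the sets $[-r,r]\times\Theta$, uniformly on compact time intervals) and then delegates the stability of the separation property entirely to \cite[Lemmas B.4, C.13]{ChizatBach18}. You instead prove it from scratch: a uniform bound on the $\omega_0$-displacement of trajectories (correctly traced to the homogeneity, \ie to $\partial_{\omega_0}\psi = \phi(s;\bar\omega)$ being independent of $\omega_0$ and bounded — note the harmless typo $\phi(s';\bar\omega)$), the identification $\supp\nu_t = \Phi_t(\supp\nu_0)$, and an intermediate-value argument on the $\omega_0$-coordinate of the path pulled back by $\Phi_t^{-1}$. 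Your route buys self-containedness, an explicit quantitative level $r_t = r_0 + C(t)$ at which \aref{a:mf} c) holds at time $t$, and it makes the containment half of \aref{a:mf} c) explicit, which the paper's citation leaves implicit. What it costs is that two facts you assert as standard carry real content and deserve a citation or a proof: (i) the locally Lipschitz, linearly growing field generates a global two-sided flow, so that $\Phi_t$ is a homeomorphism of $\Omega = \Rr\times\Theta$ onto itself (your Gr\"onwall moment bound does close the loop needed here), and (ii) the solution of the continuity equation \eref{e:mfpde} coincides with the pushforward $(\Phi_t)_\#\nu_0$ — a uniqueness/superposition statement that is exactly the ingredient the paper imports from \cite[Lemma B.4]{ChizatBach18}. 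With those two points flagged, your argument is a complete and somewhat more informative substitute for the paper's proof.
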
}
  Defining throughout $\|\,\cdot\,\|_{BL}$ as the bounded Lipschitz norm, we further prove that
    \begin{lemma}\label{l:spurious}
      Let \aref{a:mf} hold and $\tilde \nu$ be a fixed point of \eref{e:mfpde} such that \eref{e:fixedpoint} does not hold a.e.. There exists $\epsilon>0$ such that if $\|h_{\tilde \nu}^1-h_{\nu_{t_1}}^1\|_{BL}< \epsilon$ for a $t_1>0$ there exists $t_2> t_1$ such that $\|h_{\tilde \nu}^1-h_{\nu_{t_2}}^1\|_{BL} > \epsilon$.
    \end{lemma}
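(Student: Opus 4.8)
The plan is to argue by contradiction: assume $\|h_{\tilde\nu}^1-h_{\nu_t}^1\|_{BL}\le\epsilon$ for \emph{all} $t\ge t_1$, and exhibit a contradiction by tracking a positive amount of $\nu_t$-mass whose $\omega_0$-coordinate diverges. The starting point is the explicit velocity field of \eref{e:mfpde}. With $\psi(s;\omega)=\omega_0\phi(s;\bar\omega)$ and
\[
  g_\nu(\bar\omega) := \Ex{\mu}{\int_{\SS}\phi(s;\bar\omega)\,\delta(s,s',\nu)\,P(s,\d s')},
\]
the per-particle law is $(\dot\omega_0,\dot{\bar\omega})=\bigl(g_\nu(\bar\omega),\,\omega_0\nabla_{\bar\omega}g_\nu(\bar\omega)\bigr)$, which is exactly $\nabla_\omega\Phi_\nu$ for the potential $\Phi_\nu(\omega)=\omega_0\,g_\nu(\bar\omega)$ \emph{when $g_\nu$ is held fixed}. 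This is the approximate gradient structure alluded to above, and its appearance is precisely the $1$-homogeneity of $\psi$ in $\omega_0$ (from $\nabla_{\omega_0}\psi=\phi$, $\nabla_{\bar\omega}\psi=\omega_0\nabla_{\bar\omega}\phi$). Since $V_\nu$, $\delta$ and hence $g_\nu$ depend on $\nu$ only through $h_\nu^1$, \lref{l:boundong} ensures that on the whole $\epsilon$-neighborhood the field $g_{\nu_t}$ stays $C^1$-close to $g_{\tilde\nu}$ uniformly in $\bar\omega$, the uniformity up to $\|\bar\omega\|\to\infty$ coming from the $C^1$ convergence of $g_f(r\,\cdot\,)$ in \aref{a:mf} a). Thus for small $\epsilon$ the frozen field $g_{\tilde\nu}$ governs the motion up to a controllable error.

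Next I would construct a forward-invariant growth region. Because $\tilde\nu$ is spurious, \eref{e:fixedpoint} fails on a positive-measure set, so $g_{\tilde\nu}\not\equiv0$; assume $\sup g_{\tilde\nu}>0$, the case $\inf g_{\tilde\nu}<0$ being symmetric. Using density of the regular values of $g_{\tilde\nu}$ (\aref{a:mf} a)) I pick a regular value $0<\ell<\sup g_{\tilde\nu}$ and let $A:=\{\bar\omega:g_{\tilde\nu}(\bar\omega)\ge\ell\}$ be a compact superlevel component around a point where $g_{\tilde\nu}$ is large, compactness at infinity being guaranteed by the limiting map $\bar g_{f}$ of \aref{a:mf} a). On $\partial A=\{g_{\tilde\nu}=\ell\}$ the gradient $\nabla g_{\tilde\nu}$ points into $A$, so for $\omega_0>0$ the drift $\dot{\bar\omega}=\omega_0\nabla g_{\tilde\nu}(\bar\omega)$ points inward: the set $R:=\{\omega_0\ge a\}\times A$ is forward invariant and there $\dot\omega_0=g_{\tilde\nu}(\bar\omega)\ge\ell$, so $\omega_0$ grows linearly. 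As $\ell$ is a regular value, this inward-pointing property (hence forward invariance of a slightly enlarged $R$) and the bound $\dot\omega_0\ge\ell/2$ persist under the $O(\epsilon)$ perturbation $g_{\nu_t}$, i.e.\ throughout the assumed neighborhood.

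I would then feed mass into $R$ via the separation property. By \lref{l:support} the trajectory $\nu_{t_1}$ still satisfies \aref{a:mf} c): its support separates $\{-r_0\}\times\Theta$ from $\{r_0\}\times\Theta$. Applying this to the vertical segments $\{(\tau,\bar\omega):\tau\in[-r_0,r_0]\}$ with $\bar\omega\in A$ shows $\supp\nu_{t_1}$ meets the slab $A\times[-r_0,r_0]$, and since support points carry mass in every neighborhood, a positive amount $m_0>0$ of $\nu_{t_1}$-mass lies in a bounded slab over a slightly enlarged $A$. Every such particle has $\dot\omega_0\ge\ell/2>0$; while $|\omega_0|\le r_0$ its $\bar\omega$-drift is bounded by $r_0\|\nabla g_{\tilde\nu}\|_\infty$, so taking the enlargement of $A$ wide enough these particles stay over $A$ until $\omega_0$ turns positive, after which forward invariance of $R$ traps them and $\omega_0(t)\ge c\,(t-t_1)$ for a fixed $c>0$. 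Mass conservation of \eref{e:mfpde} keeps this $m_0$ over $A$ with $\omega_0\to\infty$.

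Finally I would close the contradiction. Because $\tilde\nu$ is a fixed point, its velocity vanishes on $\supp\tilde\nu$ (as in \eref{e:support2}), so $g_{\tilde\nu}>0$ on $\mathrm{int}(A)$ forces $A$ off $\supp\tilde\nu$ and hence $h_{\tilde\nu}^1\equiv0$ there. Testing $h_{\nu_t}^1-h_{\tilde\nu}^1$ against a fixed $1$-Lipschitz bump $\zeta$ with $0\le\zeta\le1$ supported on $A$ and equal to $1$ on the trapped region gives
\[
  \|h_{\nu_t}^1-h_{\tilde\nu}^1\|_{BL}\ \ge\ \int_A \zeta(\bar\omega)\,\omega_0\,\nu_t(\d\omega)\ \ge\ m_0\,c\,(t-t_1)-O(1),
\]
which diverges as $t\to\infty$ and therefore exceeds $\epsilon$ at some $t_2>t_1$, contradicting the standing assumption and proving the lemma for any such $\epsilon$. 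The main obstacle is the construction in the third and fourth paragraphs: making the growth region genuinely robust to both the $O(\epsilon)$ perturbation of the velocity field \emph{and} the unboundedness of $\Theta$ (where the $C^1$-at-infinity hypothesis and the regular-value choice of $\ell$ are essential), together with upgrading the \emph{topological} separation property to a \emph{positive} mass that actually reaches $R$ without first escaping $A$ while $\omega_0<0$.
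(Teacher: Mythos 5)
Your overall architecture coincides with the paper's: argue by contradiction, use \lref{l:boundong} to freeze the field $g_{\tilde\nu}$ up to a $C^1$-error of order $\epsilon$, build a forward-invariant region over a regular level set of $g_{\tilde\nu}$ in which $\omega_0$ grows linearly, feed mass into it via the separation property (\lref{l:support}), and let the linear growth of the first moment contradict the standing $BL$-bound. The paper's proof does exactly this, splitting into the case $\nu_{t_1}(A_+)>0$ (your easy case) and the case $\nu_{t_1}(A_+)=0$.

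However, there is a genuine gap precisely at the step you yourself flag as the main obstacle, and your proposed resolution does not work. For a support point sitting over $A$ with $\omega_0<0$, the transverse drift $\dot{\bar\omega}=\omega_0\nabla g$ pushes the particle \emph{downhill}, out of the superlevel set, while $\omega_0$ climbs; the race is between the climb time, of order $r_0/\ell$, and the time to be expelled from the region where $g>0$. Your fix is to ``take the enlargement of $A$ wide enough,'' but the enlargement is not a free parameter: enlarging $A=\{g_{\tilde\nu}\ge\ell\}$ to $\{g_{\tilde\nu}\ge\ell'\}$ buys a collar of width at most about $(\ell-\ell')/\|\nabla g_{\tilde\nu}\|_\infty$, while the transverse displacement accumulated during the climb is of order $r_0^2\|\nabla g_{\tilde\nu}\|_\infty/\ell'$. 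Compatibility of these two would force $\ell\gtrsim r_0\|\nabla g_{\tilde\nu}\|_\infty$, which fails whenever $g_{\tilde\nu}$ has a low, sharp peak ($\sup g_{\tilde\nu}$ small relative to its gradient), since $\ell$ is capped by $\sup g_{\tilde\nu}$. Crucially, the expelling drift is a fixed $O(1)$ quantity, not controlled by $\epsilon$, so shrinking $\epsilon$ does not help your version of the argument. The paper closes this gap differently: it selects the support point (guaranteed by the separation property) above a \emph{critical point} $\bar\omega^*$ of $g_{\tilde\nu}$ inside the region, where $\nabla g_{\tilde\nu}(\bar\omega^*)=0$. There the expelling drift is bounded by $|\omega_0|\bigl(\epsilon+L\,q(t)\bigr)$ with $q(t)=\|\bar\omega(t)-\bar\omega^*\|$, and a Gr\"onwall estimate (\lref{l:badcase}) shows that, for $\epsilon$ small depending on the finite time horizon $\sim(M+1)/\eta$, the particle reaches $\omega_0>0$ before escaping; continuity of the flow then upgrades the single trajectory to positive mass. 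Without this critical-point selection (or some substitute making the transverse drift $o(1)$ or $O(\epsilon)$ during the climb), your proof of the lemma is incomplete.
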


    {\begin{proof}\textbf{of \lref{l:support}}
      This result corresponds to \cite[Lemma C.13]{ChizatBach18} about the stability of the separation property \aref{a:mf} c) under pushforwards of the initial condition $\nu_0$ under the integrated gradient flow map $X_t$ defined as
      \begin{equ}\label{e:Xt}
        \partial_t X(t,u) = v_t(X(t,u))\qquad \text{with} \qquad X(0,\cdot) = \mathrm{id}
      \end{equ}
      where $\mathrm{id}~:~\Omega \to \Omega$ denotes the identity and $v_t$ is the vector field of the transport partial differential equation \eref{e:mfpde}.
      We note that in its original form \cite{ChizatBach18} this result does \emph{not} leverage the gradient flow structure of the dynamics, but only the \emph{continuity} of the map $X_t$, proven in \cite[Lemma B.4]{ChizatBach18} under assumptions \aref{a:mf} a) or, alternatively, \aref{a:mf2}. It is therefore sufficient for our purposes to establish such continuity property for the map $X_t$ in the present setting, \ie when $v_t$ is the vector field of the transport equation \eref{e:mfpde}. The continuity of the above map results immediately  from the one-sided Lipschitz property from \aref{a:mf} a) enjoyed by $v_t$ on the sets $Q_r = [-r,r]\times \Theta$ uniformly on compact time intervals, as mentioned in \cite[Proof of Lemma B.4]{ChizatBach18}. This regularity is in turn guaranteed by the Lipschitz continuity and Lipschitz smoothness of $\psi$ from \aref{a:mf} and bundedness of $r$.
    \end{proof}}

    \begin{proof}\textbf{of \lref{l:spurious}}
      %Since $V_{\tilde \nu} \neq V^*$, by \tref{t:mf} there must exists a subset of  $\Theta$ where $\tilde \nu$ loses support and such that $\dtp{\nabla \psi, \delta( \tilde \nu)} \neq 0$.  Indeed, reasoning by contradiction, if $\dtp{\nabla \psi,  \delta(\tilde \nu)} = 0$ a.e. then following the proof of \tref{t:mf} from after \eref{e:support2} we have that $V_{\tilde \nu} = V^*$.
      By \lref{p:mf}, the key quantity
      \begin{equ}\label{e:g}
      g_{\tilde \nu}(\bar \omega) := \dtp{\partial_{\omega_0}\psi(\cdot; \omega), \delta({\tilde \nu})} = \dtp{\psi(\cdot; (1,\bar \omega)), \delta({\tilde \nu})} = \dtp{\phi(\cdot; \bar \omega), \delta({\tilde \nu})}
      \end{equ}
      cannot vanish a.e. on $\Theta$. Then, by \aref{a:mf} there exists a nonzero regular value $-\eta$ of $g_{\tilde \nu}(\bar \omega)$, which without loss of generality we assume to be negative, so that $\eta >0$ (else invert the signs of $\omega_0$ in the rest of the proof). %without loss of generality  $\max g_\nu(\bar \omega) = g_\nu(\bar \omega^*) = \eta> 0$.
      To conclude the proof we introduce the (nonempty) sublevel set $\mathcal A := \{(\omega_0,\bar \omega) \in \Omega~:~g_{\tilde \nu}(\bar \omega) < -\eta \}$ and define
      \begin{equ}\label{e:A}
      A_+ = \{(\omega_0,\bar \omega) \in \mathcal A~:~ \omega_0 >0\}\,.
      \end{equ}
        Further denoting by $\bar A \subseteq \Theta$ the projection of $\mathcal A$ onto $\Theta$, we note that the level set $\partial \bar A$ is an orientable manifold of dimension $m-2$ and is by definition orthogonal to the gradient field of $g_{\tilde\nu}(\bar \omega)$.
      %We assume that $\partial \bar A$ is compact (else we have to choose $\eta$ as a regular value of $g$ on the extended $\Theta$), and b
      By continuity of $\nabla g_{\tilde \nu}(\bar \omega)$ when $\bar A$ is compact (which is \eg automatic when $\Theta$ is bounded as in \aref{a:mf2}) there exists
      \begin{equ}
       \beta := \min_{\bar \omega \in \partial \bar A} n_{\bar \omega} \cdot \nabla_{\bar \omega} g_{\tilde \nu}(\bar \omega)  > 0 \, ,
     \end{equ} % note{Just signaling because of the mismatch with Chizat\&Bach: I think they got a minor sign wrong in their proof (they want $\beta$ to be negative), since the gradient flow must have a minus sign in front, so that in order for the set $A_+$ to be invariant we need the gradient to point outwards.}
      where $n_{\bar \omega}$ denotes the normal unit vector to $\partial \bar A$ in the outward direction.\footnote{when $\bar A$ is not compact we must choose $\eta$ so that it is also a regular value of the function on $\{\bar \omega \in \Rr^{m-1}~:~\|\bar \omega\|_2=1\}$ to which $g$ converges as $\bar \omega$ goes to infinity.}

            We now choose $\epsilon$ small enough that assuming
            \begin{equ}\label{e:beh}
              \|h_{\tilde \nu}^1-h_{\nu_{t}}^1\|_{BL}< \epsilon \qquad \text{for all } t> t_1\,,
              \end{equ}
               leads to a contradiction. More specifically, by \lref{l:boundong} we choose $\epsilon(\alpha, \eta, \beta)$ small enough so that for all $\nu_{t}$ such that \eref{e:beh} holds we have $g_{\nu_{t}}(\bar \omega) < -\eta / 2$ on $\bar A$ and $n_{\bar \omega} \cdot\nabla_{\bar \omega} g_{\nu_{t}} > \beta/2$ on $\partial \bar A$.
            Then, the two inequalities above combined with $\partial_{\omega_0} \psi(\omega_0,\bar \omega) = \psi(1,\bar \omega)$ imply that the set $A_+$ defined above is forward invariant and therefore hat $\partial_t \nu_t(A_+) \geq 0$ as long as \eref{e:beh} holds.  Furthermore, by similar arguments we notice that no trajectories enter the set $\mathcal A \setminus A_+$ after $t_1$.

      We now distinguish two cases: either (i) $\nu_{t_1}(A_+)> 0 $ or (ii) $\nu_{t_1}(A_+) = 0 $. We treat these two cases separately by mimicking the arguments of the proofs of \cite[Lemmas C.4, C.18]{ChizatBach18}, respectively.
      \begin{enumerate}%[(i)]
        \item Assume that $\nu_{t_1}(A_+)> 0$. We note that, besides the forward invariance of the set $A_+$, under our assumptions the first component of the velocity field in $\mathcal A$ is lower bounded by $\eta/2$, so that $\omega_0(t) = \omega_0(0) + t \eta/2 $ is a subsolution to the $\omega_0$-component of the trajectory of a test mass with initial condition with $\omega(0) \in \mathcal A$, as long as $\bar \omega(t) \in \bar A$. In particular, by the forward invariance of $A_+$, if $\omega(0) \in A_+$ we have $\omega_0(t)>  t \eta/2 $.
        Therefore, assuming that $\text{supp}(\nu_t) \subset (-M,M)\times \Theta$ we must have for every $t>t_1$ %\jl{JL: ``$=\geq$'' below is probably wrong}
        \begin{equ}
          h_{\nu_t}^1(\bar A) \geq \eta/2 (t-t_1) \nu_{t_1}(A_+) + \min\{0, (t-t_1)\eta/2  - M\}\nu_{t_1}(\mathcal A \setminus A_+)\,.
        \end{equ}
        In particular, for $t> t_1 + 2 M/\eta$ the above quantity grows linearly in time, contradicting the assumption made above that $\|h_{\tilde \nu}^1-h_{\nu_{t_1}}^1\|_{BL}< \epsilon$ for all $t> t_1$.
        \item Assume now that $\nu_{t_1}(A_+)= 0$. We show that there exists $t_2> t_1$ such that $\nu_{t_2}(A_+)> 0$, so that the proof is completed by applying part i) above from $t_2$. Indeed let $\omega^* \in \text{supp}(\nu_{t_1})$ be such that $\bar \omega^*\in \bar A$ is a local minimum of $g_{\tilde \nu}$, \ie for which $\nabla g_{\tilde \nu} = 0$. Then choosing $\tilde \epsilon$ such that $\BB_{\tilde \epsilon}(\bar \omega^*) \subset \bar A$, and choosing $M$ large enough such that $\supp(\nu_{t_1}) \subseteq [-M,M]\times \Theta$, we know by \lref{l:badcase} that there exists $t_2 > t_1$ such that the image at $t_2$ of $\omega(t_1):=\omega^*$ under the flow of the \abbr{td} vector field is contained in $A_+$. By continuity of pushforward map of such vector field, this must also hold for a neighborhood of $\omega^*$, to which $\nu_{t_1}$ assigns positive mass.
      \end{enumerate}
    \end{proof}
    Defining by $\|\,\cdot\,\|_{C^1}$ the maximum of the supremum norm of a
function and the supremum norm of its gradient and recalling the structure of the temporal difference vector field:
    \begin{equ}\label{e:mftdvf}
      v_t(\omega) = -\dtp{\nabla \omega_0\phi(\bar \omega),  \delta(s,s',\nu_t) P(s,\d s')\mu(\d s)} = - \nabla (\omega_0 \,g_{\nu_t}(\bar \omega))
    \end{equ}
    where $\nu_t$ solves \eref{e:mfpde} we are ready to state the lemma needed to prove part ii).

    \begin{lemma}\label{l:badcase}
    Let  $\tilde \nu \in \mathcal M_+(\Omega)$ and $\bar \omega^*$ satisfy $|\nabla g_{\tilde \nu}(\bar \omega^*)| = 0$, $g_{\tilde \nu}(\bar \omega^*) < -\eta < 0$ for some $\eta > 0$. Then for every $M, \tilde \epsilon>0$ there exists $\epsilon, t_2>0$ such that if for all $t \in (0, t_2)$ we have $\|g_{\tilde \nu}- g_{\nu_t}\|_{C^1} < \epsilon$ and $\omega_0^* \in [-M,0]$,
      then at time $t_2$ the point $\omega^*$ is mapped, under the flow of the TD vector field \eref{e:mftdvf} to a subset of $\BB_{\tilde \epsilon}((1,\bar \omega^*))$.
    \end{lemma}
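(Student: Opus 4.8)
The plan is to treat \eref{e:mftdvf} as a \emph{non-autonomous} ODE for a single test particle $\omega(t)=(\omega_0(t),\bar\omega(t))$ driven by the \emph{known} time-dependent field $g_{\nu_t}$, which by hypothesis stays $C^1$-close to the fixed field $g_{\tilde\nu}$. Writing out the two components of $v_t=-\nabla(\omega_0\,g_{\nu_t}(\bar\omega))$ gives
\begin{equ}
  \dot\omega_0 = -g_{\nu_t}(\bar\omega)\,,\qquad \dot{\bar\omega} = -\omega_0\,\nabla_{\bar\omega} g_{\nu_t}(\bar\omega)\,.
\end{equ}
The guiding heuristic is that near the critical point $\bar\omega^*$ (where $\nabla g_{\tilde\nu}(\bar\omega^*)=0$ and $g_{\tilde\nu}(\bar\omega^*)<-\eta$) the $\bar\omega$-drift is tiny while $\dot\omega_0\approx\eta>0$, so $\omega_0$ climbs from $[-M,0]$ up through $1$ while $\bar\omega$ barely moves.

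First I would record the local estimates on a ball $\BB_{\tilde\epsilon}(\bar\omega^*)\subseteq\Theta$ (taken inside $\Theta$; at a boundary critical point the Neumann condition of Assumption~B lets one replace it by the corresponding half-ball). Since $D\phi$ is Lipschitz by \aref{a:mf} a), the map $\bar\omega\mapsto\nabla g_{\tilde\nu}(\bar\omega)=\dtp{\delta(\tilde\nu),D\phi(\cdot;\bar\omega)}$ is Lipschitz with some constant $L$; combining this with $\nabla g_{\tilde\nu}(\bar\omega^*)=0$ and the $C^1$-closeness hypothesis yields, for $\bar\omega\in\BB_{\tilde\epsilon}(\bar\omega^*)$,
\begin{equ}
  |\nabla g_{\nu_t}(\bar\omega)| \leq L\,|\bar\omega-\bar\omega^*| + \epsilon\,,\qquad -g_{\nu_t}(\bar\omega) \geq \eta - \tfrac{L}{2}|\bar\omega-\bar\omega^*|^2 - \epsilon\,,
\end{equ}
where the second bound uses the fundamental-theorem-of-calculus estimate $g_{\tilde\nu}(\bar\omega)\leq g_{\tilde\nu}(\bar\omega^*)+\tfrac{L}{2}|\bar\omega-\bar\omega^*|^2$. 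Hence, shrinking $\tilde\epsilon$ and $\epsilon$, we obtain $\dot\omega_0\geq\eta/2$ as long as $\bar\omega$ remains in the ball.

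The core of the argument is a bootstrap (continuation) argument controlling the coupling between the two components. Let $t^\dagger$ be the first exit time of $\bar\omega(t)$ from $\BB_{\tilde\epsilon}(\bar\omega^*)$. On $[0,t^\dagger)$ the particle satisfies $|\omega_0(t)|\leq M_0:=\max(M,1)$ (it increases monotonically from $[-M,0]$ and we stop it once it reaches $1$), so $r(t):=|\bar\omega(t)-\bar\omega^*|$ obeys $\dot r\leq M_0(Lr+\epsilon)$, whence Gr\"onwall gives $r(t)\leq\frac{\epsilon}{L}(e^{M_0 L t}-1)$. Simultaneously $\dot\omega_0\geq\eta/2$ forces $\omega_0$ to reach $1$ at some time $t_2\leq 2(1+M)/\eta=:T_{\max}$, a bound uniform over $\omega_0^*\in[-M,0]$. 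Now choose $\epsilon$ small enough (depending only on $M,\tilde\epsilon,L,\eta$) that $\frac{\epsilon}{L}(e^{M_0 L T_{\max}}-1)<\tilde\epsilon$; then $r(t)<\tilde\epsilon$ throughout $[0,T_{\max}]$, so $t^\dagger>t_2$ and the bootstrap closes, the local estimates remaining valid up to $t_2$. At $t=t_2$ we have $\omega_0(t_2)=1$ and $|\bar\omega(t_2)-\bar\omega^*|=r(t_2)<\tilde\epsilon$, i.e. $\omega(t_2)\in\BB_{\tilde\epsilon}((1,\bar\omega^*))$, as claimed; the statement for a full neighborhood of $\omega^*$ (rather than the single point, as needed in \lref{l:spurious}) then follows from continuous dependence of the flow on initial data.

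The main obstacle I anticipate is precisely closing this coupling: a priori the $\bar\omega$-drift is amplified by the factor $|\omega_0|$, which can be as large as $M$, so one must exclude that $\bar\omega$ escapes the neighborhood before $\omega_0$ completes its ascent. This is exactly what the \emph{uniform} time horizon $T_{\max}$ (bought by the quantitative lower bound $\dot\omega_0\geq\eta/2$) together with the smallness of $\epsilon$ resolves through Gr\"onwall. A secondary point to verify is that $\delta(\tilde\nu)$ is bounded, so that $L$ and the $C^1$-bounds on $g_{\tilde\nu}$ from \eref{e:g} are finite; this follows from the boundedness of $r$ and of $\phi$ (\aref{a:mf}) together with the finite first moment of the support.
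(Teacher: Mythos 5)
Your proposal is correct and takes essentially the same approach as the paper's proof: the same component ODEs, the same coupled differential inequalities ($\dot\omega_0\geq\eta/2$ near $\bar\omega^*$ and $\dot r\leq \max(M,1)(Lr+\epsilon)$ for $r=\|\bar\omega-\bar\omega^*\|$), the same stopping-time/bootstrap argument closed by Gr\"onwall over the uniform horizon $2(M+1)/\eta$, and the same final smallness requirement on $\epsilon$. The only differences are cosmetic: you exploit $\nabla g_{\tilde\nu}(\bar\omega^*)=0$ to get a quadratic bound on $g_{\tilde\nu}(\bar\omega)-g_{\tilde\nu}(\bar\omega^*)$ where the paper uses the plain Lipschitz bound $L\|\bar\omega-\bar\omega^*\|$, and you make explicit the passage to a neighborhood of $\omega^*$ via continuity of the flow, which the paper defers to the proof of Lemma~\ref{l:spurious}.
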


\def\ws{\bar \omega^*}
    {\begin{proof}\textbf{of \lref{l:badcase}}
      Defining $q(t):= \|\bar \omega(t) - \bar \omega^*\|$, we see that the trajectory $(\omega_0(t), \bar \omega(t))$ of a particle under \eref{e:mftdvf} with initial condition $\omega(0) = \ws$ must satisfy:
      \begin{equ}
          \frac \d {\d t} \omega_0(t) = - g_{\nu_t}(\bar \omega(t)) \geq - g_{\nnu}(\bar \omega^*) - |g_\nnu(\bar \omega(t))- g_\nnu(\ws)| - |g_{\nu_t}(\bar \omega(t))- g_\nnu(\bar \omega(t))|
        \end{equ}
        and
        \begin{equs}
          \frac \d {\d t} q(t) & \leq |\omega_0(t)| \|\nabla_{\bar \omega} g_{\nu_t}(\bar \omega(t))\| \\&\leq   |\omega_0(t)| \pq{\|\nabla_{\bar \omega} g_{\nnu}(\bar \omega^*)\| + \|\nabla_{\bar \omega} g_\nnu(\bar \omega(t))- \nabla_{\bar \omega} g_\nnu(\ws)\| + \|\nabla_{\bar \omega}g_{\nu_t}(\bar \omega(t))- \nabla_{\bar \omega}g_\nnu(\bar \omega(t))\|}
      \end{equs}
      for all $t \in [0,\bar \tau]$ where $\bar \tau := \inf \{t~:~ \omega_0(t) \not \in [-M,1]\}$.
      Furthermore, by the Lipschitz continuity of $g_\nnu(\,\cdot\,)$ and its Lipschitz smoothness there exists $L>0$ such that $\max\{|g_\nnu(\bar \omega)- g_\nnu(\ws)|, \|\nabla g_\nnu(\bar \omega)- \nabla g_\nnu(\ws)\|\} \leq L \|\bar \omega-\ws\|$.
      Since by assumption $\|g_{\tilde \nu}- g_{\nu_t}\|_{C^1} < \epsilon$, for $t \in [0,\bar \tau]$ we must have
      \begin{equ}
        \begin{cases}
          \frac \d {\d t} \omega_0(t)  \geq \eta - \epsilon - L q(t)\\
          \frac \d {\d t} q(t)  \leq  |\omega_0(t)|\pq{\epsilon + L q(t)}
        \end{cases}
        \end{equ}

      Upon possibly increasing the value of $L$ such that $\eta/{4L}<\tilde \epsilon$, we define $\tau_q = \inf\{t~:~q(t)>\eta/{4L}\}$ and we now proceed to show that one can choose $\epsilon\in (0,\eta/4)$
      % and $q(0)< \tilde \epsilon$
      such that $\tau_q > \bar \tau$, \ie that the forward dynamics of the point $\omega^* = (\omega_0^*, \bar \omega^*)$ will reach $A_+$ before $q(t)>\tilde \epsilon$. Recall that on $[0, \tau_q]$ and for $\epsilon\in (0,\eta/4)$ we have
      \begin{equ}
        \omega_0(t) \geq \omega_0(0) + \frac \eta 2 t
      \end{equ}
      and in particular $\omega_0(t)>\omega_0(0) \geq -M$. This implies that for all $t \in [0,\bar \tau \wedge \tau_q]$ we have $\frac \d {\d t} q(t)< q(0)+M \epsilon + LM q(t)$, so that by Gronwall we obtain $q(t) \leq  \epsilon M \exp\pq{LMt}$. Finally, setting $\tau_0 := 2(M+1)/\eta \geq - 2(\omega_0(0)-1)/\eta>\bar \tau$ we note that we can choose $\epsilon$ small enough such that $\tau_q>\tau_0> \bar \tau$, \ie by monotonicity of $q(t)$ under our bound, such that
    \begin{equ}
      q(\tau_0) \leq \epsilon M \exp\pq{2LM(M+1))/\eta} \leq \eta /{4L}\,.
    \end{equ}
    Choosing $\epsilon\in (0, \eta/4)$ such that the \abbr{rhs} inequality holds concludes the proof.
      % \begin{equ}
      %    q(t) \leq M \epsilon + \int_0^t \epsilon LM^2 \exp\pq{LM s}\d s \leq \epsilon M \exp\pq{LMt}
      % \end{equ}
    \end{proof}}

    \begin{lemma}\label{l:boundong} Recalling the definitions of $g_\nu$, $h_\nu^1$ from \eref{e:g}, \eref{e:h} respectively, for all $C_0 > 0$ there exists $\alpha>0$ and for all $\nu, \nu'$ satisfying $\|h_{\nu}^1\|_{BL}, \|h_{\nu'}^1\|_{BL} < C_0$, one has
      \begin{equ}
        \|g_{\nu} - g_{\nu'}\|_{C^1} \leq \alpha \|\phi\|_{C^1}^2 \|h_{\nu}^1 - h_{\nu'}^1\|_{BL}\,.
      \end{equ}
    \end{lemma}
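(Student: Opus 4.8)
The plan is to observe that the entire dependence of $g_\nu$ on $\nu$ is \emph{affine} and factors through the first moment $h_\nu^1$. Indeed, using $\psi(s;\omega)=\omega_0\phi(s;\bar\omega)$ together with the definition \eref{e:h}, one has $\int_\Omega\psi(s;\omega)\nu(\d\omega)=\int_\Theta\phi(s;\bar\omega)\,h_\nu^1(\d\bar\omega)=:V_\nu(s)$, while unfolding the inner product in \eref{e:g} against the measure $P(s,\d s')\mu(\d s)$ (as in \eref{e:mftdvf}) gives
\begin{equ}
 g_\nu(\bar\omega)=\int_{\SS\times\SS}\phi(s;\bar\omega)\bigl(r(s,s')+\gamma V_\nu(s')-V_\nu(s)\bigr)\,P(s,\d s')\mu(\d s)\,.
\end{equ}
Subtracting the same expression written for $\nu'$, the reward term cancels, so that $g_\nu-g_{\nu'}$ depends on the two measures only through the difference of value functions $V_\nu-V_{\nu'}=\int_\Theta\phi(\,\cdot\,;\bar\omega)\,(h_\nu^1-h_{\nu'}^1)(\d\bar\omega)$.

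The second step is a bounded-Lipschitz duality estimate. By \aref{a:mf} a) the map $\bar\omega\mapsto\phi(s;\bar\omega)$ is bounded with bounded (hence Lipschitz-controlled) derivative uniformly in $s$, so $\|\phi(s;\,\cdot\,)\|_{BL}\le\|\phi\|_{C^1}$ for every $s\in\SS$. Since $h_\nu^1-h_{\nu'}^1$ is a finite signed measure (here the hypothesis $\|h_\nu^1\|_{BL},\|h_{\nu'}^1\|_{BL}<C_0$ only serves to keep all the integrals finite), duality then yields the uniform bound $\|V_\nu-V_{\nu'}\|_\infty\le\|\phi\|_{C^1}\,\|h_\nu^1-h_{\nu'}^1\|_{BL}$.

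Finally I would plug this into the display above and into its $\bar\omega$-gradient. Because $P(s,\d s')\mu(\d s)$ is a probability measure and both $\|\phi\|_\infty$ and $\|\nabla_{\bar\omega}\phi\|_\infty$ are bounded by $\|\phi\|_{C^1}$, differentiating under the integral sign bounds both $\|g_\nu-g_{\nu'}\|_\infty$ and $\|\nabla(g_\nu-g_{\nu'})\|_\infty$ by $(1+\gamma)\|\phi\|_{C^1}\,\|V_\nu-V_{\nu'}\|_\infty$. Combining with the previous step gives $\|g_\nu-g_{\nu'}\|_{C^1}\le(1+\gamma)\|\phi\|_{C^1}^2\,\|h_\nu^1-h_{\nu'}^1\|_{BL}$, i.e. the claim with $\alpha=1+\gamma$.

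The estimate is essentially routine, and there is no serious obstacle: the only points requiring care are (i) justifying the interchange of gradient and integral, which is legitimate because $\phi$ is differentiable with $D\phi$ Lipschitz and $P\mu$ has finite mass, and (ii) the exact constant in the bounded-Lipschitz duality, which depends only on the normalization convention for $\|\,\cdot\,\|_{BL}$ and not on the structure of the bound. It is worth noting that, precisely because $h\mapsto g$ is affine, the constant $\alpha=1+\gamma$ is in fact global and independent of $C_0$; the role of the bound $C_0$ is merely to guarantee well-posedness of the objects involved.
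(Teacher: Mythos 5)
Your proof is correct and follows essentially the same route as the paper's: write $g_\nu(\bar\omega) = \dtp{\phi(\,\cdot\,;\bar\omega),\delta(\nu)}$, exploit linearity so that $g_\nu - g_{\nu'}$ depends only on $\delta(\nu)-\delta(\nu')$ (hence only on $V_\nu - V_{\nu'}$, the reward cancelling), and close with bounded-Lipschitz duality against $h_\nu^1 - h_{\nu'}^1$. The only difference is cosmetic: you unfold the affine dependence $\delta(\nu)-\delta(\nu') = \gamma P(V_\nu - V_{\nu'}) - (V_\nu - V_{\nu'})$ to obtain the explicit, $C_0$-independent constant $\alpha = 1+\gamma$, whereas the paper abstracts this step as a Lipschitz constant of $\dtp{\,\cdot\,,\delta}$ on the $C_0$-bounded set.
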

    \begin{proof}\textbf{of \lref{l:boundong}}
      Recognizing that $\psi(s;(1,\bar \omega)) = \phi(s; \bar \omega)$
      and letting $\alpha$ be the Lipschitz constant of $\dtp{\,\cdot\,,\delta}$ on the bounded set $\{\int \psi \nu (\d \omega) ~:~ \|h_{\nu}^1\| < C_0\}$
      we have
      \begin{equs}
        \|g_{\nu} - g_{\nu'}\|_{C^1} & = \|\dtp{\psi(s;(1, \,\cdot\,)), \delta(\nu)}_\FF - \dtp{\psi(s;(1, \,\cdot\,)), \delta(\nu')}_{\FF}\|_{C^1} \\& = \|\dtp{\phi(s;\,\cdot\,), \delta(\nu) - \delta(\nu')}_\FF\|_{C^1} \leq  \alpha \|\phi\|_{C^1} \| \delta(\nu) - \delta(\nu')\|_{\FF}\\
        & \leq \alpha\|\phi\|_{C^1} \sup_{f \in \mathcal F, \|f\|=1} \int{\dtp{f,\phi} \d(h_{\nu}^1 - h_{\nu'}^1)}(\bar \omega)\\
        & \leq  \alpha\|\phi\|_{C^1}^2 \|h_{\nu}^1 - h_{\nu'}^1\|_{BL}
      \end{equs}
      where we used that $\dtp{f,\phi}_\FF$ is $\|\,\phi\,\|_{C^1}$-Lipschitz and upper-bonded by $\|\phi\|_{C^1}$ when $\|f\|_\FF<1$.
    \end{proof}

\end{document}